\def\bbf{\boldsymbol{b}}
\def\sbf{\boldsymbol{s}}
\def\wbf{\boldsymbol{w}}
\def\xbf{\boldsymbol{x}}
\def\zbf{\boldsymbol{z}}
\def\0bf{\boldsymbol{0}}
\def\alphabf{\boldsymbol{\alpha}}
\def\etabf{\boldsymbol{\eta}}
\def\mubf{\boldsymbol{\mu}}
\def\omegabf{\boldsymbol{\omega}}
\def\Upsilonbf{\boldsymbol{\Upsilon}}
\def\xibf{\boldsymbol{\xi}}
\def\zetabf{\boldsymbol{\zeta}}
\def\T{{\mathrm{\scriptscriptstyle T}}}
\newcommand{\tabincell}[2]{\begin{tabular}{@{}#1@{}}#2\end{tabular}}
\theoremstyle{plain}
\newtheorem{theorem}{Theorem}[section]
\newtheorem{corollary}{Corollary}[section]
\newtheorem{assumption}{Assumption}[]
\newtheorem{lemma}{Lemma}[]
\newtheorem{proposition}{Proposition}[]
\title{Implicit Generative Prior for Bayesian Neural Networks}
\author[1]{Yijia Liu}
\author[2]{Xiao Wang}
\affil[1,2]{Department of Statistics, Purdue University}
\date{}
\begin{document}

\maketitle
\begin{abstract}
    Predictive uncertainty quantification is crucial for reliable decision-making in various applied domains. Bayesian neural networks offer a powerful framework for this task. However, defining meaningful priors and ensuring computational efficiency remain significant challenges, especially for complex real-world applications. This paper addresses these challenges by proposing a novel neural adaptive empirical Bayes (NA-EB) framework. NA-EB leverages a class of implicit generative priors derived from low-dimensional distributions. This allows for efficient handling of complex data structures and effective capture of underlying relationships in real-world datasets. The proposed NA-EB framework combines variational inference with a gradient ascent algorithm. This enables simultaneous hyperparameter selection and approximation of the posterior distribution, leading to improved computational efficiency. We establish the theoretical foundation of the framework through posterior and classification consistency.
    We demonstrate the practical applications of our framework through extensive evaluations on a variety of tasks, including the two-spiral problem, regression, 10 UCI datasets, and image classification tasks on both MNIST and CIFAR-10 datasets.
    The results of our experiments highlight the superiority of our proposed framework over existing methods, such as sparse variational Bayesian and generative models, in terms of prediction accuracy and uncertainty quantification. 
    
    {\it Key words}: Deep neural networks; empirical Bayes; latent variable model; stochastic gradient method; variational inference
\end{abstract}

\newpage

\section{Introduction}

Despite the remarkable accomplishments of deep neural networks (DNNs) in the field of artificial intelligence, they encounter numerous challenges. 
%
%Plain feedforward neural networks, in particular, are prone to overfitting. 
%
When utilized in the context of supervised learning, DNN models frequently struggle to accurately gauge uncertainty within training data and only provide a point estimate regarding class or prediction.
The consequences of this limitation are profound, particularly when these models are entrusted with life-or-death decisions. In medical domains, for instance, experts may find it challenging to determine whether they should rely on automated diagnostic systems, and passengers in self-driving vehicles may not receive alerts to take control when the vehicle encounters situations it does not comprehend.

To illustrate the importance of predictive uncertainty, we present two real-world classification examples. First, in Figure \ref{fig:cifar10}, we compare the predicted probabilities of the ResNet-18 \citep{he2016deep} classifier with our 95\% Bayesian credible intervals on four test images from the CIFAR-10 dataset (\url{https://www.kaggle.com/c/cifar-10/}). This dataset comprises 60,000 32$\times$32 color images across 10 classes. We observe that the standard DNN method 
%
%tends to produce overconfident predictions, 
%
often assigns high probabilities to incorrect classes. In contrast, our proposed approach, \textit{Neural Adaptive Empirical Bayes} (NA-EB), offers prediction intervals for the likelihood of each class label. 
% by generating multiple predictions from the posterior predictive distribution. 
The widths of the prediction intervals reflect how sure or unsure NA-EB is about the correctness of its predictions while the point estimate of the likelihood generated by the standard DNN method does not convey uncertainty information. In the case of the deer image, our method produces similarly wide and overlapping prediction intervals for the two potential classes, respectively, implying the uncertainty in its predictions. In contrast, the standard DNN method assigns a high probability to the wrong class without accounting for uncertainty.
Furthermore, in the case of the frog image, our method yields a relatively narrow prediction interval for the correct class, whereas the DNN method assigns a high probability estimate to the wrong class. This indicates that NA-EB not only quantifies predictive uncertainty but also enhances classification accuracy by implicitly specifying a correct prior for classifier weights.
Furthermore, we present comparisons in a scenario of weaker data signals in Figure \ref{fig:noisy_mnist}. Specifically, we employ a fully connected feedforward neural network as the base classifier on an artificially noisy dataset \citep{basu2017learning} created by introducing motion blur into the MNIST dataset (\url{http://yann.lecun.com/exdb/mnist/}). NA-EB generates narrow prediction intervals with high probability values for certain digits but wide overlapping intervals for others, indicating uncertainty in the model predictions. In contrast,  for digits such as "two" and "four", the DNN method assigns high probabilities to incorrect classes without providing any information about the uncertainty regarding its belief. 
In summary, NA-EB offers a robust classifier capable of expressing its uncertainty through a full posterior distribution rather than a single point estimate. This suggests potential applications of NA-EB in the fields of medical diagnostics, such as the automated classification of diabetic retinopathy from retinal images. Uncertainty estimation is particularly critical in the medical domain, ensuring confident model predictions for screening automation while flagging uncertain cases for manual review by a medical expert. Bayesian deep learning has already demonstrated its significance in medical diagnostics \citep{worrall2016automated,leibig2017leveraging,kamnitsas2017efficient,ching2018opportunities}, underscoring the potential relevance of NA-EB in such applications due to its enhanced performance in uncertainty quantification and prediction accuracy.

\begin{figure}[t!]
    \centering
    %\vspace{-1.5cm}
\includegraphics[scale=0.40]{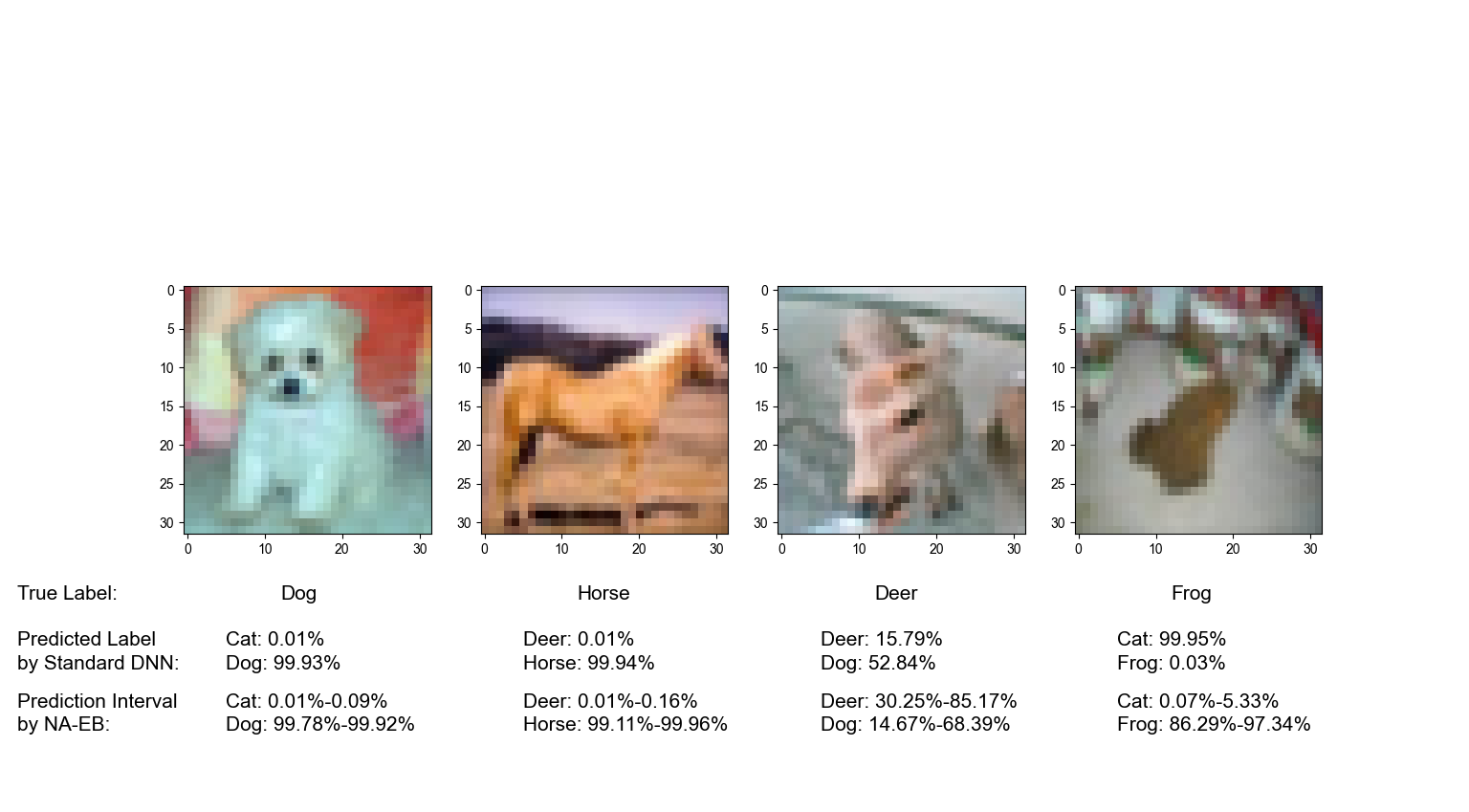}
    %\vspace{-0.6cm}
    \caption{A comparison of classification results between the standard DNN method and NA-EB on four test images from the CIFAR-10 dataset.}
    \label{fig:cifar10}
\end{figure}

\begin{figure}[t!]
    \centering
    %\vspace{-1cm}
    \includegraphics[scale=0.40]{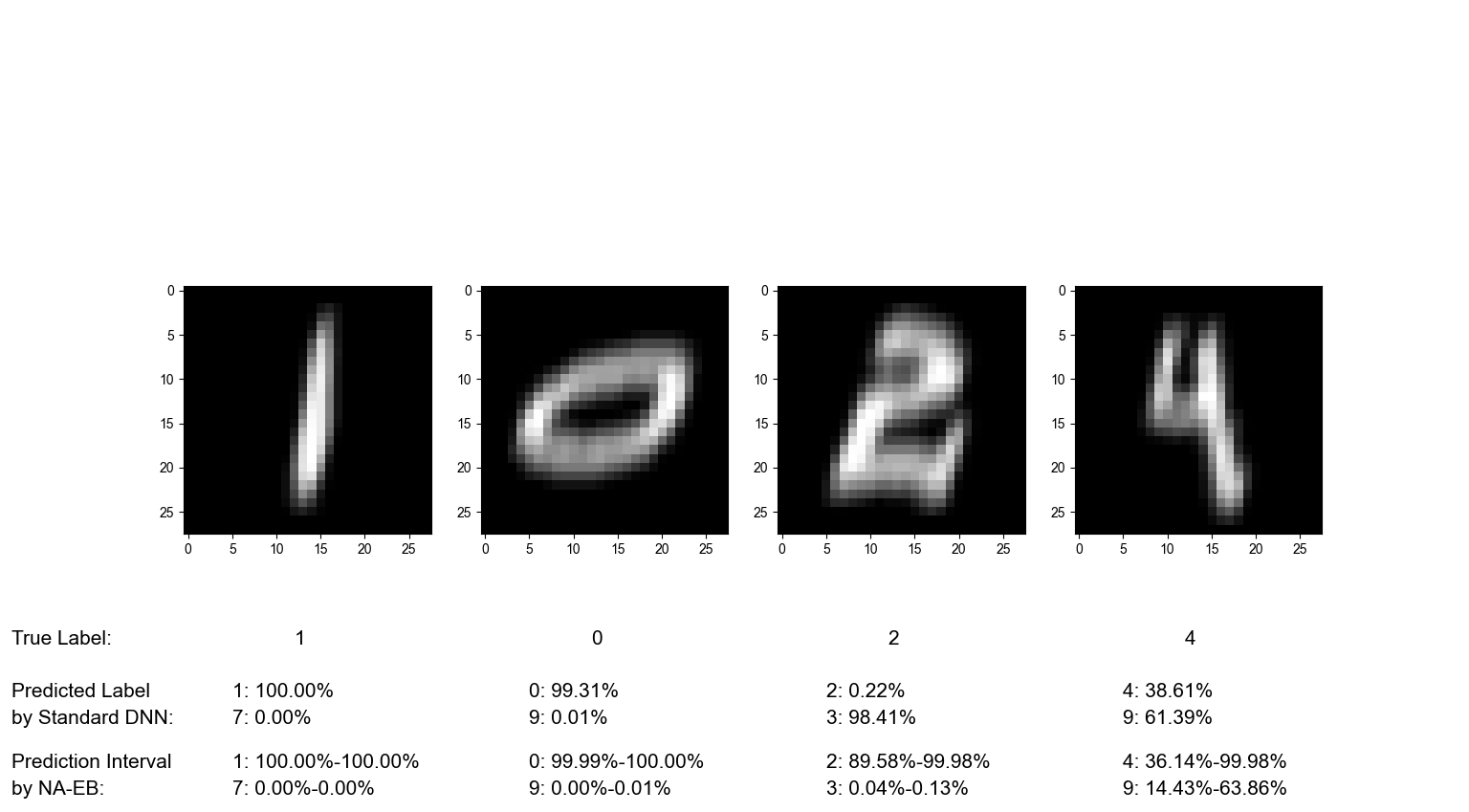}
    %\vspace{-0.6cm}
    \caption{A comparison of classification results between the standard DNN method and NA-EB on four test images from the noisy MNIST with motion blur dataset.}
    \label{fig:noisy_mnist}
\end{figure}

Under the Bayesian framework, given the prior of the weights of the classifier or the regression model depending on the specific supervised learning task, the $95\%$ Bayesian credible interval can be easily constructed based on the shortest interval that contains $95\%$ of the predictive probability. However, when prior knowledge of the weights is not available, choosing a priori is a challenging task. Furthermore, when the weights are believed to reside in a low-dimensional manifold in higher-dimensional space, specifying the prior distribution of the weights is not straightforward since it may not have a tractable formula. 
%
%For example, DNNs have become an indispensable tool and have achieved extremely high predictive accuracy for a wide range of applications such as image classification, speech recognition, or natural language processing \citep{goodfellow2016deep}. 
%
For example, if we model the MNIST data, which contain 60,000 $28\times 28$ handwritten digit images, using a simple 2-hidden layer neural network with 784 input nodes, 800 nodes in each of two hidden layers and 10 nodes in the output layers, the model in total contains about 1.3 million parameters. It is not trivial to specify a prior distribution of 1.3 million dimensions. It is also reasonable to assume that these 1.3 million parameters reside in a low-dimensional manifold. 

%%%%%%%%%%%%%%%    
% Under the Bayesian framework, given the prior $\pi(\wbf)$ on $\wbf$, the posterior predictive distribution for $y_{n+1}$ at $\xbf_{n+1}$ is
% \begin{equation}\label{eq:bay_pred}
% p(y_{n+1}\mid\xbf_{n+1}, \mathcal{D}_n) = \int p(y_{n+1}\mid\xbf_{n+1}; \wbf)p(\wbf\mid\mathcal{D}_n) d\wbf,
% \end{equation}
% where $p(\wbf\mid\mathcal{D}_n)$ is the Bayesian posterior. The $100(1-\alpha)\%$ Bayesian credible interval can be easily constructed based on the shortest interval that contains $(1-\alpha)$ of the predictive probability. However, when prior knowledge of $\wbf$ is not available, choosing a priori is a challenging task. Furthermore, when $\wbf$ is believed to reside in a low-dimensional manifold in higher-dimensional space, specifying $\pi(\wbf)$ is not straightforward since $\pi(\wbf)$ may not have a tractable formula. For example, DNNs have become an indispensable tool and have achieved extremely high predictive accuracy for a wide range of applications such as image classification, speech recognition, or natural language processing \citep{goodfellow2016deep}. If we model the MNIST data (\url{http://yann.lecun.com/exdb/mnist/}), which contain 60,000 $28\times 28$ handwritten digit images, using a simple 2-hidden layer neural network with the input dimension 784, 800 nodes in each of two hidden layers and 10 nodes in the output layers, the model in total contains about 1.3 million parameters. It is not trivial to specify a prior distribution of 1.3 million dimensions. It is also reasonable to assume that these 1.3 million parameters reside in a low-dimensional manifold. 

Bayesian DNNs (BNNs) have been extensively studied by \citep{neal1992bayesian, mackay1995probable, neal96, bishop1997bayesian, lampinen2001bayesian, bernardo2009bayesian}. More recent developments with advanced algorithms can be found in \citep{sun2017learning, mullachery2018bayesian, hubin2018deep, javid2020compromise, wilson2020bayesian, izmailov2021bayesian}. BNNs promise improved predictions and yield richer representations from cheap model averaging. 
%
%The key step of BNNs is to compute the posterior distribution of the weights. 
%
The most widely used priors for BNNs are isotropic Gaussian \citep{neal96, hern15, Louizos17, Dusenberry20, immer2021scalable}.  Gaussian priors have recently been shown to cause a cold posterior effect in BNNs. That is, the tempered posterior performs better than the true Bayesian posterior, suggesting prior misspecification \citep{wenzel2020good}. Another well-studied prior is placing sparsity-induced priors on network weights \citep{blundell2015weight, molchanov2017variational, ghosh2019model, bai2020efficient}, which can be used to aid compression of network weights. But sparsity-induced priors do not benefit prediction. In addition, \citet{quinonero2005evaluating} demonstrated that many priors of convenience can lead to unreasonable predictive uncertainties. Although there are many methods available for Bayesian computing, such as MCMC \citep{neal96, graves2011practical}, Langevin dynamics \citep{welling11}, and Hamiltonian methods \citep{spring16}, computing the posterior distribution is generally intractable, making it difficult to make inferences about the predictive distribution.

In this article, we propose a new class prior distributions called {\it implicit generative prior} to facilitate Bayesian modeling and inference. This idea is motivated by deep generative models in the machine learning literature \citep{goodfellow2014generative, kingma2013auto}. {\it Prescribed explicit priors} are those that provide an explicit distribution of the classifier's or the regression model's weights. Instead, the implicit generative prior represents the prior distribution through a function transformation of a known distribution to define a stochastic procedure that directly generates the parameter. We use a low-dimensional latent variable with a known prior distribution and transform it using a deterministic function with the hyperparameters. The deterministic transformation function is specified by a DNN, which takes the latent variable as the input and produces the weights as the output. This formulation is commonly seen in deep learning models such as the generative adversarial network (GAN, \citealp{goodfellow2014generative}) and the variational autoencoder (VAE, \citealp{kingma2013auto}).

In the context of Bayesian inference, we encounter two immediate challenges. First, the hyperparameter, the weights of the deterministic transformation function specified by a DNN, is high dimensional, making its selection a difficult task. Second, Bayesian computation becomes
computationally intractable in many cases, particularly when dealing with DNNs of any practical size. To address these challenges, we propose a variational approach called Neural
Adaptive Empirical Bayes (NA-EB). The key idea behind NA-EB is to leverage a variational posterior distribution with unknown hyperparameters, minimizing the Kullback--Leibler (KL) divergence from the true posterior distribution while simultaneously estimating the hyperparameters and approximating the posterior distribution. The notion of estimating the hyperparameters in the prior distribution from the data stems from empirical Bayes, a concept introduced by Herbert Robbins in the 1950s \citep{robbins1992empirical}. Empirical Bayes methods have since been extensively studied and applied in various domains \citep{efron1973stein, efron2001empirical, carlin2008bayesian, yves11, efron2012large}. Instead, minimizing the KL divergence between the variational distribution and the true posterior distribution is derived from variational inference \citep{hinton1993keeping, blei2007correlated, blundell2015weight, zhang2018noisy}. 
For training, we employ the gradient ascent algorithm along with Monte Carlo estimates to estimate both the variational posterior distribution and the unknown hyperparameters in the prior. This combination enables the NA-EB framework to be well suited for complex models and large-scale datasets. NA-EB shares some similarity with \cite{atanov2018deep} that we both use a variational approximation for the true posterior distribution of the weights and propose an implicit generative prior for the weights. On the other hand, NA-EB is distinguished from \cite{atanov2018deep} in terms of the objective function, the training procedure for the parameters of the implicit prior, the assumption about the parametric form of the implicit prior, and the establishment of theoretical guarantees.

%of which the main idea is to use a variational posterior distribution with unknown hyperparameters that minimizes the Kullback-Leibler divergence with the true posterior distribution, to estimate the hyperparameter and approximate the posterior distribution simultaneously. The idea of estimating the hyperparameter in the prior distribution from the data is based on empirical Bayes, which originates from Herbert Robbins dating back to 1950s \citep{robbins1992empirical}. The literature on empirical Bayes methods and their applications is abundant \citep{efron1973stein, efron2001empirical, carlin2008bayesian, yves11, efron2012large}. In terms of training, we implement the gradient descent algorithm \citep{ranganath2014black, blundell2015weight} with Monte Carlo estimates to estimate both the variational posterior distribution and the unknown parameters in the prior. This makes the framework suitable for complex models and big data.  We called our framework NA-EB, short for neural adaptive empirical Bayes.

Our contributions can be summarized as follows. 
\begin{itemize}
    \item We propose a novel approach to defining the prior distribution through a DNN transformation of a known low-dimensional distribution. This method provides a highly flexible prior that can capture complex high-dimensional distributions and incorporate intrinsic low-dimensional structure with ease. This approach represents a significant advancement in Bayesian modeling and inference, as it addresses the challenge of defining meaningful priors for neural networks, which has been a bottleneck in practical applications.
    \item Theoretically, we perform an asymptotic analysis in terms of posterior consistency \citep{bhattacharya2020variational, Bhattacharya2021StatisticalFO}, which quantifies the quality of the resulting posterior as data are collected indefinitely. In contrast to this previous work, we establish the uniform posterior consistency for a class of nonlinear transformations when defining the prior. Furthermore, we establish the classification accuracy of variational Bayes DNNs. Therefore, our framework is guaranteed to provide a numerically stable and theoretically consistent solution. 
    \item Empirically, we evaluate the finite sample performance through simulated data analysis and real data applications, including the classical two-spiral problem, synthetic regression data, 10 UCI datasets, MNIST dataset, noisy MNIST dataset, and CIFAR-10 dataset. We extensively compare our method with other methods including SGD \citep{rumelhart1986learning}, variational Bayesian \citep{blundell2015weight}, probabilistic back-propagation \citep{hernandez2015probabilistic}, dropout \citep{gal2016dropout}, ensembles Bayesian \citep{lakshminarayanan2017simple}, sparse variational Bayesian \citep{bai2020efficient}, variational Bayesian with collapsed bound \citep{tomczak2021collapsed}, conditional generative models \citep{zhou2022deep}, and diffusion models \citep{han2022card}.  As a result, the proposed method outperforms these existing methods on predictive accuracy and uncertainty quantification in both regression and classification tasks as shown in Section \ref{sec:numerical}. The source code implementations are available in the Appendix.
\end{itemize}

This paper is organized as follows. In Section \ref{sec:framework}, we give an overview of the proposed framework. In Section \ref{sec:algorithm}, we present the computational algorithm for NA-EB. In Section \ref{sec:theory}, we establish that our algorithm is theoretically guaranteed in terms of variational posterior consistency and classification accuracy. In Section \ref{sec:numerical}, we illustrate the performance of our model through simulation studies and real-life data analysis. We conclude our paper in Section \ref{sec:conclusion} with discussions. Further details about our algorithm, instructions for utilizing the code files, and the assumptions and the outline of the proofs of theorems and corollaries are given in the Appendix.

\section{Implicit Generative Prior}\label{sec:framework}

Let $\mathcal{D}_n = \{(\xbf_i, y_i)\}_{i=1}^n$ represent the training data with sample size $n$, where $y_i \in \mathcal{Y}$ is the response of interest and $\xbf_i\in \mathcal{X} \subseteq \mathbb{R}^p$ is the covariate. As in the classical setting of supervised learning, $(\xbf_i, y_i)$, for $i=1, \ldots, n$, are assumed to be i.i.d. from an unknown joint distribution $P_{\xbf, y}$ on $\mathcal{X}\times\mathcal{Y}$. Consider a probabilistic model $p(y\mid\xbf; \wbf)$, where $\wbf \in{\mathscr W}\subseteq \mathbb R^D$ denotes the vector of unknown parameter. For example, for regression, $p(y\mid\xbf; \wbf)$ can be a Gaussian distribution with an unknown mean that is modeled by a DNN with weights $\wbf$. For classification, $p(y\mid\xbf; \wbf)$ is the categorical distribution in which the success probabilities are modeled by an unknown multivariate function with parameter $\wbf$.
Let $L(\mathcal{D}_n; \wbf)$ be the joint conditional distribution of $y_i$ given $\xbf_i$, where $\wbf$ is the unknown parameter.  Bayesian inference will introduce a prior $\pi(\wbf)$ on $\wbf$, and compute the posterior distribution of the weights given the training data such as
$p(\wbf \mid \mathcal{D}_n)\propto \pi(\wbf)L(\mathcal{D}_n; {\wbf})$. The predictive distribution of a future $y^*$ given a test data $\xbf^*$ can be obtained by 
\begin{equation}\label{eq:bay_pred}
    p(y^*\mid\xbf^*, \mathcal{D}_n) = \int p(y^*\mid\xbf^*; \wbf)p(\wbf\mid\mathcal{D}_n) d\wbf.
\end{equation}
This is equivalent to an ensemble method, which uses an infinite number of models for prediction where the parameters of each model are sampled from the posterior distribution. %Unfortunately, two challenges arise immediately. Since $\wbf$ is often high dimensional, choosing a proper prior for $\wbf$ is difficult, and it is helpful to consider an alternative representation of $\pi(\wbf)$. Bayesian computation becomes computationally intractable in many cases, particularly when dealing with DNNs of practical size.

We specify $\pi(\wbf)$ through a highly
flexible \emph{implicit} model defined by a two-step procedure, where firstly a latent variable $\zbf\in\mathscr{Z}\subseteq\mathbb{R}^{r}$ with $r\le  D$
is sampled from a fixed distribution $\pi_0(\zbf)$, and then $\zbf$ is mapped to $\wbf=G_{\etabf}(\zbf)$ via a \emph{deterministic} transformation $G_{\etabf}: \mathbb R^r \rightarrow \mathbb R^D$ with hyperparameter $\etabf\in\mathbb R^{n_\eta}$. 
This defines a class of priors using the push-forward measure,
\begin{equation*}
    \Pi = \left\{ G_{\etabf}\#\pi_0: G_{\etabf} \in \mathcal{G}  \right\},
\end{equation*}
where $\mathcal{G}$ is the function space for the transformation function.
When $G_{\etabf}$ is invertible and $r=D$, we recover the familiar rule of transformation of probability distributions. The prior distribution $\pi(\wbf)$ for this situation has an explicit formula by the change-of-variable technique.
We are interested in developing more general and flexible cases where $G_{\etabf}$ is a nonlinear function with $r\le D$. Under this circumstance, the explicit density of $\wbf$ is intractable, since the set $\{G_{\etabf}(\zbf) \le \wbf\}$ cannot be determined.  The advantages of the above formulation are the following: (a) $G_{\etabf}(\zbf)$ with $\zbf\sim \pi_0$ can represent a wide range of distributions. In fact, for any continuous random vector $\wbf$ in a low-dimensional manifold, there always exists a $G_{\etabf}$ such that $G_{\etabf}(\zbf)$ has the same distribution as $\wbf$ \citep{chen2022inferential}. (b) For many problems, it is reasonable to believe that the high-dimensional $\wbf$ lies in a low-dimensional manifold and the dimension $r$ can be much smaller than $D$. (c) The mapping $G_{\etabf}$ is unconstrained, considerably simplifying the functional optimization problem. 

Without loss of generality, we choose a normal distribution on each entry for $\zbf=(z_1, \ldots, z_r)^{\T}$ of the form
\begin{equation}\label{eq:prior}
    \pi_0(\zbf)=\prod_{j=1}^{r}\frac{1}{\sqrt{2\pi\zeta_j^2}}\exp\left\{-\frac{1}{2\zeta_j^2}(z_j-\mu_j)^2\right\},
\end{equation}
where $\zbf=(z_1,\ldots,z_{r})^{\T}$ and each $z_j$ requires a separate mean $\mu_j$ and variance $\zeta_j$. We will provide conditions on $\mubf = (\mu_1,\ldots,\mu_{r})^{\T}$ and $\zetabf=(\zeta_1,\ldots,\zeta_{r})^{\T}$ in Section \ref{sec:theory} to ensure the posterior consistency of both
Bayesian and variational approaches.
%Therefore, the prior for $\wbf$ is defined by the push-forward measure $\pi = G_{\etabf}\#\pi_0$. %Let $\mubf = (\mu_1,\ldots,\mu_{r})^{\T}$ and $\zetabf=(\zeta_1,\ldots,\zeta_{r})^{\T}$.  

If $\etabf$ is known,  the posterior distribution of $\zbf$ given $\mathcal{D}_n$ is
\begin{equation}\label{eq:posteriorz}
    p_{\etabf}(\zbf\mid\mathcal{D}_n)=\frac{L(\mathcal{D}_n; {G_{\etabf}(\zbf)}){\pi_0}(\zbf)}{\int  L(\mathcal{D}_n; {G_{\etabf}(\zbf)}){\pi_0}(\zbf)d\zbf}.
\end{equation} 
Selecting the hyperparameter $\etabf$, in particular, the high-dimensional hyperparameter, is very difficult. Instead, the empirical Bayes method will estimate $\etabf$ from the data based on the marginal likelihood, and obtain
\begin{equation}\label{eq:marg_opt}
\hat\etabf = \arg\max_{\etabf} L_{\tt marg}({\etabf}; \mathcal{D}_n),
\end{equation}
where the marginal likelihood is given by
\begin{equation}\label{eq:marg_like}
L_{\tt marg}({\etabf}; \mathcal{D}_n) = \int_{\mathscr{W}} L(\mathcal{D}_n; {\wbf}) \pi(\wbf) d\wbf = \int_{\mathscr{Z}}  L(\mathcal{D}_n; {G_{\etabf}(\zbf)})\pi_0(\zbf) d\zbf.
\end{equation}
However, the main difficulty of the optimization problem \eqref{eq:marg_opt} is evaluating \eqref{eq:marg_like} and its gradient with respect to $\etabf$. The exact evaluation of $L_{\tt marg}({\etabf}; \mathcal{D}_n)$ is intractable since, in general, the integral is high-dimensional and does not have a closed form. In the next section, we introduce a novel algorithm based on the variational method to efficiently estimate ${\etabf}$ and approximate the posterior distribution.

\section{The Algorithm}\label{sec:algorithm}

The conventional MCMC implementation suffers from high computational cost, which limits its use for large scale problems. To avoid computational issues, we adopt the variational approach to estimate  $\etabf$ and derive the posterior distribution. The key difference between the current setting and the regular variational inference is that there involves
an additional unknown hyperparameter $\etabf$ in the prior. Variational inference starts from a variational family, which is used to approximate the true posterior distribution $p_{\etabf}(\zbf\mid\mathcal{D}_n)$ in \eqref{eq:posteriorz}. Given several options, we work with a simple and computationally tractable variational family, which is a mean field Gaussian
variational family of the form
\begin{equation*}
    \mathcal{Q}=\left\{q_{\alphabf}(\zbf): q_{\alphabf}(\zbf) = \prod_{j=1}^{r}\frac{1}{\sqrt{2\pi \varrho_j^2}}\exp\left\{-\frac{1}{2 \varrho_j^2}(z_j-m_j)^2\right\}\right\},
\end{equation*}
where $\alphabf = (m_1,\ldots,m_r,\varrho_1,\ldots,\varrho_{r})^{\T}\in\mathbb{R}^{2r}$  represents all unknown parameters in $q_{\alphabf}$.

Instead of maximizing the marginal log-likelihood $\log L_{\tt marg}(\etabf; \mathcal{D}_n)$ in \eqref{eq:marg_like}, we maximize the evidence lower bound (ELBO), defined by
\begin{equation}\label{eq:elbo}
\mathcal{L}(\alphabf, \etabf):=\mbox{ELBO}(\alphabf, \etabf) = \log L_{\tt marg}(\etabf; \mathcal{D}_n) - \textsc{KL}\left( q_{\alphabf}(z) ~\|~p_{\etabf}(z\mid\mathcal{D}_n) \right),
\end{equation}
where the last term in \eqref{eq:elbo} is the KL divergence between the variation posterior $q_{\alphabf}(\zbf)$ and true posterior $p_{\etabf}(\zbf\mid\mathcal{D}_n)$, which is always nonnegative. So, $\mathcal{L}(\alphabf, \etabf)$ is a uniform lower bound for $\log L_{\tt marg}(\etabf; \mathcal{D}_n)$. If $\mathcal{L}(\alphabf, \etabf)$ is taken
as the objective function to maximize, then the result corresponds to variational inference. It is straightforward to demonstrate that the ELBO can be simplified as
\begin{equation}\label{eq:loss}
\mathcal{L}(\alphabf, \etabf) = - \textsc{KL}\left(q_{\alphabf}(\zbf)~\|~\pi_0(\zbf)\right) + \mathbb{E}_{\zbf\sim q_{\alphabf}(\zbf)}\left[{\log}\{ L(\mathcal{D}_n; G_{\etabf}(\zbf))\}\right].
\end{equation}
Note that the ELBO in \eqref{eq:loss} can be evaluated efficiently. This is because the first KL term in \eqref{eq:loss} has an explicit solution, since both $q_{\alphabf}$ and $\pi_0$ are normal densities, and the second term can be unbiasedly estimated by the Monte Carlo average. Let $(\hat\alphabf, \hat\etabf)$ be the maximizer of \eqref{eq:loss}.  Then, $q^{*}:=q_{\hat\alphabf}$ is the estimated variational posterior for $\zbf$ and the push-forward measure $\pi^{*} := G_{\hat\etabf}\#q^{*}$
is the empirical Bayes variational posterior for the weights $\wbf$, which is the approximation of the true posterior $p(\wbf\mid\mathcal{D}_n)$ in \eqref{eq:bay_pred}.

\begin{algorithm}[t]
\caption{Stochastic gradient method for updating $\alphabf$ and $\etabf$} 
\label{algorithm} 
\begin{algorithmic}[1]
\REQUIRE Training data $\mathcal{D}_n=\{(\xbf_i, y_i)\}_{i=1}^n$, learning rate sequence $\{\beta^{(t)}\}$, sample size $H$\\
\ENSURE Parameter estimates $(\hat{\alphabf}$, $\hat{\etabf})$
\STATE Initialization: $\alphabf^{(0)}$, $\etabf^{(0)}$
\FOR{$t=0,\ldots,T-1$}
\STATE Simulate $H$ samples $\zbf^{(1)}, \ldots, \zbf^{(H)}$ from $q_{\alphabf^{(t)}}(.)$
\STATE Compute
\begin{equation*}
    \begin{aligned}
         \nabla_{\alphabf^{(t)}}\mathcal{L}(\alphabf^{(t)},\etabf^{(t)})&=H^{-1}\sum_{h=1}^H[\nabla_{\alphabf^{(t)}}\log \{q_{\alphabf^{(t)}}(\zbf^{(h)})\}]*(\mathrm{I})\\
         \mathrm{where}\;(\mathrm{I})&=\log \{L(\mathcal{D}_n; G_{\etabf^{(t)}}(\zbf^{(h)}))\}+\log \{\pi_0(\zbf^{(h)})/q_{\alphabf^{(t)}}(\zbf^{(h)})\}\\
        \nabla_{\etabf^{(t)}}\mathcal{L}(\alphabf^{(t)},\etabf^{(t)})&=H^{-1}\sum_{h=1}^H\nabla_{\etabf^{(t)}}\log\{ L(\mathcal{D}_n; G_{\etabf^{(t)}}(\zbf^{(h)}))\}
    \end{aligned}
\end{equation*}
\STATE update
\begin{equation*}
    \begin{aligned}
         \alphabf^{(t+1)}&=\alphabf^{(t)} + \beta^{(t)}\nabla_{\alphabf^{(t)}}\mathcal{L}(\alphabf^{(t)},\etabf^{(t)})\\
         \etabf^{(t+1)}&=\etabf^{(t)} + \beta^{(t)}\nabla_{\etabf^{(t)}}\mathcal{L}(\alphabf^{(t)},\etabf^{(t)})
    \end{aligned}
\end{equation*}
\ENDFOR
\STATE return $\hat{\alphabf}=\alphabf^{(T)}$, $\hat{\etabf}=\etabf^{(T)}$
\end{algorithmic}
\end{algorithm}

In practice, the gradient ascent will be adopted to obtain the estimates, and our algorithm computes the gradients as
\begin{equation}\label{eq:grad}
    \begin{aligned}
        &\resizebox{0.85\linewidth}{!}{$\nabla_{\alphabf}\mathcal{L}(\alphabf,\etabf)=\mathbb{E}_{\zbf\sim q_{\alphabf}(\zbf)}\left(\left[\nabla_{\alphabf}\log \{q_{\alphabf}(\zbf)\} \right]\left[{\log} \{L(\mathcal{D}_n; {G_{\etabf}(\zbf)})\}+{\log} \left\{\frac{\pi_0(\zbf)}{q_{\alphabf}(\zbf)}\right\}\right]\right)$,}\\
        &\nabla_{\etabf}\mathcal{L}(\alphabf,\etabf)=\mathbb{E}_{\zbf\sim q_{\alphabf}(\zbf)}\left[\nabla_{\etabf}{\log} \{L(\mathcal{D}_n; {G_{\etabf}(\zbf)})\}\right].
    \end{aligned}
\end{equation}
The detailed algorithm is given in Algorithm \ref{algorithm}. This is an iterative algorithm that updates the estimated $\hat{\alphabf}$ and $\hat{\etabf}$ at each iteration. In practice, for variational parameters $(\varrho_1,\ldots, \varrho_{r})$, we apply the reparameterization trick $\varrho_j=\log(1+e^{\rho_j})$, for $j=1,\ldots,r$, and update the quantities $\rho_j$ in each step instead of $\varrho_j$ as this guarantees non-negative estimates of standard deviation $\varrho_j$ \citep{ranganath2014black, blundell2015weight}. Further
details about the algorithm can be found in the Appendix.

\section{Theoretical Analysis}\label{sec:theory}

In this section, we investigate the theoretical properties of the variational posterior. We have established the uniform variational posterior consistency over a class of transformations $G_{\etabf}$ with some smoothness conditions on $G_{\etabf}$. We have also established the classification accuracy of the variational posterior.

%for a given $G_{\etabf}$.  %we provide theoretical justifications for the estimates from Algorithm \ref{algorithm} in terms of posterior consistency and classification accuracy. 
%
%Particularly, given the variational posterior $q_{\alphabf}(\zbf)$ from the mean field Gaussian variational family in \eqref{eq:gaussianprior} and assuming certain conditions on the prior parameters $(\mubf, \zetabf)$ and some smoothness conditions on $G_{\etabf}$, we establish that the variational posterior $\pi^*(\wbf)$ in \eqref{eq:pi_star} is posterior consistent. The comparison result between consistency rate of the true posterior $p(\wbf\mid\mathcal{D})$ in \eqref{eq:bay_pred} and the variational posterior is also presented. We further illustrate different consistency rate of the variational posterior provided with some requirements on the approximating neural network solution to the true function $f_0$. 

\subsection{Consistency of variational posterior}

We will mainly focus on binary classification, and the conditional density of $y$ given $\xbf$ under the truth is
\begin{equation}\label{eq:binary}
    \ell_0(y, \xbf) = \exp\{yf_0(\xbf)-\log (1 + e^{f_0(\xbf)})\},
\end{equation}
where $f_0:\mathbb R^p\mapsto\mathbb R$ is an unknown continuous function of the log-odds ratio.
Let $f_{\wbf}$ be a neural network approximation of $f_0$ with network weights $\wbf$. Write 
\begin{equation}\label{eq:l_w}
    \ell_{\wbf}(y, \xbf) = \exp \{yf_{\wbf}(\xbf)-\log (1 + e^{f_{\wbf}(\xbf)})\}.
\end{equation}
Without loss of generality, assume $\xbf_i\sim Unif[0, 1]^p$, for $i=1,2,\ldots,n$, which implies $p(\xbf) = 1$ and $p(\xbf\mid\wbf) = 1$.
Define the Hellinger neighborhood of the true density function $g_0 = \ell_0$ under the true model $f_0$ as
\begin{equation}\label{eq:hellinger_w}
    \mathcal{U}_{\varepsilon}=\big\{\wbf: d_{\mathrm{H}}(\ell_0, \ell_{\wbf})<\varepsilon\big\},
\end{equation}
where the Hellinger distance $d_{\mathrm{H}}(\ell_0, \ell_{\wbf})$ is expressed by
\begin{equation*}
    d_{\mathrm{H}}(\ell_0, \ell_{\wbf}) = \left[\frac{1}{2}\int_{\xbf\in[0,1]^p}\sum_{y\in\{0,1\}}\left\{\sqrt{\ell_0(y, \xbf)}-\sqrt{\ell_{\wbf}(y, \xbf)}\right\}^2d\xbf\right]^{\frac{1}{2}}.
\end{equation*}
Similarly, 
the Kullback--Leibler neighborhood of the true density function
$\ell_0$ under the truth $f_0$ is defined as 
\begin{equation*}
    \mathcal{N}_{\varepsilon}=\big\{\wbf: d_{\mathrm{KL}}(\ell_0, \ell_{\wbf})<\varepsilon\big\},
\end{equation*}
where the KL distance $d_{\mathrm{KL}}(\ell_0, \ell_{\wbf})$ is given by
\begin{equation*}
    d_{\mathrm{KL}}(\ell_0, \ell_{\wbf}) = \int_{\xbf\in[0,1]^p}\sum_{y\in\{0,1\}}\left[\log\left\{\frac{\ell_0(y,\xbf)}{\ell_{\wbf}(y,\xbf)}\right\}\ell_0(y,\xbf)\right]d\xbf.
\end{equation*}
In the following, we use the notation $P_0$ to denote the true distribution of $\mathcal{D}_n = \{(\xbf_i, y_i)\}^n_{i=1}$ under the true density $\ell_0$. 
Regarding the asymptotic analysis of NA-EB, we assume that both $r$ and $D$, the dimensions of $\zbf$ and $\wbf$, respectively, depend on $n$ and thus rewrite $r=r_n$, $D=D_n$.

Assume $G_{\etabf}\in \mathcal{G}$ and we put some smoothness conditions on the functions in $\mathcal{G}$ through the constraints on the Jacobian of the function. Details of the conditions are given in Assumption \ref{assump:assump1a} of the Appendix. The intuition is to improve the stability of the model, which avoids the situation where infinitesimal perturbations amplify and have substantial impacts on the performance of the output of $G_{\etabf}$. In practice, a Jacobian regularization can be added to the objective function, and a computationally efficient algorithm has been implemented by \cite{hoffman2019robust}. The prior parameters in \eqref{eq:prior} satisfy $\|\mubf\|_2^2=o(n)$ and 
\begin{equation}\label{eq:var}
    \|\zetabf\|_{
        \infty}=O(n),
        \quad\|\zetabf^*\|_{
        \infty}=O(1),
    \end{equation}
where $\zetabf^*=(1/\zeta_1,\ldots,1/\zeta_{r})^{\T}$.
This assumption, which is Assumption \ref{assump:assump1b} of the Appendix, imposes restrictions on the prior parameters so that the KL distance between the variational posterior $q^*$ and the true posterior $p(\wbf\mid\mathcal{D}_n)$ is negligible.

\begin{theorem}\label{thm:thm4.1}
Suppose $r_n\sim n^a$, $D_n\sim n^u$, $0<a\leq u<1$. 
Then, under Assumptions \ref{assump:assump1a} and \ref{assump:assump1b} in the Appendix, 
\[\sup_{G_{\etabf}\in \mathcal{G}}\pi^{*}(\mathcal{U}^c_{\varepsilon})\stackrel{P_0}{\longrightarrow} 0.\]
\end{theorem}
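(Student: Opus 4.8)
\textit{Proof proposal.}
The plan is to prove the statement in two stages: first establish that the ordinary Bayes posterior concentrates on the Hellinger neighborhood uniformly over $\mathcal{G}$, and then transfer this concentration to the variational posterior $\pi^{*}$ by showing it is Kullback--Leibler close to the true posterior. The first reduction is to work on the latent space. Since $\pi^{*}=G_{\hat\etabf}\#q^{*}$,
\[
\pi^{*}(\mathcal{U}_{\varepsilon}^{c})=q^{*}\big(\{\zbf:\,d_{\mathrm{H}}(\ell_0,\ell_{G_{\hat\etabf}(\zbf)})\ge\varepsilon\}\big),
\]
so it suffices to control the variational mass on the pullback of $\mathcal{U}_{\varepsilon}^{c}$, evaluated at the data-driven $\hat\etabf$. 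Because $\hat\etabf$ cannot be controlled directly, every bound below must be produced with constants that do not depend on the particular transformation, which is exactly what the supremum over $\mathcal{G}$ demands.

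For a generic $G_{\etabf}$ I would run the three ingredients of Ghosal--Ghosh--van der Vaart consistency on the latent space. \emph{Tests:} using the Jacobian bound of Assumption \ref{assump:assump1a}, I would control the metric entropy of the image class $\{f_{G_{\etabf}(\zbf)}\}$ over a bounded latent sieve and invoke the Le Cam--Birg\'e construction to obtain tests $\phi_n$ with $\mathbb{E}_{P_0}\phi_n\to0$ and exponentially small type-II error on $\{\wbf:\,d_{\mathrm{H}}(\ell_0,\ell_{\wbf})\ge\varepsilon\}$; the Jacobian regularity is what lets the entropy be bounded uniformly in $\etabf$. \emph{Prior mass:} I would show the pushed-forward prior $G_{\etabf}\#\pi_0$ assigns at least $e^{-bn}$ mass to the Kullback--Leibler neighborhood $\mathcal{N}_{\varepsilon'}$, where the Gaussian form \eqref{eq:prior} with $\|\mubf\|_2^2=o(n)$ and the variance conditions \eqref{eq:var} make $b$ arbitrarily small even as $D_n\sim n^{u}$, $u<1$, grows. \emph{Sieve complement:} the Gaussian tails render the $\pi_0$-mass of the sieve complement exponentially negligible. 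Together these give $\big(G_{\etabf}\#p_{\etabf}(\cdot\mid\mathcal{D}_n)\big)(\mathcal{U}_{\varepsilon}^{c})\to0$ in $P_0$-probability, uniformly over $\mathcal{G}$.

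The transfer to the variational posterior is the technical heart and is where Assumption \ref{assump:assump1b} enters. From the decomposition \eqref{eq:elbo}, for fixed $\etabf$ the term $\log L_{\tt marg}(\etabf;\mathcal{D}_n)$ does not depend on $\alphabf$, so maximizing the ELBO over $\alphabf$ is exactly minimizing $\textsc{KL}(q_{\alphabf}\,\|\,p_{\etabf}(\cdot\mid\mathcal{D}_n))$. Hence at the maximizer $(\hat\alphabf,\hat\etabf)$,
\[
\textsc{KL}\big(q^{*}\,\|\,p_{\hat\etabf}(\cdot\mid\mathcal{D}_n)\big)=\min_{\alphabf}\textsc{KL}\big(q_{\alphabf}\,\|\,p_{\hat\etabf}(\cdot\mid\mathcal{D}_n)\big),
\]
and the role of the prior conditions \eqref{eq:var} ($\|\zetabf\|_{\infty}=O(n)$, $\|\zetabf^{*}\|_{\infty}=O(1)$) is to guarantee that the mean-field Gaussian family $\mathcal{Q}$ approximates the true latent posterior well enough that this minimum is $o(1)$ uniformly over $\mathcal{G}$. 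By the data-processing inequality the weight-space KL is no larger, so Pinsker's inequality makes the total variation between $\pi^{*}$ and $G_{\hat\etabf}\#p_{\hat\etabf}(\cdot\mid\mathcal{D}_n)$ vanish. Combining this with the uniform Bayes concentration of the previous paragraph yields $\sup_{G_{\etabf}\in\mathcal{G}}\pi^{*}(\mathcal{U}_{\varepsilon}^{c})\to0$ in $P_0$-probability.

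The principal obstacle is maintaining uniformity over $\mathcal{G}$ simultaneously with the growth $r_n\sim n^{a}$ and $D_n\sim n^{u}$: the entropy bound, the $e^{-bn}$ prior-mass lower bound, and the $o(1)$ control of the variational KL must each hold with constants depending only on the smoothness parameters of Assumption \ref{assump:assump1a}, never on the individual $G_{\etabf}$, and must not degrade faster than the $e^{-cn\varepsilon^{2}}$ testing gain as the dimensions increase. This is precisely why the Jacobian regularity and the variance conditions \eqref{eq:var} are indispensable: the former prevents any transformation from inflating the effective complexity of the image class, while the latter prevents the pushed-forward prior from collapsing near $f_0$ or the variational family from being unable to track the true posterior. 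I would therefore organize the argument so that the final comparison of the $e^{-cn\varepsilon^{2}}$ test bound against the $e^{-bn}$ evidence lower bound, together with the $o(1)$ KL transfer, holds simultaneously for every member of $\mathcal{G}$.
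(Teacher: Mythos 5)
Your overall architecture --- reduce to the latent space, run a tests/prior-mass/sieve argument, then transfer from the true posterior to the variational one via their Kullback--Leibler divergence --- is aligned with the paper's strategy, and your first stage (entropy bounds made uniform in $\etabf$ through the Jacobian condition, an $e^{-bn}$ prior-mass lower bound on the KL neighborhood, Gaussian tails killing the sieve complement) matches the paper's Propositions on testing, prior mass, and sieves in both substance and purpose.

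The gap is in the transfer step. You assert that Assumption \ref{assump:assump1b} makes $\min_{\alphabf}\textsc{KL}\bigl(q_{\alphabf}\,\|\,p_{\hat\etabf}(\cdot\mid\mathcal{D}_n)\bigr)=o(1)$ and then invoke Pinsker to get total-variation closeness of $\pi^{*}$ to the pushed-forward true posterior. That $o(1)$ claim is not justified and is not what the paper proves: with $r_n\sim n^{a}$ latent coordinates and a mean-field Gaussian family, the best achievable KL to the true posterior does not vanish --- the paper only establishes $d_{\mathrm{KL}}(q^{*},p_{\etabf}(\cdot\mid\mathcal{D}_n))=o_{P_0}(n)$ (Proposition \ref{proposition:prop7.16}, by exhibiting one explicit $q=N(\sbf,\mathbf{I}_{r_n}/\sqrt{n})$ and bounding its KL by $d_{\mathrm{KL}}(q,\pi_0)$ plus two log-likelihood-ratio terms, each of order $o_{P_0}(n)$). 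An $o_{P_0}(n)$ bound gives nothing through Pinsker. The argument is rescued not by total variation but by the exponential speed of the true posterior's concentration: from
\begin{equation*}
d_{\mathrm{KL}}\bigl(q^{*},p_{\etabf}(\cdot\mid\mathcal{D}_n)\bigr)\;\geq\;-q^{*}(\mathcal{V}^{c}_{\varepsilon})\log p_{\etabf}(\mathcal{V}^{c}_{\varepsilon}\mid\mathcal{D}_n)-\log 2,
\end{equation*}
together with $-\log p_{\etabf}(\mathcal{V}^{c}_{\varepsilon}\mid\mathcal{D}_n)\gtrsim n\varepsilon^{2}$ (your stage one), one gets $q^{*}(\mathcal{V}^{c}_{\varepsilon})\leq\{o_{P_0}(n)+\log 2\}/(n\varepsilon^{2})\to 0$. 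So you need the change-of-measure inequality in place of Pinsker, and you need only $o(n)$ control of the KL rather than $o(1)$; as written, your second stage rests on a premise that fails.
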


Theorem \ref{thm:thm4.1} indicates that, under some regularity conditions, for any $G_{\etabf}\in\mathcal{G} $ and any $\nu > 0$, $\pi^{*}(\mathcal{U}^c_{\varepsilon})<\nu$ with probability tending to 1 as $n\rightarrow\infty$. Under the conditions of Theorem \ref{thm:thm4.1} with less restrictive assumptions on $G_{\etabf}$, it can be proved that the true posterior satisfies $p(\mathcal{U}^c_{\varepsilon}\mid\mathcal{D}_n)<2e^{-n\varepsilon^2/2}$ with probability tending to 1 as $n\rightarrow\infty$ as shown in Theorem \ref{thm:thm7.17} part 1 of the Appendix. This implies that the probability of the $\varepsilon$-small Hellinger neighborhood of the true function $\ell_0$ for the true posterior increases at a rate of $1 -2e^{-n\varepsilon^2/2}$
in contrast to the slow rate of $1-\nu$ for the variational posterior. On the other hand, the consistency of the variational posterior requires more conditions on the implicit model $G_{\etabf}$ than that of the true posterior, since the Bayesian posterior is hard to compute due to the intractable integrals.

Although we have established the uniform posterior variational consistency for any transformation function $G_{\etabf}$, our numerical experiences demonstrate that better predictive performance is achieved using the estimated $\etabf$ of our algorithm than a randomly or manually selected $\etabf$.

To establish the consistency of the variational posterior in a shrinking Hellinger neighborhood of $\ell_0$, we need to modify Assumptions \ref{assump:assump1a} and \ref{assump:assump1b}.
Compared to Assumptions \ref{assump:assump1a} and \ref{assump:assump1b}, the square of the Frobenius norm of the Jacobian in Assumption \ref{assump:assump2a} and the rate of growth of $L_2$ norm of the prior mean parameter in Assumption \ref{assump:assump2b} are allowed to grow slower since the consistency of the variational posterior in a shrinking Hellinger neighborhood of $\ell_0$ is more restrictive in nature. Furthermore, it requires the existence of a neural network solution that converges to the true function $\ell_0$ at a sufficiently fast rate while ensuring controlled growth of the $L_2$ norm of its coefficients. These assumptions are summarized in Assumptions \ref{assump:assump2a}, \ref{assump:assump2b}, and \ref{assump:assump2c} of the Appendix.

\begin{theorem}\label{thm:thm4.2}
Suppose $r_n\sim n^a$, $D_n\sim n^u$, $0<a\leq u<1$ and $\epsilon_n^2\sim n^{-\delta}$, $0<\delta<1-u\leq 1-a$. 
Then, under Assumptions \ref{assump:assump2a}, \ref{assump:assump2b}, and \ref{assump:assump2c} of the Appendix, 
$$\sup_{G_{\etabf}\in \mathcal{G}}\pi^{*}(\mathcal{U}^c_{\varepsilon\epsilon_n})\stackrel{P_0}{\longrightarrow} 0.$$
\end{theorem}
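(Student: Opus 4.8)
The plan is to follow the general template for variational posterior contraction developed by \citet{bhattacharya2020variational} and \citet{Bhattacharya2021StatisticalFO}, adapting the fixed-radius argument behind Theorem \ref{thm:thm4.1} so as to track the shrinking radius $\varepsilon\epsilon_n$. The target is a bound on the \emph{variational risk} $\int d_{\mathrm H}^2(\ell_0,\ell_{\wbf})\,d\pi^*(\wbf)$, after which Markov's inequality converts it into control of $\pi^*(\mathcal U^c_{\varepsilon\epsilon_n})$. The crucial bookkeeping observation is that the \emph{achievable} contraction rate $\tilde\epsilon_n$ is dictated by the metric entropy of the network class, $n\tilde\epsilon_n^2\asymp D_n\sim n^{u}$ up to logarithmic factors, so $\tilde\epsilon_n^2\sim n^{u-1}$; the rate $\epsilon_n^2\sim n^{-\delta}$ imposed in the statement is deliberately \emph{slower}, since $\delta<1-u$ forces $\epsilon_n^2/\tilde\epsilon_n^2\sim n^{\,1-u-\delta}\to\infty$. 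This polynomial slack is exactly what drives the mass outside the $\varepsilon\epsilon_n$-ball to vanish even for a fixed multiplier $\varepsilon$.

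As the first ingredient I would construct a sieve $\mathcal F_n$ of networks with bounded coefficient norm (the bound supplied by Assumption \ref{assump:assump2b}) and verify two entropy-type facts: (i) the Hellinger metric entropy satisfies $\log N(\tilde\epsilon_n,\mathcal F_n,d_{\mathrm H})\lesssim n\tilde\epsilon_n^2$, and (ii) the push-forward prior charges the sieve complement only exponentially lightly, $\pi(\mathcal F_n^c)\le e^{-c n\tilde\epsilon_n^2}$. Fact (i) follows from standard covering-number bounds for norm-constrained networks with $D_n\sim n^{u}$ parameters, and the constraint $\delta<1-u$ is precisely what keeps this entropy below $n\epsilon_n^2$. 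Fact (ii) is obtained by pushing the sub-Gaussian tail of $\zbf\sim\pi_0$ through $G_{\etabf}$, where the Frobenius-norm control of the Jacobian in Assumption \ref{assump:assump2a} together with the prior-parameter growth in Assumption \ref{assump:assump2b} provides the needed Lipschitz-type transfer. With (i) and (ii) in hand, the Le Cam--Birg\'e theory yields exponentially powerful tests $\phi_n$ separating $\ell_0$ from $\{\wbf\in\mathcal F_n: d_{\mathrm H}(\ell_0,\ell_{\wbf})\ge \tilde\epsilon_n\}$, with both error probabilities bounded by $e^{-c n\tilde\epsilon_n^2}$.

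Next I would establish the prior-mass (Kullback--Leibler) lower bound and the variational-approximation bound together, since both hinge on the same construction. Assumption \ref{assump:assump2c} supplies a network $\tilde\wbf$ with $d_{\mathrm{KL}}(\ell_0,\ell_{\tilde\wbf})\lesssim \tilde\epsilon_n^2$ and controlled coefficient norm; taking its pre-image $\tilde\zbf$ under $G_{\etabf}$, the two-sided Jacobian control of Assumption \ref{assump:assump2a} shows that a latent ball around $\tilde\zbf$ maps into the neighborhood $\mathcal N_{\tilde\epsilon_n^2}$, so that $\pi(\mathcal N_{\tilde\epsilon_n^2})=\pi_0(\{\zbf:G_{\etabf}(\zbf)\in\mathcal N_{\tilde\epsilon_n^2}\})\ge e^{-c' n\tilde\epsilon_n^2}$. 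The \emph{same} $\tilde\zbf$ furnishes a mean-field Gaussian $\tilde q\in\mathcal Q$ concentrated there; writing the log-likelihood-ratio process $\Lambda_n(\wbf)=\log\{L(\mathcal D_n;\wbf)/L_0(\mathcal D_n)\}$ with $L_0(\mathcal D_n)=\prod_{i=1}^n\ell_0(y_i,\xbf_i)$, a direct computation of $\textsc{KL}(\tilde q\|\pi_0)$ (both factors normal, with $\|\mubf\|_2$ and $\|\zetabf\|_\infty,\|\zetabf^*\|_\infty$ bounded via Assumption \ref{assump:assump2b}) together with $\mathbb E_{\tilde q}[-\Lambda_n(G_{\etabf}(\zbf))]=O_{P_0}(n\tilde\epsilon_n^2)$ shows that the variational gap obeys $\textsc{KL}(q^*\|p_{\etabf}(\cdot\mid\mathcal D_n))\le\textsc{KL}(\tilde q\|p_{\etabf}(\cdot\mid\mathcal D_n))=O_{P_0}(n\tilde\epsilon_n^2)$, the first inequality being the defining optimality of $q^*$, which maximizes the ELBO \eqref{eq:loss} and hence minimizes the KL to the true posterior over $\mathcal Q$.

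Finally I would assemble these pieces. On the event $\{\phi_n=0\}$, of $P_0$-probability tending to $1$, combining the evidence lower bound from the prior-mass estimate, the exponential test errors, and the variational gap yields $\int d_{\mathrm H}^2(\ell_0,\ell_{\wbf})\,d\pi^*(\wbf)=O_{P_0}(\tilde\epsilon_n^2)=O_{P_0}(n^{u-1})$, whence Markov's inequality gives
\[
\pi^*(\mathcal U^c_{\varepsilon\epsilon_n})\le\frac{1}{\varepsilon^2\epsilon_n^2}\int d_{\mathrm H}^2(\ell_0,\ell_{\wbf})\,d\pi^*(\wbf)=O_{P_0}\!\left(\frac{n^{u-1}}{n^{-\delta}}\right)=O_{P_0}\!\left(n^{-(1-u-\delta)}\right)\longrightarrow 0,
\]
the convergence being guaranteed precisely by $\delta<1-u$. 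Because every constant above depends on $G_{\etabf}$ only through the uniform bounds of Assumptions \ref{assump:assump2a}--\ref{assump:assump2c}, the estimate holds uniformly over $\mathcal G$, delivering the stated $\sup_{G_{\etabf}\in\mathcal G}$ conclusion. I expect the main obstacle to be the nonlinear change of variables in the third step: transferring both the near-truth prior mass and the tail bound from the latent Gaussian through an \emph{arbitrary} $G_{\etabf}\in\mathcal G$, uniformly, at the fast rate $\tilde\epsilon_n$. This is where the Jacobian conditions of Assumption \ref{assump:assump2a} are indispensable, since without quantitative two-sided control of how $G_{\etabf}$ distorts volumes one cannot relate the $\pi_0$-mass of latent balls to the induced prior mass of weight-space neighborhoods uniformly over the function class.
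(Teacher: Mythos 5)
Your overall architecture (bound the variational Hellinger risk, then apply Markov's inequality at radius $\varepsilon\epsilon_n$) is a legitimate alternative to the route the paper actually takes, which never forms the integrated risk: the paper works directly with the inequality $-q^{*}(\mathcal{V}^c_{\varepsilon\epsilon_n})A\leq d_{\mathrm{KL}}(q^{*},p_{\etabf}(\cdot\mid\mathcal{D}_n))+|B|+\log 2$, bounds $A$ by splitting over a sieve $\mathcal{F}_n$ (Proposition \ref{proposition:prop7.15}, via bracketing entropy and the Wong--Shen uniform likelihood-ratio bound rather than Le Cam--Birg\'e tests), and shows $|B|$ and the variational gap are $o_{P_0}(n\epsilon_n^2)$ (Proposition \ref{proposition:prop7.16}). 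However, your version contains a quantitative error that breaks the final step.

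The gap is your claim that Assumption \ref{assump:assump2c} supplies a network $\tilde\wbf$ with $d_{\mathrm{KL}}(\ell_0,\ell_{\tilde\wbf})\lesssim\tilde\epsilon_n^2\sim n^{u-1}$, and hence that the prior-mass and variational-gap terms live at scale $n\tilde\epsilon_n^2\sim n^{u}$. Assumption \ref{assump:assump2c} only guarantees $\|f_0-f_{\Upsilonbf}\|_{\infty}=o(\epsilon_n^2)=o(n^{-\delta})$, and since $\delta<1-u$ we have $n^{-\delta}\gg n^{u-1}$: the approximation accuracy available is far coarser than the entropy rate you posit. Consequently $\mathbb{E}_{\tilde q}[-\Lambda_n]$ is of order $n\epsilon_n^2=n^{1-\delta}$, not $n^{u}$, the KL-neighborhood prior mass can only be verified at radius $\propto\epsilon_n^2$ (this is exactly what Proposition \ref{proposition:prop7.12} does), and the best risk bound obtainable is $o_{P_0}(\epsilon_n^2)$ rather than $O_{P_0}(n^{u-1})$. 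Your concluding display $\pi^*(\mathcal U^c_{\varepsilon\epsilon_n})\le\varepsilon^{-2}\epsilon_n^{-2}\cdot O_{P_0}(n^{u-1})=O_{P_0}(n^{-(1-u-\delta)})$ therefore rests on a rate that is not available, and the entire ``polynomial slack'' mechanism you invoke never materializes. The conclusion can still be rescued along your route, but only by exploiting that the prior-mass and variational-gap bounds hold with an arbitrary constant $\nu$ in the exponent (the ``for every $\nu>0$'' structure of Propositions \ref{proposition:prop7.11}--\ref{proposition:prop7.12} and \ref{proposition:prop7.16}), yielding a risk of $o_{P_0}(\epsilon_n^2)$ and hence $\pi^*(\mathcal U^c_{\varepsilon\epsilon_n})=o_{P_0}(1)$ for each fixed $\varepsilon$ --- a different, and more delicate, argument than the one you wrote. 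A secondary point: your appeal to ``two-sided'' Jacobian control and volume distortion overstates what Assumption \ref{assump:assump2a} provides (an upper Frobenius bound plus a second-derivative condition); the proof only needs latent balls to map \emph{into} weight-space KL neighborhoods, which follows from the one-sided bound via the mean value theorem, with the prior mass computed directly on the latent Gaussian.
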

A restatement of Theorem \ref{thm:thm4.2} is for any $G_{\etabf}\in\mathcal{G} $ and any $\nu > 0$, $\pi^{*}(\mathcal{U}^c_{\varepsilon\epsilon_n})<\nu$ with probability tending to 1 as $n\rightarrow\infty$. Under the conditions of Theorem \ref{thm:thm4.2} with less restrictive assumptions on $G_{\etabf}$, it can be established that the true posterior satisfies $p(\mathcal{U}^c_{\varepsilon\epsilon_n}\mid\mathcal{D}_n)<2e^{-n\varepsilon^2\epsilon_n^2/2}$ with probability tending to 1 as $n\rightarrow\infty$ as shown in Theorem \ref{thm:thm7.18} part 1 of the Appendix. This implies that the probability of the $\varepsilon\epsilon_n$-small Hellinger neighborhood of the true function $\ell_0$ for the true posterior increases at the rate of $1 -2e^{-n\varepsilon^2\epsilon_n^2/2}$
in contrast to the slow rate of $1-\nu$ for the variational posterior.

\subsection{Classification accuracy}

In this subsection, we establish the classification accuracy of the variational posterior. A classifier $C$ is a Borel-measurable function
$C: \mathbb{R}^p\mapsto\{0, 1\}$, which assigns a sample $\xbf\in\mathbb{R}^p$ to the class $C(\xbf)$. The misclassification error risk of a classifier $C$ is given by
\begin{equation}\label{eq:testerror}
    R(C) = \int_{\mathbb{R}^p\times\{0, 1\}}\mathds{1}(C(\xbf)\neq y)dP_{\xbf, y}.
\end{equation}
where $\mathds{1}(\cdot)$ denotes the indicator function.
The Bayes classifier is defined as
\begin{equation}\label{eq:bayes_class}
    C^{\tt{Bayes}}(\xbf) = \left\{
    \begin{aligned}
        &1, \quad\sigma(f_0(\xbf))\geq1/2\\
        &0, \quad\mathrm{otherwise},
    \end{aligned}
    \right.
\end{equation}
where $\sigma(x) = e^x/(1 + e^x)$ is the sigmoid function.

As the predictive distribution of Algorithm \ref{algorithm} can be estimated in \eqref{eq:pred_dist} of the Appendix, the classifier based on the variational posterior is given by
\begin{equation}\label{eq:varational_class}
    \hat{C}(\xbf) = \left\{
    \begin{aligned}
        &1, \quad\sigma(\hat{f}(\xbf))\geq1/2\\
        &0, \quad\mathrm{otherwise},
    \end{aligned}
    \right.
\end{equation}
where $\hat{f}(\xbf)=\sigma^{-1}(\int\sigma(f_{G_{\hat{\etabf}}(\zbf)}(\xbf))q^*(\zbf)d\zbf)$ is the variational estimator of $f_0(\xbf)$.

Although the Bayes classifier is optimal in terms of minimizing the test error in \eqref{eq:testerror} \citep{hastie2009elements}, it is not useful in practice, as the truth $f_0$ is unknown. We compare the classification accuracy of the Bayes classifier in \eqref{eq:bayes_class} and the variational classifier in \eqref{eq:varational_class} in Corollaries \ref{cor:cor4.3} and \ref{cor:cor4.4} under different assumptions on the prior parameters and the deterministic transformation function $G_{\etabf}$.

\begin{corollary}\label{cor:cor4.3}
Under the conditions of Theorem \ref{thm:thm4.1},
\begin{equation*}
    \sup_{G_{\etabf}\in \mathcal{G}}\left|R(\hat{C})-R(C^{\tt{Bayes}})\right|\stackrel{P_{\xbf, y}}{\longrightarrow}0.
\end{equation*}
\end{corollary}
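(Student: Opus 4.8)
The plan is to control the excess misclassification risk by an integrated gap between conditional class probabilities and then convert that gap into the Hellinger distance governed by Theorem \ref{thm:thm4.1}. Write $\eta(\xbf)=\sigma(f_0(\xbf))$ and $\eta_{\wbf}(\xbf)=\sigma(f_{\wbf}(\xbf))$ for the true and candidate probabilities of $y=1$, so that $C^{\tt{Bayes}}$ and $\hat{C}$ are the plug-in rules thresholding $\eta(\xbf)$ and $\sigma(\hat{f}(\xbf))$ at $1/2$. First I would invoke the classical plug-in identity $R(\hat{C})-R(C^{\tt{Bayes}})=2\int_{\{\hat{C}\neq C^{\tt{Bayes}}\}}|\eta(\xbf)-1/2|\,d\xbf$, which holds because $\xbf\sim\mathrm{Unif}[0,1]^p$. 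On the disagreement set $\eta(\xbf)$ and $\sigma(\hat{f}(\xbf))$ straddle $1/2$, so $|\eta(\xbf)-1/2|\le|\eta(\xbf)-\sigma(\hat{f}(\xbf))|$, giving $R(\hat{C})-R(C^{\tt{Bayes}})\le 2\int_{[0,1]^p}|\eta(\xbf)-\sigma(\hat{f}(\xbf))|\,d\xbf$. Since the Bayes rule is risk-optimal the left side is nonnegative, so it suffices to drive this integral to $0$ uniformly over $\mathcal{G}$.

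Next I would use the definition $\sigma(\hat{f}(\xbf))=\int\sigma(f_{G_{\hat{\etabf}}(\zbf)}(\xbf))q^{*}(\zbf)\,d\zbf=\mathbb{E}_{\wbf\sim\pi^{*}}[\eta_{\wbf}(\xbf)]$, so Jensen's inequality yields $|\eta(\xbf)-\sigma(\hat{f}(\xbf))|\le\mathbb{E}_{\pi^{*}}|\eta(\xbf)-\eta_{\wbf}(\xbf)|$; integrating in $\xbf$ and applying Fubini gives $\int_{[0,1]^p}|\eta(\xbf)-\sigma(\hat{f}(\xbf))|\,d\xbf\le\mathbb{E}_{\pi^{*}}\big[\int_{[0,1]^p}|\eta(\xbf)-\eta_{\wbf}(\xbf)|\,d\xbf\big]$. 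The crux is that, for each fixed $\xbf$, $\ell_0(\cdot,\xbf)$ and $\ell_{\wbf}(\cdot,\xbf)$ are Bernoulli laws with parameters $\eta(\xbf)$ and $\eta_{\wbf}(\xbf)$, so $|\eta(\xbf)-\eta_{\wbf}(\xbf)|$ is exactly their total-variation distance; the pointwise total-variation--Hellinger inequality then gives $|\eta(\xbf)-\eta_{\wbf}(\xbf)|\le\sqrt{2}\,h(\xbf)$, where $h^2(\xbf)=1-\sqrt{\eta(\xbf)\eta_{\wbf}(\xbf)}-\sqrt{(1-\eta(\xbf))(1-\eta_{\wbf}(\xbf))}$ is the pointwise squared Hellinger affinity. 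Because $d_{\mathrm{H}}^2(\ell_0,\ell_{\wbf})=\int_{[0,1]^p}h^2(\xbf)\,d\xbf$ and the cube has unit volume, Cauchy--Schwarz yields $\int_{[0,1]^p}|\eta-\eta_{\wbf}|\,d\xbf\le\sqrt{2}\,d_{\mathrm{H}}(\ell_0,\ell_{\wbf})$.

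Chaining these bounds gives the master estimate $R(\hat{C})-R(C^{\tt{Bayes}})\le 2\sqrt{2}\,\mathbb{E}_{\pi^{*}}[d_{\mathrm{H}}(\ell_0,\ell_{\wbf})]$. I would then split the posterior expectation over $\mathcal{U}_{\varepsilon}$ and its complement: on $\mathcal{U}_{\varepsilon}$ the integrand is below $\varepsilon$, while on $\mathcal{U}_{\varepsilon}^c$ I use the universal bound $d_{\mathrm{H}}\le 1$, so that $\mathbb{E}_{\pi^{*}}[d_{\mathrm{H}}(\ell_0,\ell_{\wbf})]\le\varepsilon+\pi^{*}(\mathcal{U}_{\varepsilon}^c)$. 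Taking the supremum over $\mathcal{G}$ and applying Theorem \ref{thm:thm4.1}, $\sup_{G_{\etabf}\in\mathcal{G}}\mathbb{E}_{\pi^{*}}[d_{\mathrm{H}}(\ell_0,\ell_{\wbf})]\le\varepsilon+\sup_{G_{\etabf}\in\mathcal{G}}\pi^{*}(\mathcal{U}_{\varepsilon}^c)$, whose second term is $o_{P_0}(1)$. Since $\varepsilon>0$ is arbitrary while the left side does not depend on it, a routine $\varepsilon$-argument gives $\sup_{G_{\etabf}\in\mathcal{G}}\mathbb{E}_{\pi^{*}}[d_{\mathrm{H}}]\stackrel{P_0}{\to}0$, and hence $\sup_{G_{\etabf}\in\mathcal{G}}|R(\hat{C})-R(C^{\tt{Bayes}})|\to0$ in probability under the data-generating law, as claimed. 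The main obstacle is the conversion in the second paragraph: translating the classification-relevant $L_1$ gap between conditional probabilities into the joint-density Hellinger distance on which posterior consistency is phrased. This step is clean only once one recognizes the Bernoulli structure that makes total variation equal $|\eta-\eta_{\wbf}|$ exactly; the remaining uniformity is delivered directly by the uniform form of Theorem \ref{thm:thm4.1}.
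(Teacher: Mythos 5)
Your proposal is correct and follows essentially the same route as the paper: the plug-in excess-risk bound $R(\hat{C})-R(C^{\tt{Bayes}})\le 2\int|\sigma(f_0)-\sigma(\hat{f})|\,d\xbf$, a Jensen step to pass to the posterior average, a conversion of the $L_1$ gap of conditional probabilities into the Hellinger distance $d_{\mathrm{H}}(\ell_0,\ell_{\wbf})$, and a split over $\mathcal{U}_{\varepsilon}$ versus $\mathcal{U}_{\varepsilon}^c$ controlled by Theorem \ref{thm:thm4.1}. The only cosmetic difference is that the paper applies Jensen to $d_{\mathrm{H}}$ itself (forming the posterior-predictive density $\hat{\ell}$ and then deriving the affinity inequality by hand in \eqref{eq:theorem7.17_9}), whereas you apply Jensen pointwise to $|\eta-\eta_{\wbf}|$ and quote the Bernoulli total-variation--Hellinger bound plus Cauchy--Schwarz; these are the same estimate in different packaging, and both yield the stated uniform convergence.
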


Corollary \ref{cor:cor4.3} can be rephrased as for any $G_{\etabf}\in\mathcal{G} $ and any $\nu > 0$, $|R(\hat{C})-R(C^{\tt{Bayes}})|<\nu$ with probability tending to 1 as $n\rightarrow\infty$. As part 2 of Theorem \ref{thm:thm7.17} in the Appendix shows that the true posterior gives the classification accuracy at the same consistency rate under the conditions of Theorem \ref{thm:thm4.1} with less restrictive assumptions on $G_{\etabf}$ which indicates that there is no harm using the variational posterior approximation.

\begin{corollary}\label{cor:cor4.4}
Under the conditions of Theorem \ref{thm:thm4.2}, for every $0\leq\kappa\leq\frac{2}{3}$
\begin{equation*}
    \sup_{G_{\etabf}\in \mathcal{G}}\epsilon_n^{-\kappa}\left|R(\hat{C})-R(C^{\tt{Bayes}})\right|\stackrel{P_{\xbf,y}}{\longrightarrow}0.
\end{equation*}
\end{corollary}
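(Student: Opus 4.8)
The statement is a plug-in classification bound, so the plan is to (i) reduce the excess misclassification risk to an $L_2$ estimation error for the conditional success probability, and (ii) control that error through the Hellinger concentration of the variational posterior furnished by Theorem~\ref{thm:thm4.2}. Throughout write $\eta_0(\xbf)=\sigma(f_0(\xbf))$ and $\hat\eta(\xbf)=\sigma(\hat f(\xbf))$ for the two scalar regression functions (not to be confused with the bold hyperparameter $\etabf$). The key structural observation is that, by the definition of $\hat f$, $\hat\eta(\xbf)=\int\sigma(f_{G_{\hat{\etabf}}(\zbf)}(\xbf))\,q^{*}(\zbf)\,d\zbf=\mathbb{E}_{\wbf\sim\pi^{*}}[\sigma(f_{\wbf}(\xbf))]$, i.e.\ $\hat\eta$ is the posterior-mean success probability under $\pi^{*}$, and that both $\hat C$ and $C^{\tt{Bayes}}$ threshold a probability at $1/2$.

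First I would invoke the standard excess-risk identity $R(\hat C)-R(C^{\tt{Bayes}})=\int_{\{\hat C\neq C^{\tt{Bayes}}\}}|2\eta_0(\xbf)-1|\,dP_{\xbf}$, and note that on the disagreement set $\hat\eta(\xbf)$ and $\eta_0(\xbf)$ lie on opposite sides of $1/2$, whence $|\eta_0(\xbf)-\tfrac12|\le|\hat\eta(\xbf)-\eta_0(\xbf)|$. The exponent $2/3$ then appears through a \emph{peeling} step: for any threshold $t\in(0,\tfrac12)$, splitting the integral according to whether $|\eta_0-\tfrac12|\le t$ and bounding the high-margin region by Markov's inequality ($P_{\xbf}(|\hat\eta-\eta_0|>t)\le t^{-2}\|\hat\eta-\eta_0\|_{2}^{2}$, the norm taken in $L_2(P_{\xbf})$) gives $R(\hat C)-R(C^{\tt{Bayes}})\le 2t+t^{-2}\|\hat\eta-\eta_0\|_{2}^{2}$; minimizing over $t$ at $t\sim\|\hat\eta-\eta_0\|_{2}^{2/3}$ yields $R(\hat C)-R(C^{\tt{Bayes}})\le 3\,\|\hat\eta-\eta_0\|_{2}^{2/3}$. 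This is precisely what restricts the admissible exponents to $0\le\kappa\le\tfrac23$.

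Next I would convert the estimation error into the variational Hellinger distance. Since $\hat\eta$ is a $\pi^{*}$-average and $p(\xbf)\equiv1$ on $[0,1]^{p}$, Jensen's inequality gives $\|\hat\eta-\eta_0\|_{2}^{2}\le\mathbb{E}_{\wbf\sim\pi^{*}}\|\sigma(f_{\wbf})-\eta_0\|_{2}^{2}$; and for each fixed $\xbf$ the quantity $|\sigma(f_{\wbf}(\xbf))-\eta_0(\xbf)|$ is exactly the total-variation distance between the two Bernoulli conditionals, hence at most $\sqrt2$ times the pointwise Hellinger distance. Integrating over $\xbf$ and using Cauchy--Schwarz (the cube $[0,1]^p$ has unit volume) gives $\|\sigma(f_{\wbf})-\eta_0\|_{2}^{2}\le2\,d_{\mathrm{H}}^{2}(\ell_0,\ell_{\wbf})$, so that $\|\hat\eta-\eta_0\|_{2}^{2}\le2\,\mathbb{E}_{\wbf\sim\pi^{*}}[d_{\mathrm{H}}^{2}(\ell_0,\ell_{\wbf})]$. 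Splitting this expectation over $\mathcal{U}_{\varepsilon\epsilon_n}$ and its complement and using $d_{\mathrm{H}}\le1$ yields $\mathbb{E}_{\pi^{*}}[d_{\mathrm{H}}^{2}]\le\varepsilon^{2}\epsilon_n^{2}+\pi^{*}(\mathcal{U}^{c}_{\varepsilon\epsilon_n})$. Every constant in this chain is universal, so the inequalities hold uniformly in $G_{\etabf}\in\mathcal{G}$ and the supremum in the statement is carried through unchanged.

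Combining the three displays gives $\sup_{G_{\etabf}}\big(R(\hat C)-R(C^{\tt{Bayes}})\big)\lesssim\big(\varepsilon^{2}\epsilon_n^{2}+\sup_{G_{\etabf}}\pi^{*}(\mathcal{U}^{c}_{\varepsilon\epsilon_n})\big)^{1/3}$. After multiplying by $\epsilon_n^{-\kappa}$ the deterministic part contributes $\varepsilon^{2/3}\epsilon_n^{2/3-\kappa}$, which vanishes for $\kappa<\tfrac23$ and can be made arbitrarily small at $\kappa=\tfrac23$ by sending $\varepsilon\downarrow0$. The main obstacle is the stochastic part $\epsilon_n^{-\kappa}\big(\sup_{G_{\etabf}}\pi^{*}(\mathcal{U}^{c}_{\varepsilon\epsilon_n})\big)^{1/3}$: the bare conclusion of Theorem~\ref{thm:thm4.2} only asserts that this supremum is $o_{P}(1)$, which does not survive multiplication by the diverging factor $\epsilon_n^{-\kappa}$. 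To close the gap I would return to the quantitative variational risk bound established \emph{inside} the proof of Theorem~\ref{thm:thm4.2}, namely an estimate of the form $\sup_{G_{\etabf}}\mathbb{E}_{\pi^{*}}[d_{\mathrm{H}}^{2}(\ell_0,\ell_{\wbf})]=O_{P}(\epsilon_n^{2})$, from which $\|\hat\eta-\eta_0\|_{2}=O_{P}(\epsilon_n)$ and hence $R(\hat C)-R(C^{\tt{Bayes}})=O_{P}(\epsilon_n^{2/3})$ follow at once, giving the conclusion for every $\kappa<\tfrac23$; the closed endpoint $\kappa=\tfrac23$ is the most delicate point and requires the marginally sharper statement that this risk is $o_{P}(\epsilon_n^{2/3})$, which I would extract by combining the $\mathcal{G}$-uniform risk bound with the freedom to send $\varepsilon\downarrow0$ in the neighbourhood split. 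Verifying that such a quantitative, $\mathcal{G}$-uniform risk bound is indeed available is the crux of the argument.
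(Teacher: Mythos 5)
There is a genuine gap, and it sits exactly where you flag it as ``the crux.'' The quantitative bound you need, $\sup_{G_{\etabf}}\mathbb{E}_{\wbf\sim\pi^{*}}[d_{\mathrm{H}}^{2}(\ell_0,\ell_{\wbf})]=O_{P}(\epsilon_n^{2})$, is not established inside the proof of Theorem \ref{thm:thm4.2} and does not follow from its machinery: the variational argument only yields $\pi^{*}(\mathcal{U}^{c}_{\varepsilon\epsilon_n})=o_{P_0}(1)$ (the inequality $n\varepsilon^2\epsilon_n^2\,q^{*}(\mathcal{V}^{c}_{\varepsilon\epsilon_n})\leq o_{P_0}(n\epsilon_n^2)$ gives no rate for the tail mass at the radius $\varepsilon\epsilon_n$ itself), so your split $\mathbb{E}_{\pi^{*}}[d_{\mathrm{H}}^{2}]\leq\varepsilon^{2}\epsilon_n^{2}+\pi^{*}(\mathcal{U}^{c}_{\varepsilon\epsilon_n})$ is dominated by an $o_P(1)$ term that does not survive multiplication by $\epsilon_n^{-\kappa}$. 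The idea the paper uses, and which your outline is missing, is to re-run the posterior-mass argument on the \emph{enlarged} neighborhood $\mathcal{U}_{\varepsilon\epsilon_n^{\kappa}}$: Proposition \ref{proposition:prop7.15} applied with radius $\varepsilon\epsilon_n^{\kappa}$ gives $n\varepsilon^{2}\epsilon_n^{2\kappa}q^{*}(\mathcal{V}^{c}_{\varepsilon\epsilon_n^{\kappa}})\leq o_{P_0}(n\epsilon_n^{2})$, hence the quantitative tail bound $q^{*}(\mathcal{V}^{c}_{\varepsilon\epsilon_n^{\kappa}})=o_{P_0}(\epsilon_n^{2-2\kappa})$. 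Feeding this into Jensen applied to $d_{\mathrm{H}}$ (not $d_{\mathrm{H}}^{2}$) yields $d_{\mathrm{H}}(\hat{\ell},\ell_0)\leq\varepsilon\epsilon_n^{\kappa}+o_{P_0}(\epsilon_n^{2-2\kappa})$, and the linear comparison $R(\hat{C})-R(C^{\tt{Bayes}})\leq 4\sqrt{2}\,d_{\mathrm{H}}(\hat{\ell},\ell_0)$ from \eqref{eq:theorem7.17_7} and \eqref{eq:theorem7.17_10} finishes the argument; dividing by $\epsilon_n^{\kappa}$ forces $2-3\kappa\geq 0$, which is the true origin of the exponent $2/3$.

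A second, related problem is that your peeling step is counterproductive. The bound $R(\hat{C})-R(C^{\tt{Bayes}})\leq 3\|\hat{\eta}-\eta_0\|_{2}^{2/3}$ is correct but strictly weaker, for vanishing errors, than the direct linear bound $R(\hat{C})-R(C^{\tt{Bayes}})\leq 2\|\hat{\eta}-\eta_0\|_{1}$ already implicit in \eqref{eq:theorem7.17_7}. With the $2/3$-power loss you would need $\|\hat{\eta}-\eta_0\|=o_{P}(\epsilon_n)$ to reach the endpoint $\kappa=2/3$, whereas the attainable rate from the posterior concentration (optimizing the radius as above) is only of order $\epsilon_n^{2/3}$; the linear comparison needs exactly $o_{P}(\epsilon_n^{2/3})$ and therefore closes, while the peeled bound would only deliver $\kappa\leq 4/9$. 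So the margin argument should be dropped entirely: the restriction $\kappa\leq 2/3$ is not a margin phenomenon but a trade-off between the neighborhood radius $\epsilon_n^{\kappa}$ and the posterior tail mass $\epsilon_n^{2-2\kappa}$.
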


A restatement of Corollary \ref{cor:cor4.4} is for any $G_{\etabf}\in\mathcal{G} $ and any $\nu > 0$, $0\leq\kappa\leq\frac{2}{3}$, $|R(\hat{C})-R(C^{\tt{Bayes}})|<\nu\epsilon_n^{\kappa}$ with probability tending to 1 as $n\rightarrow\infty$. As part 2 of Theorem \ref{thm:thm7.18} in the Appendix shows, under the conditions of Theorem \ref{thm:thm4.2} with less restrictive assumptions on $G_{\etabf}$, the true posterior satisfies $\epsilon_n^{-\kappa}|R(\hat{C})-R(C^{\tt{Bayes}})|\stackrel{P_{\xbf, y}}{\longrightarrow}0$ for every $0\leq\kappa\leq 1$.

\section{Numerical Results}\label{sec:numerical}

In this section, we evaluate our NA-EB in regression and classification experiments. For these two classical types of supervised learning problems, we illustrate the performance of our proposed framework and algorithm in terms of predictive accuracy and predictive uncertainty with real data analysis and synthetic datasets, respectively. 

%\subsection{Regression}
%We first present some empirical evaluation of the proposed method in the regression problems: on a non-linear regression task and on UCI regression.

%\subsubsection{Toy dataset}

\subsection{Two-spiral problem}

Consider the classification problem of learning a mapping for the two-spiral dataset $\{(\xbf_i,y_i)\}_{i=1}^{194}$ in which the samples $(\xbf_i,y_i)\in\mathbb{R}^2\times\mathbb{R}$ are generated from:
\begin{equation*}
    \begin{aligned}
        & \resizebox{0.9\linewidth}{!}{$\xbf_{i, 1} = 6.5 \times (-1)^{\{i\pmod{2}\}} \times \left[1-\frac{i-\{i\pmod{2}\}}{208}\right] \times \sin\left(\frac{[i-\{i\pmod{2}\}]\pi}{32}\right)$},\\
        & \resizebox{0.9\linewidth}{!}{$\xbf_{i, 2} = 6.5 \times (-1)^{\{i\pmod{2}\}} \times \left[1-\frac{i-\{i\pmod{2}\}}{208}\right] \times \cos\left(\frac{[i-\{i\pmod{2}\}]\pi}{32}\right)$},\\
        & y_i = i\pmod{2},
    \end{aligned}
\end{equation*}
where $i\pmod{2}$ is the remainder after dividing $i$ by 2, for $i=1,\ldots,194$, and the sample points on two intertwined spirals are shown in Figure \ref{fig:spiral}.

\begin{figure}[tb]
    \centering
    \includegraphics[scale=0.62]{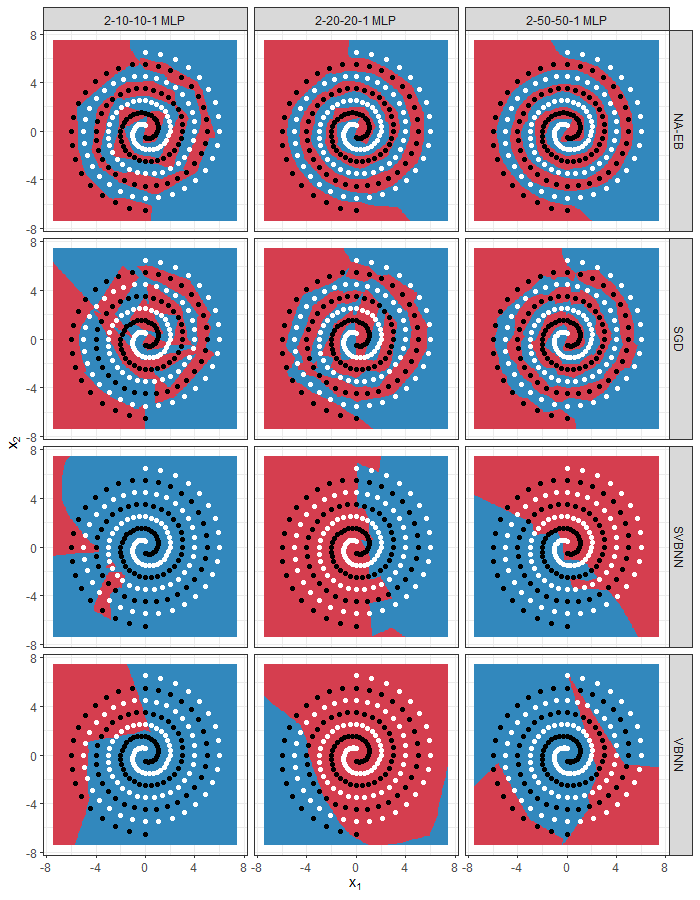}
    \caption{NA-EB, SGD, VBNN, and SVBNN classifying maps using 2-10-10-1 MLP, 2-20-20-1 MLP, and 2-50-50-1 MLP respectively. Black and white points denote training data for two spirals. Red and blue regions indicate the two classified classes.}
    \label{fig:spiral}
\end{figure}

Since no structural information is assumed for the mapping, a feedforward neural network, which is also known as the multiple layer perceptron (MLP), can be used to distinguish between points. We adopt fully connected two-hidden-layer MLPs consisting of two input units, $h$ hidden units for both layers, one output unit, and the ReLu activation function denoted by the $2$-$h$-$h$-$1$ MLP $(h=10, 20, 50)$. Besides, we employ a one-hidden-layer MLP comprising 128 rectified linear units as the deterministic transformation function $G_{\etabf}$. For comparison, the results of a standard neural network optimized by stochastic gradient descent via backpropagation (SGD) \citep{rumelhart1986learning}, a variational Bayesian algorithm (VBNN) \citep{blundell2015weight}, and a sparse variational BNN (SVBNN) \citep{bai2020efficient} are also reported in Figure \ref{fig:spiral}. The comparison indicates that NA-EB outperforms SGD, SVBNN, and VBNN in terms of predictive performance. NA-EB can find perfect solutions that distinguish between points on two intertwined spirals with smooth boundaries for different architectures of MLPs. However, SGD performs rather poorly as the number of hidden units of MLPs decreases. We also observe in this example that VBNN and SVBNN have not converged well as it requires the MLP to learn a highly nonlinear separation of the input space. 

\begin{figure}[tb]
    \centering
    \includegraphics[scale=0.3]{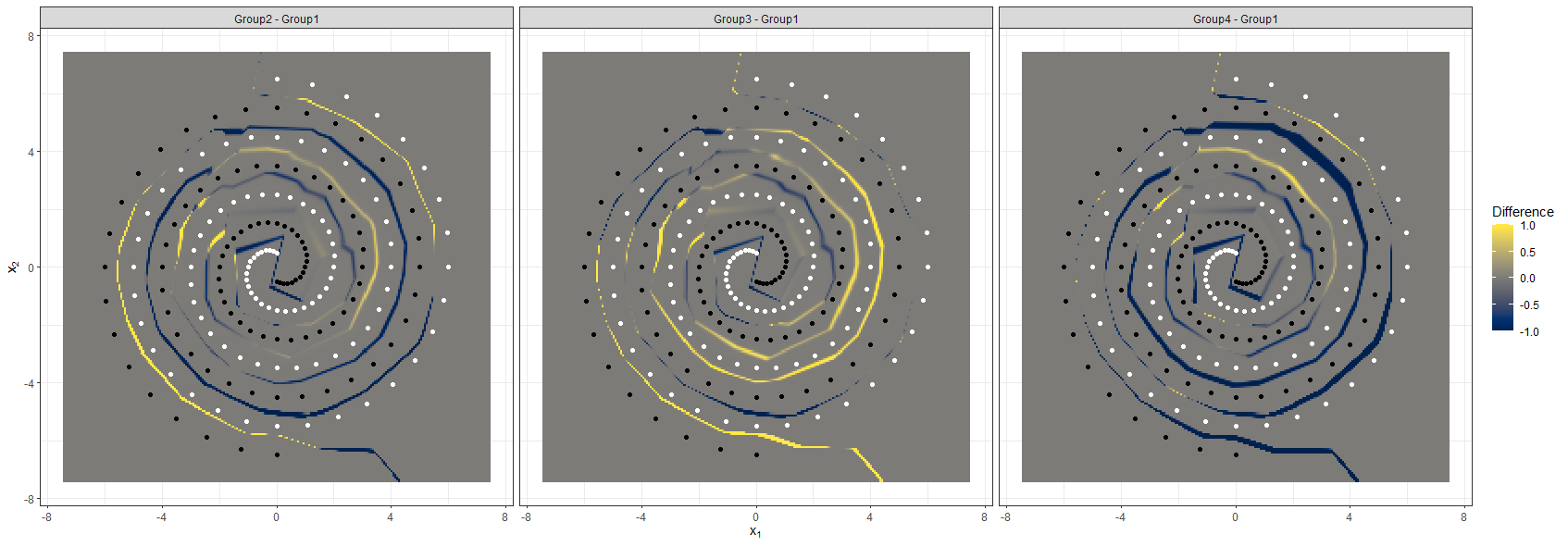}
    \caption{Difference maps of the predicted probability from four groups of weights sampled from the variational posterior of NA-EB. The yellow and dark blue areas in each map indicate the regions classified as different class by two groups of weights, respectively.}
    \label{fig:diff}
\end{figure}

Furthermore, we sample weights from the learned variational posterior distribution multiple times and then compare their classification maps to illustrate the uncertainty in the weights. In particular, we generate 4 groups of weights $\{\wbf^{(1)}, \wbf^{(2)}, \wbf^{(3)}, \wbf^{(4)}\}$ of the $2$-$20$-$20$-$1$ MLP from the variational posterior and display the maps of differences between predicted probability from these groups $\{p(f_{\wbf^{(v)}}(\xbf)=1\mid\xbf)-p(f_{\wbf^{(1)}}(\xbf)=1\mid\xbf)\}_{v=2}^4$ respectively in Figure \ref{fig:diff}. As can be seen, the absolute values of the probability differences tend to be 1 in the middle of two intertwined spirals and 0 elsewhere. This indicates that the variational posterior prefers to be uncertain in the middle of two spirals, as it is reasonable to classify this region as either of two classes.

\subsection{Synthetic 1D Experiments}

In this subsection, we consider synthetic regression problems and demonstrate the predictive distribution obtained by NA-EB in toy datasets. We generate two datasets that consist of different nonlinear functions.
We sample the inputs $x$ uniformly from the interval $[-4, 4]$ and then generate training data from the first curve:
\begin{equation*}
    y=x^3+2x+3+\epsilon,
\end{equation*}
where $\epsilon\sim N(0, 9)$. We also generate sample points from the second curve:
\begin{equation*}
    y=x-0.3\sin (2\pi x)+\epsilon,
\end{equation*}
where the inputs $x$ are uniformly sampled from the interval $[0, 0.6]$ and $\epsilon\sim N(0, 0.02^2)$.
We compare our method with the standard stochastic gradient descent via backpropagation (SGD) \citep{rumelhart1986learning}, with the sparse variational BNN (SVBNN) \citep{bai2020efficient} and with a variational inference (VI) approach \citep{graves2011practical}. The neural network architecture includes two hidden layers with 100 and 50 hidden units for each hidden layer.
Besides, we use a one-hidden-layer MLP with 128 rectified linear units as the deterministic transformation function $G_{\etabf}$. We use 300 training epochs for all methods on the training data. 

\begin{figure}[tb]
     \begin{center}
     \begin{subfigure}[b]{0.7\textwidth}
         \centering
         \includegraphics[scale=0.4]{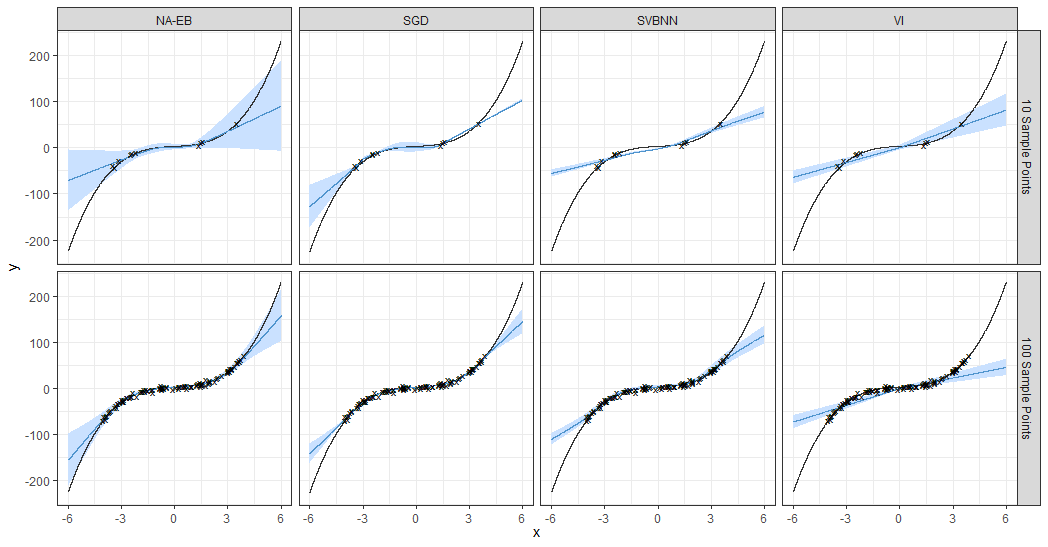}
         \caption{$y=x^3+2x+3+\epsilon$}
         \label{fig:func1}
     \end{subfigure}
     \begin{subfigure}[b]{0.7\textwidth}
         \centering
         \includegraphics[scale=0.4]{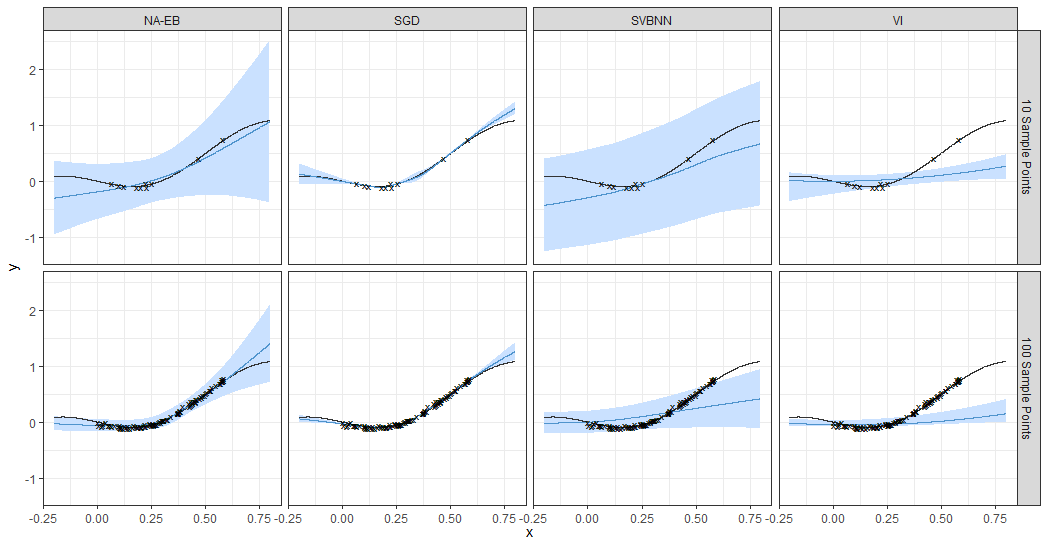}
         \caption{$y=x-0.3\sin (2\pi x)+\epsilon$}
         \label{fig:func2}
     \end{subfigure}
     \end{center}
     \caption{Regression of two toy datasets with credible intervals using 10 sample points and 100 sample points made by NA-EB, SGD, SVBNN and VI.}
     \label{fig:toy}
\end{figure}

Figures \ref{fig:func1} and \ref{fig:func2}  show the predictions and credible intervals generated by each method. Noisy training samples from two datasets are shown as black crosses, with true data-generating functions depicted by black continuous lines and mean predictions shown as dark blue lines. The light blue areas represent credible intervals at $\pm 3$ standard deviations. In the sample interval, NA-EB and SGD give mean predictions that are much closer to the true data-generating function than VI and SVBNN given the same sample size for both datasets. On the other hand, in the regions of the input space where there are no data, NA-EB generates a diverged confidence interval reflecting that there are many possible extrapolations, while SGD fits a specific curve with the variance almost reduced to zero. This indicates that NA-EB prefers to be uncertain when nearby data is unavailable, in contrast to a standard neural network that can be overly confident. Furthermore, as the number of sample points increases, the prediction accuracy of NA-EB improves, and the approximated uncertainty of NA-EB decreases.

\subsection{UCI datasets}

We further evaluate the predictive accuracy and uncertainty quantification of NA-EB on real-world datasets for regression tasks. We adopt the same set of 10 UCI regression benchmark datasets \citep{Dua:2019} as well as the experimental protocol proposed in \cite{hernandez2015probabilistic} and followed by \cite{gal2016dropout, lakshminarayanan2017simple, han2022card}. These datasets are available at \url{https://archive.ics.uci.edu/datasets}.

Each data set is randomly divided into training and test sets with 90\% and 10\% of the data, respectively. The splitting process is repeated 20 times for all datasets except that for Year dataset and Protein dataset, we do the train-test splitting only one and five times, respectively, due to their large sample sizes. Also, we normalize all datasets so that the input features and targets have zero mean and unit variance in the training set, and remove the normalization for evaluation. Furthermore, for both the Kin8nm and Naval dataset, we multiply the response variable by 100 which is the same as \cite{han2022card} to match the scale of other datasets. We compare our method with four BNN methods: probabilistic back-propagation (PBP) \citep{hernandez2015probabilistic}, the dropout uncertainty (MC Dropout) approach \citep{gal2016dropout}, the deep ensembles (Ensembles) Bayesian method \citep{lakshminarayanan2017simple}, and the sparse variational BNN (SVBNN) \citep{bai2020efficient}. We also compare our method with two deep generative model approaches: the generative conditional distribution sampler (GCDS) \citep{zhou2022deep} and classification and regression diffusion models (CARD) \citep{han2022card}.  
To be consistent with the results summarized in \cite{han2022card}, we adopt the same experimental setup: a two-hidden-layer network architecture with 100 and 50 hidden units, the ReLU activation function, the Adam optimizer, the batch size varied case by case and 500 training epochs. Besides, we specify the deterministic transformation function $G_{\etabf}$ by a two-hidden-layer MLP comprising 128 rectified linear units for each layer.

Recent BNN approaches \citep{hernandez2015probabilistic, gal2016dropout, lakshminarayanan2017simple} employ the negative log-likelihood (NLL) to quantify uncertainty. However, NLL computation assumes a Gaussian density for the conditional distributions $p(y \mid\xbf;\wbf)$ for all $\xbf$, which may not hold for real-world datasets. Therefore, we adopt the quantile interval coverage error (QICE) metric proposed by \cite{han2022card} as a measure of uncertainty for regression tasks. Simultaneously, for classification tasks that will be discussed in detail in the subsequent sections, we maintain the use of NLL to quantify uncertainty, aligning with the metrics reported in classification experiments of \cite{han2022card}. As stated in \cite{han2022card}, QICE is defined as the mean absolute error between the proportion of true data contained within each quantile interval of generated samples of size $N$ and the ideal proportion:
\begin{equation*}
    \mathrm{QICE}:=\frac{1}{M}\sum_{m=1}^M\left|r_m-\frac{1}{M}\right|,\quad\mathrm{where}\; r_m=\frac{1}{N}\sum_{n=1}^N\mathds{1}(y_n\geq \hat{y}_n^{\tt{low}_m})\mathds{1}(y_n\leq \hat{y}_n^{\tt{high}_m}),
\end{equation*}
where $\hat{y}_n^{\tt{low}_m}$ and $\hat{y}_n^{\tt{high}_m}$ represent the low and high percentiles of the $m$th quantile interval, respectively, of our choice for the
predicted $y_n$ outputs, for $n=1,\ldots,N$, given the same $\xbf$ input.
Ideally, when the learned conditional distribution perfectly matches the true one, the QICE value should be 0. QICE is an empirical metric that does not impose Gaussian restrictions or any specific parametric form on the conditional distribution. Similar to NLL, it relies on the summary statistics of samples from the learned distribution to empirically evaluate the similarity between the learned and true conditional distributions. To be consistent with the results presented in \cite{han2022card}, we use the same parameter $M=10$ quantile intervals to calculate QICE.

Tables \ref{tb:uci_rmse} and \ref{tb:uci_qice} summarize the average test root mean squared error (RMSE) and QICE with their standard deviation across all splits for each method, respectively. We observe that NA-EB obtains the best results in 8 out of 10 datasets in terms of RMSE and 4 out of 10 for QICE, and it is competitive with the best method for the remaining datasets. It should be noted that although we do not explicitly optimize our model with respect to MSE or QICE, we still outperform existing models trained with these objectives. We also list the number of unknowns for all alternative methods in Table \ref{tb:uci_rmse} for fair comparison. It is important to note that we assume that all methods employ the same base classifier, with the weights of this classifier being of dimension $D$. The key difference between NA-EB and other BNN methods is how we define the prior. In NA-EB, we define the implicit prior by $\wbf = G_{\etabf}(\zbf)$, where $\zbf$ follows a standard Gaussian, and $\etabf$ is the so-called hyperparameter in statistics. A subjective Bayesian will choose a known $\etabf$, but the performance is poor in this setting. Instead, we incorporate the concept of empirical Bayes to estimate these hyperparameters from the data. This will cause another approximately $100D$ hyperparameters in our UCI examples.

\begin{table}
\caption{Test RMSE of UCI regression tasks. Boldface indicates the method with the smallest test RMSE.}
\centering
\label{tb:uci_rmse}
\resizebox{\textwidth}{!}{
\begin{tabular}{lccccccc}
\hline\\[-2.2ex]
\multirow{2}{*}{Dataset} &
\multicolumn{7}{c}{Average Test RMSE and Standard Errors} \\
\cline{2-8}\\[-2.2ex]
& PBP & MC Dropout & Ensembles &SVBNN & GCDS & CARD & NA-EB \\
\hline\\[-2.2ex]
\# Parameters/Hyperparameters & $\sim 2D$ & $\sim D$ & $\sim 5D$ & $\sim 3D$ & $\sim 2D$ & $\sim D$ & $\sim 100D$ \\
\hline\\[-2.2ex]
Boston & 2.89 $\pm$ 0.74 & 3.06 $\pm$ 0.96 & 3.17 $\pm$ 1.05 & 3.17 $\pm$ 0.57 & 2.75 $\pm$ 0.58 & 2.61 $\pm$ 0.63 & {\bf2.18 $\pm$ 0.27}\\
%\hline
Concrete & 5.55 $\pm$ 0.46 & 5.09  $\pm$ 0.60 & 4.91 $\pm$ 0.47 & 5.57 $\pm$ 0.47 & 5.39 $\pm$ 0.55 & 4.77 $\pm$ 0.46 & {\bf3.87 $\pm$ 0.59} \\
%\hline
Energy & 1.58 $\pm$ 0.21 & 1.70 $\pm$ 0.22  & 2.02 $\pm$ 0.32 & 1.92 $\pm$ 0.19 & 0.64 $\pm$ 0.09 & 0.52 $\pm$ 0.07 & {\bf0.39 $\pm$ 0.05}\\
%\hline
Kin8nm & 9.42 $\pm$ 0.29 & 7.10 $\pm$ 0.26 & 8.65 $\pm$ 0.47 & 9.37 $\pm$ 0.26 & 8.88 $\pm$ 0.42 & {\bf6.32 $\pm$ 0.18} & 6.74 $\pm$ 0.13\\
%\hline
Naval & 0.41 $\pm$ 0.08 & 0.08 $\pm$ 0.03 & 0.09 $\pm$ 0.01 & 0.21 $\pm$ 0.05 & 0.14 $\pm$ 0.05 & 0.02 $\pm$ 0.00 & {\bf 0.02 $\pm$ 0.00}\\
%\hline
Power & 4.10 $\pm$ 0.15 & 4.04 $\pm$ 0.14 & 4.02 $\pm$ 0.15 & 4.01 $\pm$ 0.18 & 4.11 $\pm$ 0.16 & 3.93 $\pm$ 0.17 & {\bf 3.52 $\pm$ 0.14}\\
%\hline
Protein & 4.65 $\pm$ 0.02 & 4.16 $\pm$ 0.12 & 4.45 $\pm$ 0.02 & 4.30 $\pm$ 0.05 & 4.50 $\pm$ 0.02 & 3.73 $\pm$ 0.01 & {\bf 3.65 $\pm$ 0.04}\\
%\hline
Wine & 0.64 $\pm$ 0.04 & 0.62 $\pm$ 0.04 & 0.63 $\pm$ 0.04 & 0.62 $\pm$ 0.04 & 0.66 $\pm$ 0.04 & 0.63 $\pm$ 0.04 & {\bf 0.57 $\pm$ 0.04}\\
%\hline
Yacht & 0.88 $\pm$ 0.22 & 0.84 $\pm$ 0.27 & 1.19 $\pm$ 0.49 & 1.10 $\pm$ 0.27 & 0.79 $\pm$ 0.26 & 0.65 $\pm$ 0.25 & {\bf 0.23 $\pm$ 0.05}\\
%\hline
Year & 8.86 $\pm$ NA & 8.77 $\pm$ NA & 8.79 $\pm$ NA & 8.87 $\pm$ NA & 9.20 $\pm$ NA & {\bf 8.70 $\pm$ NA} & 8.76 $\pm$ NA\\
\hline
\end{tabular}
}
\end{table}

\begin{table}
    \caption{Test QICE (in \%) of UCI regression tasks. Boldface indicates the method with the smallest test QICE.}
    \centering
    \label{tb:uci_qice}
    \resizebox{\textwidth}{!}{
    \begin{tabular}{lcccccccc}
    \hline\\[-2.2ex]
    \multirow{2}{*}{Dataset} &&
    \multicolumn{7}{c}{Average Test QICE and Standard Errors} \\
    \cline{3-9}\\[-2.2ex]
    && PBP & MC Dropout & Ensembles &SVBNN & GCDS & CARD & NA-EB \\
    \hline\\[-2.2ex]
    Boston && 3.50 $\pm$ 0.88 & 3.82 $\pm$ 0.82 & 3.37 $\pm$ 0.00 & 4.18 $\pm$ 1.24 & 11.73 $\pm$ 1.05 & 3.45 $\pm$ 0.83 & {\bf 3.36 $\pm$ 0.73}\\
%\hline
    Concrete && 2.52 $\pm$ 0.60 & 4.17  $\pm$ 1.06 & 2.68 $\pm$ 0.64 & 3.50 $\pm$ 0.76 & 10.49 $\pm$ 1.01 & {\bf 2.30 $\pm$ 0.66} & 2.51 $\pm$ 0.66 \\
%\hline
    Energy && 6.54 $\pm$ 0.90 & 5.22 $\pm$ 1.02  & 3.62 $\pm$ 0.58 & 5.49 $\pm$ 0.58 & 7.41 $\pm$ 2.19 & 4.91 $\pm$ 0.94 & {\bf 4.89 $\pm$ 0.82}\\
%\hline
    Kin8nm && 1.31 $\pm$ 0.25 & 1.50 $\pm$ 0.32 & 1.17 $\pm$ 0.22 & 5.87 $\pm$ 0.45 & 7.73 $\pm$ 0.80 & {\bf0.92 $\pm$ 0.25} & 1.38 $\pm$ 0.26\\
%\hline
    Naval && 4.06 $\pm$ 1.25 & 12.50 $\pm$ 1.95 & 6.64 $\pm$ 0.60 & {\bf 0.78 $\pm$ 0.28} & 5.76 $\pm$ 2.25 & 0.80 $\pm$ 0.21 &  3.90 $\pm$ 1.06\\
%\hline
    Power && {\bf 0.82 $\pm$ 0.19} & 1.32 $\pm$ 0.37 & 1.09 $\pm$ 0.26 & 1.07 $\pm$ 0.28 & 1.77 $\pm$ 0.33 & 0.92 $\pm$ 0.21 & 1.00 $\pm$ 0.33\\
%\hline
    Protein && 1.69 $\pm$ 0.09 & 2.82 $\pm$ 0.41 & 2.17 $\pm$ 0.16 & 1.22 $\pm$ 0.21 & 2.33 $\pm$ 0.18 & {\bf 0.71 $\pm$ 0.11} & 0.96 $\pm$ 0.19\\
%\hline
    Wine && 2.22 $\pm$ 0.64 & 2.79 $\pm$ 0.56 & 2.37 $\pm$ 0.63 & 2.55 $\pm$ 0.65 & 3.13 $\pm$ 0.79 & 3.39 $\pm$ 0.69 & {\bf 2.15 $\pm$ 0.71}\\
%\hline
    Yacht && 6.93 $\pm$ 1.74 & 10.33 $\pm$ 1.34 & 7.22 $\pm$ 1.41 & 8.40 $\pm$ 1.70 & 5.01 $\pm$ 1.02 & 8.03 $\pm$ 1.17 & {\bf 4.99 $\pm$ 1.42}\\
%\hline
    Year && 2.96 $\pm$ NA & 2.43 $\pm$ NA & 2.56 $\pm$ NA & 1.64 $\pm$ NA & 1.61 $\pm$ NA & {\bf 0.53 $\pm$ NA} & 1.52 $\pm$ NA\\
\hline
    \end{tabular}
    }
\end{table}

\begin{table}
\caption{Test RMSE of 4 UCI regression datasets based on NA-EB using different latent dimensions. Boldface indicates the latent
dimension with the smallest test RMSE.}
\centering
\label{tb:uci_latentsize}
\resizebox{\textwidth}{!}{
\begin{tabular}{lccccccc}
\hline \\[-2.2ex]
\multirow{3}{*}{Dataset}&\multirow{3}{*}{{\tabincell{c}{Sample\\ Size}}}&\multirow{3}{*}{{\tabincell{c}{Feature\\ Dimension}}}&&\multicolumn{4}{c}{Average Test RMSE and Standard Errors} \\
\cline{5-8}\\[-2.2ex]
&&&&\multicolumn{4}{c}{Latent Dimension} \\
\cline{5-8}\\[-2.2ex]
&&&& 20 & 60 & 100 & 140 \\
\hline\\[-2.2ex]
Yacht  &308 & 6&& 0.25 $\pm$ 0.09 & {\bf 0.23 $\pm$ 0.05} & 0.23 $\pm$ 0.07 & 0.24 $\pm$ 0.08 \\
%\hline
Wine  &1599 & 11 && 0.59  $\pm$ 0.04 & 0.60 $\pm$ 0.04 & {\bf 0.57 $\pm$ 0.04} & 0.58 $\pm$ 0.04 \\
%\hline
Power  & 9568 & 4 && 3.56 $\pm$ 0.15  & {\bf 3.52 $\pm$ 0.14} & 3.52 $\pm$ 0.18 & 3.59 $\pm$ 0.16\\
%\hline
Protein  & 45730 & 9 && 3.73 $\pm$ 0.03  & {\bf 3.65 $\pm$ 0.04} & 3.72 $\pm$ 0.03 & 3.72 $\pm$ 0.02\\ \hline
\end{tabular}
}
\end{table}

Furthermore, we conduct experiments on UCI regression datasets to demonstrate that NA-EB is robust to the dimension $r$ of the latent variable $\zbf$. We select four datasets with various sample sizes and feature dimensions and consider different latent dimensions in an appropriate range. 
As shown in Table \ref{tb:uci_latentsize}, the average test RMSE changes insignificantly with different latent dimensions. With our network architecture,
the empirical results show that NA-EB obtains the best result when the latent dimension is approximately 10 times the feature dimension and its predictive performance is robust within a suitable range of the latent dimension. 

We further investigate the impact of different parameterizations of the deterministic transformation function $G_{\etabf}$ on the model performance. As described previously, we employ a fully connected feed-forward neural network with $r$ input nodes and $D$ output nodes as $G_{\etabf}$. We explore different parameterizations of $G_{\etabf}$ by changing the number of hidden layers and the number of hidden units within a layer. The test RMSE results using different architectures of $G_{\etabf}$ obtained from 20 runs on the four UCI datasets are summarized in the first three columns of Table \ref{tb:uci_transformation}.
Our findings indicate that the $r$-$128$-$128$-$D$ MLP architecture consistently achieves the lowest test RMSE results across all select UCI datasets. This suggests that higher complexity of this architecture allows for a more versatile and flexible representation of $G_{\etabf}$, resulting in better predictive accuracy. However, it's worth noting that the test RMSE results are quite similar between the two-hidden-layer MLPs transformations, implying the robust predictive performance of NA-EB within a reasonable range of parameterization complexities of $G_{\etabf}$.

    \begin{table}
    \centering
    \caption{RMSE of 4 UCI regression datasets based on NA-EB using different parameterizations of the deterministic transformation function $G_{\etabf}$ with or without Jacobian regularization. Boldface indicates the architecture of $G_{\etabf}$ with the smallest RMSE.}
    \label{tb:uci_transformation}
    \resizebox{\textwidth}{!}{
    \begin{tabular}{lccccc}
     \hline \\[-2.2ex]
\multirow{3}{*}{Dataset}&&\multicolumn{4}{c}{Average Test RMSE and Standard Errors} \\
\cline{3-6}\\[-2.2ex]
&&\multicolumn{4}{c}{Architecture of Transformation Function $G_{\etabf}$} \\
\cline{3-6}\\[-2.2ex]
&& $r$-128-$D$ MLP & $r$-64-64-$D$ MLP & $r$-128-128-$D$ MLP & {\tabincell{c}{$r$-128-128-$D$ MLP\\Jacobian Regularization}} \\
\hline\\[-2.2ex]
Yacht && 0.45 $\pm$ 0.12 & 0.24 $\pm$ 0.09 & {\bf 0.23 $\pm$ 0.05} & 0.33 $\pm$ 0.12 \\
%\hline
Wine && 0.60 $\pm$ 0.04 & 0.60 $\pm$ 0.04 & {\bf 0.57 $\pm$ 0.04} & 0.60 $\pm$ 0.05 \\
%\hline
Power && 3.70 $\pm$ 0.19  & 3.62 $\pm$ 0.21 & {\bf 3.52 $\pm$ 0.14} & 3.54 $\pm$ 0.17\\
%\hline
Protein && 3.75 $\pm$ 0.06  & 3.68 $\pm$ 0.06 & {\bf 3.65 $\pm$ 0.04} & 3.72 $\pm$ 0.05\\ \hline
\end{tabular}}
    \end{table}

In addition, as the asymptotic analysis of NA-EB in Section \ref{sec:theory} suggests, the square of the Frobenius norm of the input-output Jacobian $\|J(\zbf)\|_F^2=\|\partial G_{\etabf}/\partial \zbf\|^2_F$
shall be constrained. To effectively incorporate Jacobian regularization into the training process, we optimize a joint loss function that aligns with (5) from \cite{hoffman2019robust}:
    \begin{equation*}
        \mathcal{L}_{\tt{joint}}(\alphabf,\etabf)=\mathcal{L}(\alphabf,\etabf)+\frac{\lambda_{\tt{JR}}}{2}\|J(\zbf)\|_F^2,
    \end{equation*}
where $\mathcal{L}(\alphabf,\etabf)$ is the ELBO as specified in \eqref{eq:elbo} of our paper and $\lambda_{\tt{JR}}$ is a hyperparameter that controls the relative significance of the Jacobian regularizer. \cite{hoffman2019robust} provides a computationally efficient implementation of Jacobian regularization $J(\zbf)$. We follow its PyTorch implementation available at \url{https://github.com/facebookresearch/jacobian_regularizer} and set values of the hyperparameter $\lambda_{\tt{JR}}=0.1$ by default. The test RMSE incorporating Jacobian regularization on the four datasets is reported in the last column of Table \ref{tb:uci_transformation}. Interestingly, the numerical results reveal that both approaches based on the same transformation function architecture yield similar outcomes. Consequently, we opt to proceed without Jacobian regularization to reduce computational costs. 

\subsection{MNIST dataset}

\begin{table}
\caption{Test error rates (in \%) of MNIST classification tasks with different base classifiers. Boldface indicates the method with the smallest test error.}
\label{tb:mnist_testerror}
\centering
%\resizebox{\textwidth}{!}{
\begin{tabular}{lccccc}
\hline \\[-2.2ex]
\multirow{2}{*}{Base Classifier}&& \multicolumn{4}{c}{Test Error} \\
\cline{3-6}\\[-2.2ex]
&& SGD & VBNN & SVBNN & NA-EB \\
\hline\\[-2.2ex]
400-400 MLP && 1.83 & 1.36 & 1.40 & {\bf 1.24}\\
%\hline
800-800 MLP && 1.84 & 1.34 & 1.37 & {\bf 1.22}\\
%\hline
1200-1200 MLP && 1.88 & 1.32 & 1.36 & {\bf 1.21} \\
%\hline
LeNet-5 && 1.14 & 31.01 & 4.08 & {\bf 0.91} \\
\hline
\end{tabular}  
%}
\end{table}

We demonstrate our experimental results on the MNIST digits dataset, consisting of 60,000 training and 10,000 test pixel images of size 28 by 28. Similarly to \cite{lakshminarayanan2017simple}, our motivation for classification is to improve the performance of a base classifier in terms of accuracy through a generative model on the benchmark datasets instead of achieving the state-of-the-art predictive performance, as the latter is strongly related to network architecture design. Here, we shall focus on improving the performance of an ordinary feed-forward neural network of various sizes without using convolutions, distortions, etc. 
Besides, we also provide empirical results for NA-EB based on the LeNet-5 \citep{lecun1998gradient} architecture involving convolutional networks to highlight its efficacy in the robustness of the choice of base classifier architectures.
Specifically, we repeat the protocol and the network architecture of \cite{blundell2015weight} that an MLP of two hidden layers of $h$ rectified linear units and a softmax output layer with 10 units, denoted by the $h$-$h$ MLP $(h=400, 800, 1200)$, is trained with the Adam optimizer using a learning rate of $10^{-4}$ and minibatches of size 128 for 600 training epochs. On the other hand, we use the Adam optimizer, set the learning rate to $10^{-3}$, employ minibatches of size 32, and conduct training over 100 epochs in the experiment based on the LeNet-5 architecture. In both experiments utilizing different base classifiers, we use a feed-forward neural network with one hidden layer of 128 rectified linear units as the deterministic transformation $G_{\etabf}$ for NA-EB. Following \cite{blundell2015weight}, we preprocess the pixels by dividing the values by 126.

\begin{table}
    \caption{Test NLL of MNIST classification tasks with different base classifiers. Boldface indicates the method with the smallest test NLL.}
    \centering
    \label{tb:mnist_nll}
    \begin{tabular}{lcccc}
    \hline \\[-2.2ex]
\multirow{2}{*}{Base Classifier}&& \multicolumn{3}{c}{Test NLL} \\
\cline{3-5}\\[-2.2ex]
&& VBNN & SVBNN & NA-EB \\
\hline\\[-2.2ex]
400-400 MLP && 0.118 & 0.144 & {\bf 0.057}\\
LeNet-5 && 0.877 & 0.102 & {\bf 0.034}\\
\hline
\end{tabular} 
\end{table}

\begin{table}
    \caption{The CPU time (in s) of MNIST classification tasks with different base classifiers. Boldface indicates the method with the shortest CPU time.}
    \centering
    \label{tb:mnist_time}
    \begin{tabular}{lcccccc}
 \hline \\[-2.2ex]
\multirow{2}{*}{Base Classifier}&\multirow{2}{*}{Epochs}&& \multicolumn{4}{c}{CPU Time} \\
\cline{4-7}\\[-2.2ex]
& && SGD & VBNN & SVBNN & NA-EB \\
\hline\\[-2.2ex]
400-400 MLP & 600 && {\bf 8366.89} & 47201.29 & 352780.21 & 130016.78\\
LeNet-5 & 100 && {\bf 2328.70} & 10088.73 & 44615.34 & 13957.58\\
\hline
\end{tabular}  
\end{table}

As summarized in Table \ref{tb:mnist_testerror}, we compare the test error of our proposed method for different base classifier architectures with the performance of SGD \citep{simard2003best}, sparse variational BNN (SVBNN) \citep{bai2020efficient} and variational BNN (VBNN) reported in \cite{blundell2015weight}. Here, the inclusion of SGD serves as a baseline reference to assess predictive accuracy of NA-EB and other variational BNN methods. Meanwhile, the learning curves on the test set for these methods based on a network with two layers of 1200 rectified linear units and on the LeNet-5 architecture are presented in Figures \ref{fig:mnist_mlp} and \ref{fig:mnist_lenet5}, respectively. In addition, the test NLL results based on the two base classifiers for NA-EB, VBNN \citep{blundell2015weight} and SVBNN \citep{bai2020efficient} are summarized in Table \ref{tb:mnist_nll}. We further report the CPU time cost on a 2.30 GHz computer for each method in Table \ref{tb:mnist_time} to compare the computational speed of NA-EB against other variational BNN methods. All these evaluations, carried out using different base classifiers for each method, adhere to the same training setup, respectively, as previously described.  
From the results presented in Tables \ref{tb:mnist_testerror}, \ref{tb:mnist_nll}, and \ref{tb:mnist_time}, it is evident that regardless of the choice of the base classifier, NA-EB consistently outperforms other BNN methods in terms of predictive accuracy and uncertainty quantification. Additionally, NA-EB ranks the second place among the BNN methods in terms of computation time as it requires more parameters than VBNN to characterize the implicit prior distribution. However, it's noteworthy that when LeNet-5 is used as the base classifier, VBNN tends to converge to a local minimum across trials, considering all hyperparameter options for the mixing coefficient $\pi$ and variances $\sigma_1^2$, $\sigma_2^2$ listed in Section 5.1 of \cite{blundell2015weight}, as shown in Figure \ref{fig:mnist_lenet5}. On the contrary, NA-EB achieves the lowest test error with reasonable computational cost. Moreover, as can be seen from Figure \ref{fig:mnist_mlp}, NA-EB converges the fastest and eventually obtains the lowest test error after 600 epochs when the MLP is employed as the base classifier. SVBNN and VBNN converge to larger test errors at similar rates.

\begin{figure}[t]
    \centering
    \includegraphics[scale=0.5]{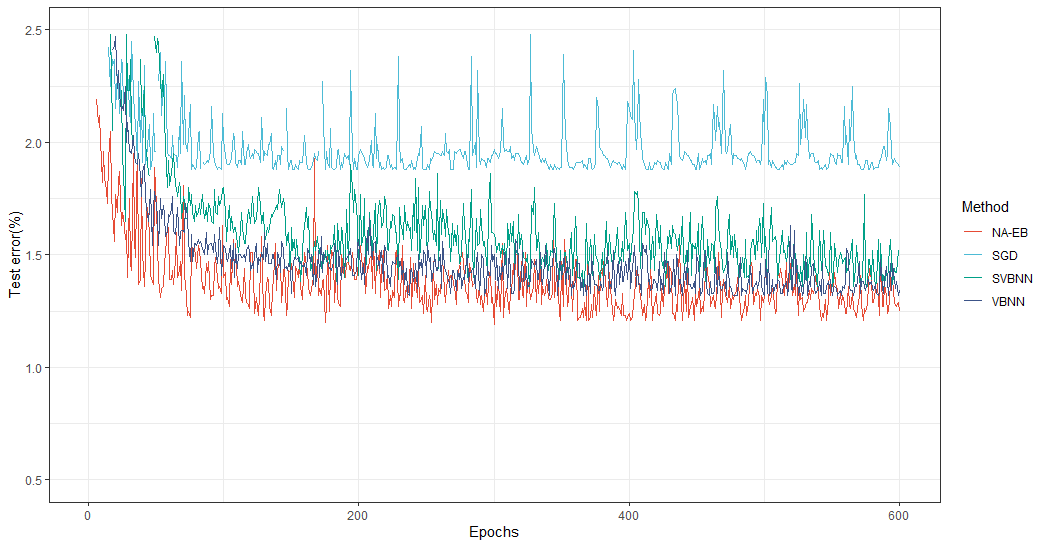}\par
    \caption{Test error on MNIST based on the 1200-1200 MLP classifier as training progresses: NA-EB, SGD, SVBNN and VBNN.}
    \label{fig:mnist_mlp}
\end{figure}

\begin{figure}[t]
    \centering
    \includegraphics[scale=0.5]{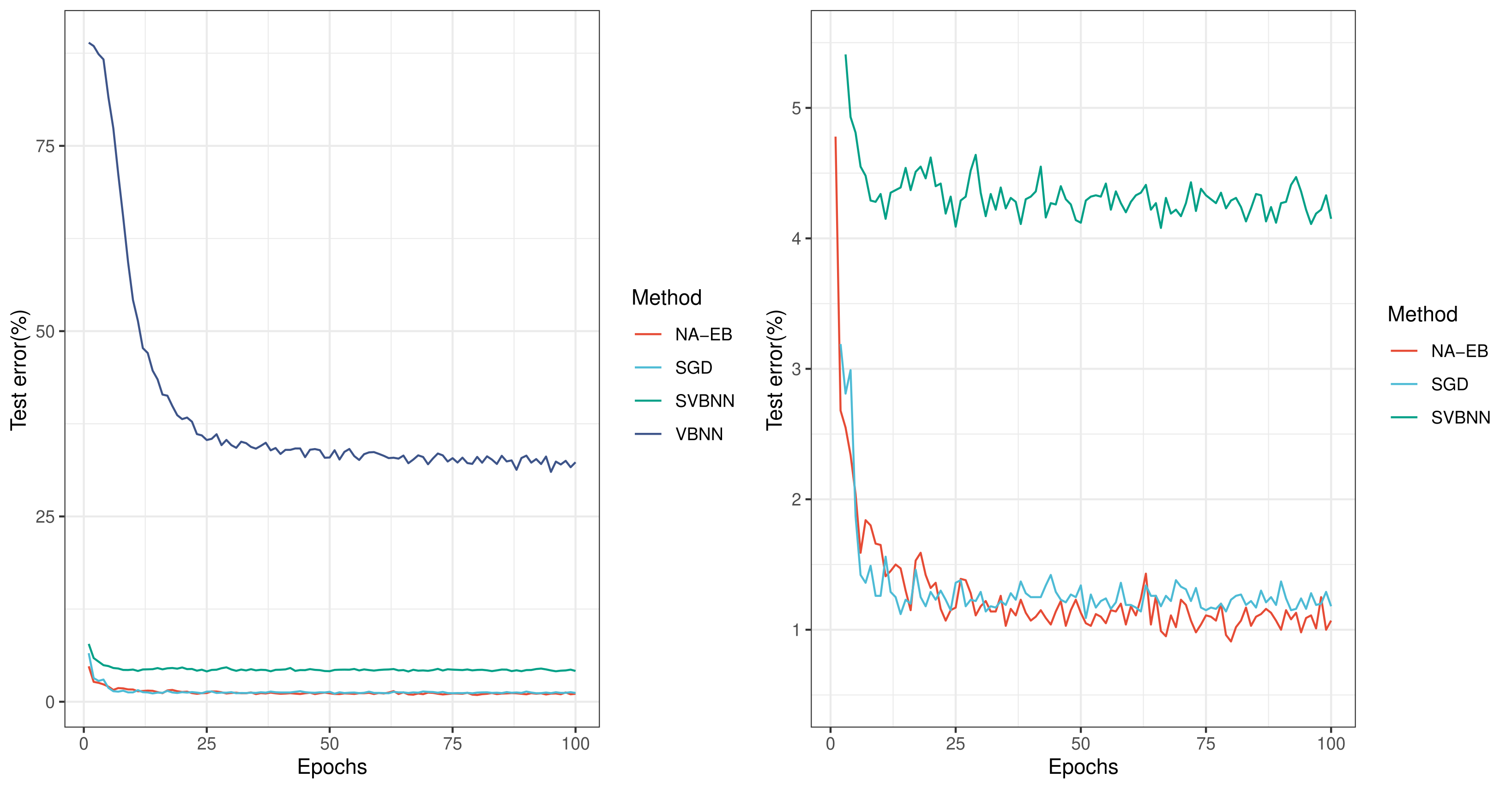}
    \caption{Left: Test error on MNIST based on the LeNet-5 classifier as training progresses: NA-EB, SGD, SVBNN and VBNN. Right: An enlarged figure of the left panel for NA-EB, SGD and SVBNN.}
    \label{fig:mnist_lenet5}
\end{figure}

We further investigate the effectiveness of our approach in scenarios with weaker data signals. Specifically, we assess the performance of our method using three artificial datasets collectively referred to as n-MNIST (noisy MNIST) \citep{basu2017learning}. These datasets are created by introducing (1) additive white Gaussian noise (AWGN), (2) motion blur, and (3) a combination of AWGN and reduced contrast to the MNIST dataset.
In the n-MNIST dataset with AWGN, we employ Gaussian noise with a signal-to-noise ratio of 9.5, simulating significant background clutter. For the n-MNIST dataset with motion blur, we apply a motion blur filter to emulate the linear motion of a camera by 5 pixels at an angle of 15 degrees counterclockwise. In the n-MNIST dataset with reduced contrast and AWGN, we scale down the contrast range to half and apply AWGN with a signal-to-noise ratio of 12. This emulates background clutter along with a significant change in lighting conditions.
We repeat the training protocol for MNIST classification tasks and summarize the test performance of our method, along with SGD \citep{simard2003best}, VBNN \citep{blundell2015weight}, and SVBNN \citep{bai2020efficient}. All these methods are based on a 400-400 MLP base classifier. The results for the three noisy MNIST datasets are presented in Table \ref{tb:mnist_noisy}. It is evident that our method improves the test accuracy and achieves the lowest test NLL regardless of types of noise. 
Additionally, we present four samples from the noisy MNIST dataset with motion blur in Figure \ref{fig:noisy_mnist} to highlight the significance of predicting uncertainty. In the left two panels, the prediction intervals for the corresponding true labels are narrow and centered around 100\%, demonstrating that NA-EB is highly confident about its predictions. Conversely, NA-EB provides relatively wide prediction intervals for the right two figures, particularly in the rightmost panel. Specifically, the top two most likely labels assigned by NA-EB have overlapping and broad prediction intervals, signifying uncertainty in its predictions. In contrast, the standard DNN method assigns an overly confident high prediction probability to an incorrect label in the right two panels.

\begin{table}
    \caption{Test error rates (in \%) and NLL of classification tasks on different noisy MNIST datasets. Boldface indicates the method with the smallest test error or the smallest NLL.}
    \centering
    \label{tb:mnist_noisy}
    \resizebox{\textwidth}{!}{
    \begin{tabular}{lcccccccc}
 \hline \\[-2.2ex]
\multirow{2}{*}{Dataset}&&SGD&\multicolumn{2}{c}{VBNN}&\multicolumn{2}{c}{SVBNN}&\multicolumn{2}{c}{NA-EB} \\
\cline{3-9}\\[-2.2ex]
&&Test Error& Test Error & Test NLL& Test Error & Test NLL& Test Error & Test NLL\\
\hline\\[-2.2ex]
n-MNIST with AWGN && 4.98 & 4.79 & 0.17 & 5.87 & 0.29 & {\bf 4.75} & {\bf 0.13}\\
%\hline
n-MNIST with Motion Blur && 1.99 & 1.96& 0.11 &1.65& 0.28 & {\bf 1.47} & {\bf 0.07}\\
%\hline
n-MNIST with reduced contrast and AWGN  && 8.06 & 7.23 & 0.23 & 9.84 & 1.27 & {\bf 7.15} & {\bf 0.18} \\\hline
\end{tabular}
    }
\end{table}

\subsection{CIFAR-10 dataset}

We evaluate the performance of NA-EB on the CIFAR-10 dataset. This dataset comprises 50,000 training images and 10,000 test images, each sized at 32 $\times$ 32 pixels and containing 3 color channels, evenly distributed across 10 distinct classes. We employ ResNet-18 \citep{he2016deep}, a larger CNN architecture, as the base classifier and optimize the loss function for 200 epochs using the default SGD optimizer with a learning rate of 0.01 and momentum of 0.9. We examine the predictive accuracy and uncertainty quantification of NA-EB in terms of test error and test NLL, respectively. We compare NA-EB with other BNN methods, as reported by \cite{han2022card} in Table \ref{tb:cifar10}. Specifically, CMV-MF-VI, CM-MF-VI and CV-MF-VI are variants of a recent BNN work \citep{tomczak2021collapsed} that employs a novel and tighter ELBO to conduct variational inference in BNNs. Our observations reveal that NA-EB achieves enhanced test accuracy, primarily due to its highly flexible implicit prior. Furthermore, our NLL result, while not the primary objective of NA-EB, is competitive with some of the best performing methods in this regard. 

To gain a more tangible grasp of the significance of predictive uncertainty, we juxtapose the predicted probabilities obtained from the ResNet-18 classifier with our 95\% Bayesian credible intervals for four test images sourced from the CIFAR-10 dataset. In the case of the deer image, the standard DNN method assigns a high probability to the wrong class without considering uncertainty. Conversely, NA-EB yields prediction intervals of similar width and overlapping for the two potential classes, suggesting uncertainty in its predictions. Moreover, for the frog image, NA-EB provides a relatively narrow prediction interval for the correct class, while the DNN method assigns a high probability estimate to the wrong class. This highlights that NA-EB not only quantifies predictive uncertainty but also improves classification accuracy by implicitly specifying a correct prior for classifier weights. From the examples of the noisy MNIST dataset and CIFAR-10 dataset, it is evident that the widths of the prediction intervals indicate the level of certainty or uncertainty NA-EB holds regarding the accuracy of its predictions, whereas the point estimate of likelihood produced by the standard DNN method lacks uncertainty information. 
It is reasonable to conclude that the superior performance of NA-EB, both in terms of predictive accuracy and uncertainty quantification, can be extended to other classification tasks within the medical domain, such as automated diagnostics on medical images, where uncertainty estimation is of particular importance.
    
\begin{table}
    \caption{Test error (in \%) and NLL of CIFAR-10 classification tasks. Boldface indicates the method with the smallest test error or the smallest NLL.}
    \centering
    \label{tb:cifar10}
    \resizebox{\textwidth}{!}{
    \begin{tabular}{lccccccccc}
 \hline \\[-2.2ex]
Metrics && CMV-MF-VI & CM-MF-VI & CV-MF-VI & MF-VI & MC Dropout & MAP & CARD & NA-EB \\
\hline\\[-2.2ex]
Test Error && 13.75 & 13.34 & 20.22 & 22.92 & 16.36 & 15.31 & 9.07 & {\bf 8.40} \\
Test NLL && 0.41 & {\bf 0.39} & 0.59 & 0.68 & 0.49 &  0.93 & 0.46 & 0.49\\
\hline
\end{tabular}  
    }
    \end{table}

\section{Conclusion}\label{sec:conclusion}

In this paper, we propose a general implicit generative prior for Bayesian inference. In particular, we develop the NA-EB framework to address the unsolved challenges in Bayesian DNNs. Meaningful priors for neural network weights are defined implicitly through a deep neural network transformation of a low-dimensional known distribution. The posterior for complex models with a large data volume can be efficiently computed using the proposed stochastic gradient method. We rigorously analyze the theoretical property of the algorithm and demonstrate that NA-EB outperforms many existing methods in a variety of numerical examples. Furthermore, NA-EB also takes advantage of recent developments in computational techniques. Our programming code for NA-EB is implemented using the PyTorch machine learning framework \citep{paszke2019pytorch}, which allows us to construct complex neural networks and compute the gradient of functions using the automatic differentiation technique. Additionally, the NA-EB code can be easily run on modern computational hardware, such as graphics processing units, significantly reducing the computing time for large-scale datasets.

NA-EB has adopted the empirical Bayes framework to estimate the hyperparameter $\etabf$, which brings an additional unknown quantity into the model, especially for very large models. We will explore techniques to reduce the unknown-quantity count in NA-EB while maintaining its advantages in uncertainty quantification. This could involve investigating model compression techniques or exploring alternative network architectures.

\section{Acknowledgments}

The authors would like to thank the anonymous referees, an Associate
Editor and the Editor for their constructive comments that improved the
quality of this paper. The second author was supported in part by NSF Grant SES-2316428.

\newpage 

\appendix

\begin{center}
{\Large \bf   Appendix}
\end{center}

\section{Algorithm Details}

We provide more details for Algorithm \ref{algorithm}.  First, all variational parameters $m_1, \ldots,m_r$, $\rho_1,\ldots,\rho_r$ are initialized by i.i.d. uniform $Unif[-1, 1]$. All components of $\etabf$ in $G_{\etabf}$ are initialized by a uniform distribution following the default initialization in PyTorch 1.9.0. Then, in each iteration, we compute the Monte Carlo estimates of the gradients in \eqref{eq:grad} by generating $H$ samples from the current variational posterior distribution. In terms of calculating ${\nabla_{\rho^{(t)}_{j}}\mathcal{L}}(\alphabf_t,\etabf_t)$, for $j=1,\ldots,r$, we apply the chain rule, $\nabla_{\rho_j}\log \{q_{\alphabf}(\zbf)\}=[\nabla_{\varrho_j}\log \{q_{\alphabf}(\zbf)\}|_{\varrho_j=\log (1+e^{\rho_j})}]e^{\rho_j}(1+e^{\rho_j})^{-1}$, where the term in the first curly brackets is the derivative of ${\log}[q_{\alpha}(z)]$ with respect to $\varrho_j$ evaluated at the point $\varrho_j={\log}(1+e^{\rho_j})$ and the rest is the derivative of $\varrho_j$ with respect to $\rho_j$. For the next step, we update the estimates of next iteration adopting the gradient descent method whereas the chosen learning rate sequence $\{\beta^{(t)}\}$ follows the Robbins--Monro conditions for convergence \citep{ranganath2014black}. The stopping criteria for Algorithm 1 is set to be a maximum iteration step.

We further investigate an additional improvement on the above algorithm. In the era of big data, our algorithm can also adopt minibatch optimization. The training data $\mathcal{D}_n$ is randomly split into a partition of $B$ subsets $\mathcal{D}_1, \ldots, \mathcal{D}_B$, and each gradient is averaged over one subset of the data iteratively \citep{graves2011practical}. For example, one can partition $\mathcal{D}_n$ based on $\tau=(\tau_1, \ldots, \tau_B)\in[0, 1]^B$ with $\sum_{b=1}^B\tau_b=1$. The objective function in \eqref{eq:loss} is changed to $$\mathcal{L}_b(\alphabf,\etabf) =-\tau_b\textsc{KL}(q_{\alphabf}~\|~\pi_0) +\mathbb{E}_{\zbf\sim q_{\alphabf}(\zbf)}[{\log} \{L(\mathcal{D}_b; {G_{\etabf}(\zbf)})\}].$$ This is reasonable since $\mathbb{E}_B[\sum_{b=1}^B \mathcal{L}_b(\alphabf,\etabf)] = \mathcal{L}(\alphabf,\etabf)$, where the expectation is taken over the random partitioning of minibatches. In particular, we adopt the scheme $\tau_b=2^{B-b}/(2^B-1)$ in our algorithm which puts more weights on the first few minibatches as little data are seen.

As we obtain the estimated DNN transformation function $G_{\hat{\etabf}}$ and the variational posterior distribution $q_{\hat{\alphabf}}(\zbf)$ from Algorithm 1, the predictive distribution can be obtained by multiple forward passes on the network while sampling from the weight posteriors. Given a new input $\xbf^{*}$, the predictive distribution $p(y^{*} \mid \xbf^{*},\mathcal{D}_n)$ can be estimated by 
\begin{equation}\label{eq:pred_dist}
    \hat p(y^{*} \mid \xbf^{*},\mathcal{D}_n)=\frac{1}{M}\sum_{m=1}^M p(y^{*} \mid \xbf^{*}, {G_{\hat{\etabf}}(\tilde{\zbf}_{m})})
\end{equation}
with $\tilde{\zbf}_{m}$ sampled from $ q_{\hat{\alphabf}}(\zbf)$ independently, for $m=1, \ldots, M$, and $M$ being the number of Monte Carlo samples.

\section{Instructions for utilizing code files}
We provide source code of numerical experiments in Section \ref{sec:numerical} at \url{https://github.com/yjliu7/Neural-Adaptive-Empirical-Bayes}. The implementation of NA-EB relies on the PyTorch deep learning framework. In terms of data, we use simulated data in \textit{two\_spiral.py} for the two-spiral problem and utilize the MNIST dataset provided by the \textit{torchvision} package. The ten UCI datasets can be downloaded from \url{https://archive.ics.uci.edu/datasets}. Regarding the model, we specify the deterministic transformation function $G_{\etabf}$ by a fully connected feedforward neural network in \textit{deterministic\_transformation.py}. We provide a Gaussian variational sampler for the latent variable $\zbf$ in \textit{gaussian\_variational.py}. The base classifier $f_{\wbf}$ and the loss function $\mathcal{L}(\alphabf, \etabf)$ are given in \textit{Bayes.py}. Furthermore, we include detailed explanations and comments within each Python file for reference.

\section{Assumptions for Theoretical Analysis}

To take advantage of the representation power of the DNN, we
approximate $f_0$ by a DNN $f_{\wbf}$, defined in the
following way:
\begin{align}
 & f_{\wbf}=A_{d}\circ\sigma\circ A_{d-1}\circ\cdots\circ\sigma\circ A_{1},\label{eq:dnn_formula}\\
 & A_{1}\in\mathcal{A}_{p}^{N},\ A_{2},\ldots,A_{d-1}\in\mathcal{A}_{N}^{N},\ A_{d}\in\mathcal{A}_{N}^{1},\label{eq:affine_transformation}\\
 & \sigma((x_{1},\ldots,x_{N})^{\T})=(\sigma(x_{1}),\ldots,\sigma(x_{N}))^{\T},\nonumber
\end{align}
where $N$ and $d$ are the width and depth of the DNN, respectively,
$\mathcal{A}_{n_{1}}^{n_{2}}$ stands for affine transformations $\mathbb{R}^{n_{1}}\mapsto\mathbb{R}^{n_{2}}$
of the form $\xbf_{n_{1}\times1}\mapsto\wbf_{n_{2}\times n_{1}}\xbf+\bbf_{n_{2}\times1}$,
$\sigma$ is the nonlinear activation function, and $\wbf\in {\mathscr W}\subseteq \mathbb R^D$ represents all the parameters in the DNN. In \eqref{eq:affine_transformation}
a constant width $N$ is used in each layer merely for simplicity of notations; in practice the dimensions of $A_{i}$ can vary from layer to layer. Other popular modes such as CNN \citep{krizhevsky2017imagenet} and ResNet \citep{he2016deep} are special cases of \eqref{eq:dnn_formula}. DNN models allow for great freedom in selecting the activation function. Popular choices include the rectified linear unit \citep{glorot2011deep}, $\sigma(x)=\max\{x,0\}$, and its smoothed version $\sigma(x)=\log \{1+\exp(x)\}\approx\max\{x,0\}$. 

Due to the universal approximation property of DNNs \citep{hornik89}, for any $\epsilon>0$, there exists a DNN $f_{\Upsilonbf}$ of the form \eqref{eq:dnn_formula} with weights $\Upsilonbf\in \mathscr{W}\subset \mathbb R^{D_n}$ such that $\|f_0-f_{\Upsilonbf}\|_{\infty}\leq\varepsilon/4$, where $\|\cdot\|_{\infty}$ is the $L_{\infty}$ norm of a function defined as $\|f\|_{\infty}=\sup_{\xbf\in\mathcal{X}}|f(\xbf)|$. In addition, there exists an $\sbf=(s_1,\ldots,s_{r_n})^{\T}\in\mathscr{Z}\subset \mathbb R^{r_n}$ such that $\Upsilonbf=G_{\etabf}(\sbf)$. A neighborhood of $\sbf$ is defined as 
\begin{equation}\label{eq:p_neighbour}
    \mathcal{P}_{\varepsilon}=\left\{\zbf: |z_j-s_{j}|<\frac{\sqrt{\varepsilon}}{8\sqrt{r_n n^{1-u}}\{D_n+(p+1)\|\Upsilonbf\|_1\}}, j=1,\ldots,r_n\right\},
\end{equation}
where $\|\cdot\|_1$ is the $L_1$ norm of a vector defined as $\|\Upsilonbf\|_1 = \sum_{k=1}^{D_n}|\Upsilon_k|$ and $u$ is given in Theorems \ref{thm:thm4.1} and \ref{thm:thm4.2} measuring the order of magnitude of $D_n$. Let $\{\mathcal{F}_n\}$ denote a sequence of sieves defined as \begin{equation}\label{eq:f_sieves}
    \mathcal{F}_n=\{\zbf:|z_j|\leq \Tilde{C}_n, j=1,\ldots,r_n\},
\end{equation}
where $\Tilde{C}_n$ is a parameter related to $n$. In the following, we write $G_{\etabf}=(G_{\etabf 1},G_{\etabf 2},\ldots,G_{\etabf D_n})^{\T}$ where $G_{\etabf k}: \mathscr{Z}\mapsto\mathbb{R}$ is the deterministic transformation function such that $w_k=G_{\etabf k}(\zbf)$, for $k=1,\ldots,D_n$. %Furthermore, we rewrite $\ell_{\wbf}=\ell_{G_{\etabf}(\zbf)}$.

In Theorem \ref{thm:thm4.1}, we establish that under some conditions, the variational posterior concentrates in $\varepsilon$-small Hellinger neighborhoods of the true density $\ell_0$. The conditions on the deterministic transformation $G_{\etabf}$ and prior parameters are summarized below. 

\begin{assumption}\label{assump:assump1a}
The deterministic function $G_{\etabf}$ is twice differentiable at $\zbf\in\mathscr{Z}$.
For $\zbf\in\mathcal{F}_n\cup\mathcal{P}_{\varepsilon}$ with $\Tilde{C}_n = e^{n^b/r_n}$ where $b$ is a constant, the first-order derivative satisfies 
$\|\partial G_{\etabf}/\partial \zbf\|_F^2
=o(\varepsilon n^{1-u})$ where $u$ is given in Theorem \ref{thm:thm4.1} and $\|\cdot\|_F$ denotes the Frobenius norm of a matrix $\mathbf{A}$ defined as $\|\mathbf{A}\|_F=(\sum_i\sum_j|a_{ij}|^2)^{1/2}$. The second-order derivative satisfies 
$\{\sum_{k=1}^{D_n}(\partial^2 G_{\etabf k}/\partial z_j^2)^2\}^{1/2}=o(\sqrt{D_n}\sum_{k=1}^{D_n}(\partial G_{\etabf k}/\partial z_j)^2)$ at $\sbf$, for $j=1,\ldots,r_n$.
\end{assumption}

\begin{assumption}\label{assump:assump1b}
The prior parameters in \eqref{eq:prior} satisfy the assumption \eqref{eq:var} and $\|\mubf\|_2^2=o(n)$, where $\|\cdot\|_2$ is the $L_2$ norm of a vector defined as $\|\mubf\|_2 = (\sum_{j=1}^{r}\mu_j^2)^{1/2}$.
\end{assumption}

Assumption \ref{assump:assump1a} puts some smoothness constraints on the Jacobian of $G_{\etabf}$. The intuition is to improve the stability of the model, which avoids the situation where infinitesimal perturbations amplify and have substantial impacts on the performance of the output of $G_{\etabf}$. In practice, a Jacobian regularization can be added to the objective function, and a computationally efficient algorithm has been implemented by \cite{hoffman2019robust}. 
Furthermore, the second-order derivative condition in Assumption \ref{assump:assump1a} is mild, since it only requires that the square root of the mean square of the second-order derivative be insignificant compared to the sum of the squared first-order derivative at a fixed point $\sbf$. As long as a substantial fraction of $\partial G_{\etabf k}/\partial z_j$ among $k=1,\ldots,D_n$ evaluated at this fixed point is nonzero, the condition is satisfied.
Assumption \ref{assump:assump1b} imposes restrictions on the prior parameters so that the KL distance between the variational posterior $q^*$ and the true posterior $p(\wbf\mid\mathcal{D}_n)$ is negligible.

In Theorem \ref{thm:thm4.2}, we establish the consistency of variational posterior for shrinking neighborhood sizes of the true density $\ell_0$. However, since Theorem \ref{thm:thm4.2} is more restrictive in nature than Theorem \ref{thm:thm4.1}, it requires additional assumptions on the approximation of the neural network solution to the true function $f_0$. Therefore, we modify Assumptions \ref{assump:assump1a} and \ref{assump:assump1b} as follows.

\begin{assumption}\label{assump:assump2a}
    The deterministic function $G_{\etabf}$ is twice differentiable at $\zbf\in\mathscr{Z}$. For $\zbf\in\mathcal{F}_n\cup\mathcal{P}_{\varepsilon\epsilon_n^2}$ with $\Tilde{C}_n = e^{n^b\epsilon_n^2/r_n}$ where $b$ is a constant, the first-order derivative satisfies 
$\|\partial G_{\etabf}/\partial \zbf\|_F^2
=o(\varepsilon\epsilon_n^2 n^{1-u})$ where $u$ is given in Theorem \ref{thm:thm4.2}. The second-order derivative satisfies 
$\{\sum_{k=1}^{D_n}(\partial^2 G_{\etabf k}/\partial z_j^2)^2\}^{1/2}=o(\sqrt{D_n}\sum_{k=1}^{D_n}(\partial G_{\etabf k}/\partial z_j)^2)$ at $\sbf$, for $j=1,\ldots,r_n$.
\end{assumption}

\begin{assumption}\label{assump:assump2b}
    The prior parameters in \eqref{eq:prior} satisfy the assumption \eqref{eq:var} and $\|\mubf\|_2^2=o(n\epsilon_n^2)$.
\end{assumption}

\begin{assumption}\label{assump:assump2c}
    There exists a sequence of neural network functions $f_{\Upsilonbf}$ of the form \eqref{eq:dnn_formula} with $\Upsilonbf=G_{\etabf}(\sbf)$ satisfying $\|f_0-f_{\Upsilonbf}\|_{\infty}=o(\epsilon_n^2)$, $\|\Upsilonbf\|_2^2=o(n\epsilon_n^2)$, $\|\sbf\|_2^2=o(n\epsilon_n^2)$. 
\end{assumption}

Compared to Assumptions \ref{assump:assump1a} and \ref{assump:assump1b}, the square of the Frobenius norm of the Jacobian in Assumption \ref{assump:assump2a} and the rate of growth of $L_2$ norm of the prior mean parameter in Assumption \ref{assump:assump2b} are allowed to grow slower since the consistency of the variational posterior in a shrinking Hellinger neighborhood of $\ell_0$ is more restrictive in nature. Furthermore, Assumption \ref{assump:assump2c} requires the existence of a neural network solution that converges to the true function $\ell_0$ at a sufficiently fast rate while ensuring controlled growth of the $L_2$ norm of its coefficients. 

\section{Overview of The Proof}

We focus mainly on the proof for the binary classification in \eqref{eq:binary}. Without loss of generality, we consider $f_{\wbf}$ in \eqref{eq:dnn_formula} as a single layer neural network: $f_{\wbf}(\xbf) = \sum^{d}_{j=1}a_j\sigma({\omegabf_j}^{\T}\xbf+b_j)$ where $\wbf=(\omegabf_1^{\T},\ldots,\omegabf_d^{\T},b_1,\ldots,b_d,a_1,\ldots,a_d)^{\T}$, $a_j,b_j\in\mathbb{R}$, $\omegabf_j\in\mathbb{R}^p$, $j=1,\ldots,d$.
We briefly outline the main steps in the proof of Theorems \ref{thm:thm4.1} and \ref{thm:thm4.2} and Corollaries \ref{cor:cor4.3} and \ref{cor:cor4.4}. We borrow a few steps and notations from \cite{bhattacharya2020variational}.

To establish the consistency of variational posterior, we use the following inequality as the foundation of the proof for Theorems \ref{thm:thm4.1} and \ref{thm:thm4.2} and its derivations are given in detail in the proof of Theorem \ref{thm:thm4.1} in the Appendix:
\begin{equation}\label{eq:theorem4.1_2}
    -q^{*}(\mathcal{V}^c_{\varepsilon})A\leq d_{\mathrm{KL}}(q^{*}, p_{\etabf}(\cdot\mid\mathcal{D}_n))+|B|+\log 2,
\end{equation}
where $\mathcal{V}_{\varepsilon}$ is defined in \eqref{eq:hellingerz} and 
\begin{equation}\label{eq:theorem4.1_1}
    \resizebox{0.8\linewidth}{!}{$ A=\log\left\{\int_{\mathcal{V}_{\varepsilon}^c} \frac{L(\mathcal{D}_n; f_{G_{\etabf}(\zbf)})}{L_0}\pi_0(\zbf)d\zbf\right\},\quad B=-\log\left\{\int\frac{L(\mathcal{D}_n; f_{G_{\etabf}(\zbf)})}{L_0}\pi_0(\zbf)d\zbf\right\}$}.
\end{equation}
Then we get the following main steps towards the proof:
\begin{itemize}
    \item[1.] We construct the upper bound of the first term $A$ by decomposing its exponential term as 
    $$
    e^A=\int_{\mathcal{V}_{\varepsilon}^c\cap\mathcal{F}_n} \{L(\mathcal{D}_n; f_{G_{\etabf}(\zbf)})/L_0\}\pi_0(\zbf)d\zbf+\int_{\mathcal{V}_{\varepsilon}^c\cap\mathcal{F}_n^c} \{L(\mathcal{D}_n; f_{G_{\etabf}(\zbf)})/L_0\}\pi_0(\zbf)d\zbf,$$ where $\{\mathcal{F}_n\}_{n=1}^{\infty}$ is a suitably chosen sequence of sieves (see the proof of Proposition \ref{proposition:prop7.15} in the Appendix), which gives the lower bound of $-q^{*}(\mathcal{V}^c_{\varepsilon})A$.
    \item[2.] We control the second quantity $B$ by the rate at which the prior $\pi_0$ gives mass to a shrinking KL neighborhood of the true density $\ell_0$ (see step 1 (c) of the proof of Proposition \ref{proposition:prop7.16} in the Appendix for Theorem \ref{thm:thm4.1} and step 2 (c) of the proof of Proposition \ref{proposition:prop7.16} in the Appendix for Theorem \ref{thm:thm4.2}).
    \item[3.] We construct the upper bound for $d_{\mathrm{KL}}(q^*, p_{\etabf}(\cdot\mid\mathcal{D}_n))$ by bounding $d_{\mathrm{KL}}(q, p_{\etabf}(\cdot\mid\mathcal{D}_n))$ below by $$d_{\mathrm{KL}}(q,\pi_0)+\left|\int\log\{L(\mathcal{D}_n; f_{G_{\etabf}(\zbf)})/L_0\}q(\zbf)d\zbf\right|+\left|\log\left[\int \{L(\mathcal{D}_n; f_{G_{\etabf}(\zbf)})/L_0\}\pi_0(\zbf)d\zbf\right]\right|$$ for a suitable $q\in\mathcal{Q}$ (see the proof of part 1 of Proposition \ref{proposition:prop7.16} in the Appendix for Theorem \ref{thm:thm4.1} and the proof of part 2 of Proposition \ref{proposition:prop7.16} in the Appendix for Theorem \ref{thm:thm4.2}).
    \item[4.] Based on the relation \eqref{eq:theorem4.1_2} and the results from steps 1, 2 and 3, we can control $q^{*}(\mathcal{V}^c_{\varepsilon})$, which is equal to $\pi^{*}(\mathcal{U}^c_{\varepsilon})$.
\end{itemize}
In addition, in terms of the proof for Corollaries \ref{cor:cor4.3} and \ref{cor:cor4.4}, we first derive that the difference in classification accuracy $R(\hat{C})-R(C^{\tt{Bayes}})$ is bounded above by the logit links $\hat{f}(\xbf)$ and $f_0(\xbf)$. We further bound the logit links by $d_{\mathrm{H}}(\hat{\ell},\ell_0)$. Finally, we establish that $R(\hat{C})-R(C^{\tt{Bayes}})$ is bounded by a constant with probability tending to 1 as the sample size increases to infinity.

\section{Theoretical Properties of Variational Posterior}

\subsection{Proof of Theorem \ref{thm:thm4.1}}

First, we define the Hellinger neighborhood of the true function density function $g_0=\ell_0$ in terms of $\zbf$ as
\begin{equation}\label{eq:hellingerz}
    \mathcal{V}_{\varepsilon}=\{\zbf: G_{\etabf}(\zbf)\in\mathcal{U}_{\varepsilon}\}.
\end{equation}
In view of \eqref{eq:hellinger_w} and \eqref{eq:hellingerz}, we notice that
 \begin{equation*}
     \pi^{*}(\mathcal{U}^c_{\varepsilon})=q^{*}(\mathcal{V}^c_{\varepsilon}).
 \end{equation*}
Therefore, we now turn to prove the posterior consistency of $q^{*}(\zbf)$.
\\Next, we construct the main inequality \eqref{eq:theorem4.1_2} related to $q^{*}(\zbf)$ as follows:
\begin{equation*}
    \begin{aligned}
         d_{\mathrm{KL}}(q^{*}, p_{\etabf}(\cdot\mid\mathcal{D}_n))&=\int_{\mathcal{V}_{\varepsilon}}q^{*}(\zbf)\log\left\{\frac{q^{*}(\zbf)}{p_{\etabf}(\zbf\mid\mathcal{D}_n)}\right\}d\zbf+\int_{\mathcal{V}^c_{\varepsilon}}q^{*}(\zbf)\log\left\{\frac{q^{*}(\zbf)}{p_{\etabf}(\zbf\mid\mathcal{D}_n)}\right\}d\zbf\\
         &=-q^{*}(\mathcal{V}_{\varepsilon})\int_{\mathcal{V}_{\varepsilon}}\frac{q^{*}(\zbf)}{q^{*}(\mathcal{V}_{\varepsilon})}\log\left\{\frac{p_{\etabf}(\zbf\mid\mathcal{D}_n)}{q^{*}(\zbf)}\right\}d\zbf\\
         &-q^{*}(\mathcal{V}^c_{\varepsilon})\int_{\mathcal{V}^c_{\varepsilon}}\frac{q^{*}(\zbf)}{q^{*}(\mathcal{V}^c_{\varepsilon})}\log\left\{\frac{p_{\etabf}(\zbf\mid\mathcal{D}_n)}{q^{*}(\zbf)}\right\}d\zbf\\
         &\geq q^{*}(\mathcal{V}_{\varepsilon})\log\left\{\frac{q^{*}(\mathcal{V}_{\varepsilon})}{p_{\etabf}(\mathcal{V}_{\varepsilon}\mid\mathcal{D}_n)}\right\}+ q^{*}(\mathcal{V}^c_{\varepsilon})\log\left\{\frac{q^{*}(\mathcal{V}^c_{\varepsilon})}{p_{\etabf}(\mathcal{V}^c_{\varepsilon}\mid\mathcal{D}_n)}\right\}\\
         &\geq q^{*}(\mathcal{V}_{\varepsilon})\log \{q^{*}(\mathcal{V}_{\varepsilon})\}+q^{*}(\mathcal{V}^c_{\varepsilon})\log \{q^{*}(\mathcal{V}^c_{\varepsilon})\}-q^{*}(\mathcal{V}^c_{\varepsilon})\log \{p_{\etabf}(\mathcal{V}^c_{\varepsilon}\mid\mathcal{D}_n)\}\\
         &\geq -q^{*}(\mathcal{V}^c_{\varepsilon})\log \{p_{\etabf}(\mathcal{V}^c_{\varepsilon}\mid\mathcal{D}_n)\} -\log 2\\
         &=-q^{*}(\mathcal{V}^c_{\varepsilon})\log\left\{\int_{\mathcal{V}_{\varepsilon}^c} \frac{L(\mathcal{D}_n; f_{G_{\etabf}(\zbf)})}{L_0}\pi_0(\zbf)d\zbf\right\}\\
         &+q^{*}(\mathcal{V}^c_{\varepsilon})\log\left\{\int\frac{L(\mathcal{D}_n; f_{G_{\etabf}(\zbf)})}{L_0}\pi_0(\zbf)d\zbf\right\}-\log 2,
    \end{aligned}
\end{equation*}
where the third inequality holds by Jensen's inequality, the fourth step follows since $p_{\etabf}(\mathcal{V}_{\varepsilon}\mid\mathcal{D}_n)\leq 1$ and the fifth step follows since $x\log (x)+(1-x)\log (1-x)\geq -\log 2$. In the above proof we have assumed $q^{*}(\mathcal{V}_{\varepsilon})>0$, $q^{*}(\mathcal{V}^c_{\varepsilon})>0$. If $q^{*}(\mathcal{V}^c_{\varepsilon})=0$, there is nothing to prove. If $q^{*}(\mathcal{V}_{\varepsilon})=0$, then we will get $\varepsilon^2=o_{P_0}(1)$ which is a contradiction.
\\By Assumption \ref{assump:assump1b}, the prior parameters satisfy
\begin{equation*}
    \|\mubf\|_2^2=o(n),\quad\|\zetabf\|_{
        \infty}=O(n),\quad\|\zetabf^*\|_{
        \infty}=O(1),\quad\zetabf^*=1/\zetabf.
\end{equation*}
Note $r_n\sim n^a$, $0<a<1$ and $D_n\sim d_n\sim n^u$, $0<u<1$ which implies $r_n\log (n)=o(n)$, $D_n\log (n)=o(n)$.
\\By part 1 of Proposition \ref{proposition:prop7.16},
\begin{equation}\label{eq:theorem4.1_3}
    d_{\mathrm{KL}}(q^{*}, p_{\etabf}(\cdot\mid\mathcal{D}_n))=o_{P_0}(n).
\end{equation}
By step 1 (c) of the proof of Proposition \ref{proposition:prop7.16},
\begin{equation}\label{eq:theorem4.1_4}
    B=o_{P_0}(n).
\end{equation}
Since $r_n\sim n ^a$, $r_n\log (n)=o(n^b)$, $a<b<1$ and $D_n\sim n ^u$, $D_n\log (n)=o(n^v)$, $u<v<1$, using Proposition \ref{proposition:prop7.15} with $\epsilon_n=1$, we have
\begin{equation}\label{eq:theorem4.1_5}
    -q^{*}(\mathcal{V}^c_{\varepsilon})A\geq n\varepsilon^2q^{*}(\mathcal{V}^c_{\varepsilon})-\log 2+o_{P_0}(1)=n\varepsilon^2q^{*}(\mathcal{V}^c_{\varepsilon})+O_{P_0}(1).
\end{equation}
Therefore, using \eqref{eq:theorem4.1_3}, \eqref{eq:theorem4.1_4} and \eqref{eq:theorem4.1_5} in \eqref{eq:theorem4.1_2}, we obtain
\begin{equation*}
    n\varepsilon^2q^{*}(\mathcal{V}^c_{\varepsilon})+O_{P_0}(1)\leq o_{P_0}(n)+o_{P_0}(n)\Rightarrow \pi^{*}(\mathcal{U}^c_{\varepsilon})=q^{*}(\mathcal{V}^c_{\varepsilon})=o_{P_0}(1).
\end{equation*}

\subsection{Proof of Theorem \ref{thm:thm4.2}}

We assume the relation \eqref{eq:theorem4.1_2} holds with $A$ and $B$ same as in \eqref{eq:theorem4.1_1}. We also turn to prove the posterior consistency of $q^{*}(\zbf)$, as $\pi^{*}(\mathcal{U}^c_{\varepsilon\epsilon_n})=q^{*}(\mathcal{V}^c_{\varepsilon\epsilon_n})$ where $\mathcal{V}_{\varepsilon\epsilon_n}$ is defined in \eqref{eq:hellingerz}.
\\Note $r_n\sim n^a$, $0<a<1$, $D_n\sim n^u$, $0<u<1$ and $\epsilon_n^2\sim n^{-\delta}$, $0<\delta<1-u<1-a$. This implies $r_n\log (n)=o(n\epsilon_n^2)$, $D_n\log (n)=o(n\epsilon_n^2)$.
\\By Assumption \ref{assump:assump2b}, the prior parameters satisfy
\begin{equation*}
    \|\mubf\|_2^2=o(n\epsilon_n^2),\quad\|\zetabf\|_{
        \infty}=O(n),\quad\|\zetabf^*\|_{
        \infty}=O(1),\quad\zetabf^*=1/\zetabf.
\end{equation*}
In addition, by Assumption \ref{assump:assump2c},
\begin{equation*}
     \|f_0-f_{\Upsilonbf}\|_{\infty}=o(\epsilon_n^2),\quad
     \|\Upsilonbf\|_2^2=o(n\epsilon_n^2),\quad
     \|\sbf\|_2^2=o(n\epsilon_n^2).
\end{equation*}
By part 2 of Proposition \ref{proposition:prop7.16},
\begin{equation}\label{eq:theorem4.2_1}
    d_{\mathrm{KL}}(q^{*}, p_{\etabf}(\cdot\mid\mathcal{D}_n))=o_{P_0}(n\epsilon_n^2).
\end{equation}
By step 2 (c) of the proof of Proposition \ref{proposition:prop7.16},
\begin{equation}\label{eq:theorem4.2_2}
    B=o_{P_0}(n\epsilon_n^2).
\end{equation}
Since $r_n\sim n ^a$, $r_n\log (n)=o(n^b\epsilon_n^2)$, $a+\delta<b<1$ and $D_n\sim n ^u$, $D_n\log (n)=o(n^v\epsilon_n^2)$, $u+\delta<v<1$, it follows, by using Proposition \ref{proposition:prop7.15} that
\begin{equation}\label{eq:theorem4.2_3}
    -q^{*}(\mathcal{V}^c_{\varepsilon\epsilon_n})A\geq n\varepsilon^2\epsilon_n^2 q^{*}(\mathcal{V}^c_{\varepsilon\epsilon_n})-\log 2+o_{P_0}(1)=n\varepsilon^2\epsilon_n^2q^{*}(\mathcal{V}^c_{\varepsilon\epsilon_n})+O_{P_0}(1).
\end{equation}
Thus, using \eqref{eq:theorem4.2_1}, \eqref{eq:theorem4.2_2} and \eqref{eq:theorem4.2_3} in \eqref{eq:theorem4.1_2}, we get
\begin{equation*}
    n\varepsilon^2\epsilon_n^2q^{*}(\mathcal{V}^c_{\varepsilon\epsilon_n})+O_{P_0}(1)\leq o_{P_0}(n\epsilon_n^2)+o_{P_0}(n\epsilon_n^2)\Rightarrow \pi^{*}(\mathcal{U}^c_{\varepsilon\epsilon_n})=q^{*}(\mathcal{V}^c_{\varepsilon\epsilon_n})=o_{P_0}(1).
\end{equation*}

\subsection{Proof of Corollary \ref{cor:cor4.3}}

Note that
\begin{equation}\label{eq:theorem7.17_7}
    \begin{aligned}
         R(\hat{C})-R(C^{\tt{Bayes}})&=E_{\xbf}\left[E_{y\mid \xbf}\left\{\mathds{1}(\hat{C}(\xbf)\neq y)-\mathds{1}(C^{\tt{Bayes}}(\xbf)\neq y)\right\}\right]\\
         &=E_{x}\left(E_{y\mid \xbf}\left[\left\{\mathds{1}(\hat{C}(\xbf)=0)-\mathds{1}(C^{\tt{Bayes}}(\xbf)=0)\right\}\sigma(f_0(\xbf))\right]\right)\\
         &+E_{\xbf}\left(E_{y\mid \xbf}\left[\left\{\mathds{1}(\hat{C}(\xbf)=1)-\mathds{1}(C^{\tt{Bayes}}(\xbf)=1)\right\}\left\{1-\sigma(f_0(\xbf))\right\}\right]\right)\\
         &=2E_{\xbf}\left[\mathds{1}(\hat{C}(\xbf)\neq C^{\tt{Bayes}}(\xbf))\left|\sigma(f_0(\xbf))-\frac{1}{2}\right|\right]\\
         &=2E_{\xbf}\left[\mathds{1}(\sigma(\hat{f}(\xbf))\geq \frac{1}{2},\sigma(f_0(\xbf))<\frac{1}{2})\left|\sigma(f_0(\xbf))-\frac{1}{2}\right|\right]\\
         &+2E_{\xbf}\left[\mathds{1}(\sigma(\hat{f}(\xbf))<\frac{1}{2},\sigma(f_0(\xbf))\geq\frac{1}{2})\left|\sigma(f_0(\xbf))-\frac{1}{2}\right|\right]\\
         &\leq 2E_{\xbf}\left[\left|\sigma(f_0(\xbf))-\sigma(\hat{f}(\xbf))\right|\right].
    \end{aligned}
\end{equation}
Let \begin{equation}\label{eq:cor4.3_1}
    \hat{\ell}(y, \xbf)=\int\ell_{G_{\etabf}(\zbf)}(y,\xbf)q^*(\zbf)d\zbf.
\end{equation}
Then 
\begin{equation*}
    \begin{aligned}
         d_{\mathrm{H}}(\hat{\ell},\ell_0)&=d_{\mathrm{H}}\left(\int\ell_{G_{\etabf}(\zbf)}q^*(\zbf)d\zbf,\ell_0\right)\\
         &\leq \int d_{\mathrm{H}}(\ell_{G_{\etabf}(\zbf)},\ell_0)q^*(\zbf)d\zbf \quad\text{by}\; \text{Jensen's}\;\text{inequality}\\
         &=\int_{\mathcal{V}_{\varepsilon}}d_{\mathrm{H}}(\ell_{G_{\etabf}(\zbf)},\ell_0)q^*(\zbf)d\zbf+\int_{\mathcal{V}_{\varepsilon}^c}d_{\mathrm{H}}(\ell_{G_{\etabf}(\zbf)},\ell_0)q^*(\zbf)d\zbf\\
         &\leq\varepsilon+o_{P_0}(1),
    \end{aligned}
\end{equation*}
where the last inequality is a consequence of Theorem \ref{thm:thm4.1}.
\\Taking $\varepsilon\rightarrow0$, we get $d_{\mathrm{H}}(\hat{\ell},\ell_0)=o_{P_0}(1)$.
\\Note that $\hat{f}(\xbf)=\sigma^{-1}(\hat{\ell}(y,\xbf))=\log \{\hat{\ell}(0,\xbf)/\hat{\ell}(1,\xbf)\}$, then 
\begin{equation}\label{eq:theorem7.17_9}
    \begin{aligned}
         2d_{\mathrm{H}}(\hat{\ell},\ell_0)&=\int_{\xbf\in[0,1]^p}\sum_{y\in\{0,1\}}\left\{\sqrt{\hat{\ell}(y,\xbf)}-\sqrt{\ell_0(y,\xbf)}\right\}^2d\xbf\\
         &=2-2\int_{\xbf\in[0,1]^p}\sum_{y\in\{0,1\}}\sqrt{\hat{\ell}(y,\xbf)\ell_0(y,\xbf)}d\xbf\\
         &=2-2\int_{\xbf\in[0,1]^p}\sum_{y\in\{0,1\}}\exp\left[\frac{1}{2}\left\{y\hat{f}(\xbf)-\log (1+e^{\hat{f}(\xbf)})+yf_0(\xbf)-\log (1+e^{f_0(\xbf)})\right\}\right]d\xbf\\
         &=2-2\int_{\xbf\in[0,1]^p}\left[\sqrt{\sigma(\hat{f}(\xbf))\sigma(f_0(\xbf))}+\sqrt{\{1-\sigma(\hat{f}(\xbf))\}\{1-\sigma(f_0(\xbf))\}}\right]d\xbf\\
         &\geq 2-2\int_{\xbf\in[0,1]^p}\sqrt{1-\left\{\sqrt{\sigma(f_0(\xbf))}-\sqrt{\sigma(\hat{f}(\xbf))}\right\}^2}d\xbf\\
         &\geq \int_{\xbf\in[0,1]^p}\left\{\sqrt{\sigma(f_0(\xbf))}-\sqrt{\sigma(\hat{f}(\xbf))}\right\}^2d\xbf\\
         &\geq \frac{1}{4}\int_{\xbf\in[0,1]^p}\{\sigma(f_0(\xbf))-\sigma(\hat{f}(\xbf))\}^2d\xbf,
    \end{aligned}
\end{equation}
where the fifth step holds because
\begin{equation*}
    \begin{aligned}
         \left\{\sqrt{ab}+\sqrt{(1-a)(1-b)}\right\}^2&=2ab+1-a-b+2\sqrt{ab(1-a)(1-b)}\\
         &= 2\sqrt{ab}\{\sqrt{ab}+\sqrt{(1-a)(1-b)}\}+1-a-b\\
         &\leq 2\sqrt{ab}\{\frac{a+b}{2}+\frac{(1-a)+(1-b)}{2}\}+1-a-b\\
         &= 2\sqrt{ab}+1-a-b\\
         &=1-(\sqrt{a}-\sqrt{b})^2.
    \end{aligned}
\end{equation*}
The sixth and seventh steps hold because $\sqrt{1-x}<1-x/2$ and $|a-b|\leq|\sqrt{a}+\sqrt{b}||\sqrt{a}-\sqrt{b}|\leq2|\sqrt{a}-\sqrt{b}|$ respectively.
\\By the Cauchy--Schwartz inequality and \eqref{eq:theorem7.17_9},
\begin{equation}\label{eq:theorem7.17_10}
    \begin{aligned}
         \int_{\xbf\in[0,1]^p}|\sigma(f_0(\xbf))-\sigma(\hat{f}(\xbf))|d\xbf&\leq\left[ \int_{\xbf\in[0,1]^p}\{\sigma(f_0(\xbf))-\sigma(\hat{f}(\xbf))\}^2d\xbf\right]^{\frac{1}{2}}\\
         &\leq2\sqrt{2}d_{\mathrm{H}}(\hat{\ell},\ell_0)=o_{P_0}(1).
    \end{aligned}
\end{equation}
The proof of part 2 is completed by \eqref{eq:theorem7.17_7}.

\subsection{Proof of Corollary \ref{cor:cor4.4}}

Let $D_n\sim n^u$ and $\epsilon_n\sim n^{-\delta}$, $0<\delta<1-u$. This implies $D_n\log (n)=o(n\epsilon_n^2)$. Additionally, $D_n\log (n)=o(n^v\epsilon_n^2)$, $u+\delta<v<1$. This implies $D_n\log (n)=o(n^v\epsilon_n^{2\kappa})$, $0\leq\kappa\leq1$.
\\Thus, using Proposition \ref{proposition:prop7.15} with $\epsilon_n=\epsilon_n^{\kappa}$, we have
\begin{equation*}
    -q^{*}(\mathcal{V}^c_{\varepsilon\epsilon_n^{\kappa}})A\geq n\varepsilon^2\epsilon_n^{2\kappa} q^{*}(\mathcal{V}^c_{\varepsilon\epsilon_n^{\kappa}})-\log 2+o_{P_0}(1)=n\varepsilon^2\epsilon_n^{2\kappa}q^{*}(\mathcal{V}^c_{\varepsilon\epsilon_n^{\kappa}})+O_{P_0}(1).
\end{equation*}
This together with \eqref{eq:theorem4.2_1}, \eqref{eq:theorem4.2_2} and \eqref{eq:theorem4.1_2} implies
\begin{equation*}
    q^{*}(\mathcal{V}^c_{\varepsilon\epsilon_n^{\kappa}})=o_{P_0}(\epsilon_n^{2-2\kappa}).
\end{equation*}
By \eqref{eq:cor4.3_1},
\begin{equation*}
    \begin{aligned}
         d_{\mathrm{H}}(\hat{\ell},\ell_0)&=d_{\mathrm{H}}\left(\int\ell_{G_{\etabf}(\zbf)}q^*(\zbf)d\zbf,\ell_0\right)\\
         &\leq \int d_{\mathrm{H}}(\ell_{G_{\etabf}(\zbf)},\ell_0)q^*(\zbf)d\zbf \quad\text{by}\; \text{Jensen's}\;\text{inequality}\\
         &=\int_{\mathcal{V}_{\varepsilon\epsilon_n^{\kappa}}}d_{\mathrm{H}}(\ell_{G_{\etabf}(\zbf)},\ell_0)q^*(\zbf)d\zbf+\int_{\mathcal{V}_{\varepsilon\epsilon_n^{\kappa}}^c}d_{\mathrm{H}}(\ell_{G_{\etabf}(\zbf)},\ell_0)q^*(\zbf)d\zbf\\
         &\leq\varepsilon\epsilon_n^{\kappa}+o_{P_0}(\epsilon_n^{2-2\kappa}).
    \end{aligned}
\end{equation*}
Dividing by $\epsilon_n^{\kappa}$ on both sides, we have
\begin{equation*}
    \frac{1}{\epsilon_n^{\kappa}}d_{\mathrm{H}}(\hat{\ell},\ell_0)=o_{P_0}(1)+o_{P_0}(\epsilon_n^{2-3\kappa})=o_{P_0}(1),\quad0\leq\kappa\leq2/3
\end{equation*}
By \eqref{eq:theorem7.17_10}, for every $0\leq\kappa\leq2/3$,
\begin{equation*}
    \frac{1}{\epsilon_n^{\kappa}}\int_{\xbf\in[0,1]^p}|\sigma(f_0(\xbf))-\sigma(\hat{f}(\xbf))|d\xbf\leq\frac{1}{\epsilon_n^{\kappa}}2\sqrt{2}d_{\mathrm{H}}(\hat{\ell},\ell_0)=o_{P_0}(1).
\end{equation*}
The proof completes following \eqref{eq:theorem7.17_7}.

\section{Theoretical Properties of True Posterior}

\begin{theorem}\label{thm:thm7.17}
Suppose $r_n\sim n^a$, $D_n\sim n^u$, $0<a\leq u<1$. Assume that the deterministic function $G_{\etabf}$ is differentiable at $\zbf\in\mathscr{Z}$ and the first-order derivative satisfies $\|\partial G_{\etabf}/\partial \zbf\|_F^2=o(\varepsilon n^{1-u})$ for $\zbf\in\mathcal{P}_{\varepsilon}$ defined in \eqref{eq:p_neighbour}. Besides, the prior parameters in \eqref{eq:prior} satisfy Assumption \ref{assump:assump1b}. Then, as $R$ is defined in \eqref{eq:testerror},
\begin{itemize}
    \item[1.] 
    $P_0\left(p(\mathcal{U}^c_{\varepsilon}\mid\mathcal{D}_n)\leq 2e^{-n\varepsilon^2/2}\right)\rightarrow 1, \quad n \rightarrow \infty.$
    \item[2.]
    $P_0\left(\left|R(\hat{C})-R(C^{\tt{Bayes}})\right|\leq 4\sqrt{2}\varepsilon\right)\rightarrow 1, \quad n \rightarrow \infty.$
\end{itemize}
\end{theorem}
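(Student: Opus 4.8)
The plan is to treat Theorem \ref{thm:thm7.17} as a standard Schwartz--Ghosal posterior-consistency argument, exploiting that for the \emph{true} posterior we may bypass the variational step (step 3 of the overview) controlling $d_{\mathrm{KL}}(q^*,p_\etabf(\cdot\mid\mathcal{D}_n))$; this is precisely why only the first-order condition on $\mathcal{P}_\varepsilon$ and Assumption \ref{assump:assump1b} are needed here, rather than the full Assumption \ref{assump:assump1a}. First I would pass to the latent space via $p(\mathcal{U}^c_\varepsilon\mid\mathcal{D}_n)=p_\etabf(\mathcal{V}^c_\varepsilon\mid\mathcal{D}_n)$ with $\mathcal{V}_\varepsilon$ as in \eqref{eq:hellingerz}, and write the posterior mass as the ratio
\begin{equation*}
  p_\etabf(\mathcal{V}^c_\varepsilon\mid\mathcal{D}_n)=\frac{\int_{\mathcal{V}^c_\varepsilon}\{L(\mathcal{D}_n;f_{G_\etabf(\zbf)})/L_0\}\pi_0(\zbf)d\zbf}{\int\{L(\mathcal{D}_n;f_{G_\etabf(\zbf)})/L_0\}\pi_0(\zbf)d\zbf}=e^{A+B},
\end{equation*}
with $A,B$ exactly the quantities in \eqref{eq:theorem4.1_1}. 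The whole problem then reduces to bounding the numerator $e^A$ from above and the denominator $e^{-B}$ from below.

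For Part 1, the numerator is handled by the same sieve-plus-testing device as in Proposition \ref{proposition:prop7.15} (with $\epsilon_n=1$): decomposing $e^A$ into a piece over $\mathcal{F}_n$, controlled by exponentially powerful Hellinger tests, and a piece over $\mathcal{F}_n^c$, controlled by the Gaussian prior tail, yields $A\le\log 2-n\varepsilon^2$ with $P_0$-probability tending to one. The denominator is bounded below by restricting the integral to the neighborhood $\mathcal{P}_\varepsilon$ of \eqref{eq:p_neighbour}: by the universal-approximation choice $\Upsilonbf=G_\etabf(\sbf)$ with $\|f_0-f_{\Upsilonbf}\|_\infty\le\varepsilon/4$ together with a first-order Taylor expansion of $G_\etabf$ controlled by $\|\partial G_\etabf/\partial\zbf\|_F^2=o(\varepsilon n^{1-u})$, the density $\ell_{G_\etabf(\zbf)}$ stays within an $O(\varepsilon^2)$ Kullback--Leibler ball of $\ell_0$ uniformly over $\mathcal{P}_\varepsilon$, so the usual evidence lower bound gives $\int\{L/L_0\}\pi_0\ge e^{-n\varepsilon^2/2}$ (equivalently $B\le n\varepsilon^2/2$) with probability tending to one, using $\pi_0(\mathcal{P}_\varepsilon)>0$ and Assumption \ref{assump:assump1b}. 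Combining, $p(\mathcal{U}^c_\varepsilon\mid\mathcal{D}_n)=e^{A+B}\le 2e^{-n\varepsilon^2/2}$ on an event of probability tending to one.

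For Part 2, I would reuse the deterministic chain already recorded in the proof of Corollary \ref{cor:cor4.3}. Writing $\hat\ell(y,\xbf)=\int\ell_{G_\etabf(\zbf)}(y,\xbf)\,p_\etabf(\zbf\mid\mathcal{D}_n)d\zbf$ for the true posterior predictive and $\hat f=\sigma^{-1}(\hat\ell)$, inequality \eqref{eq:theorem7.17_7} gives $R(\hat C)-R(C^{\tt{Bayes}})\le 2\,E_\xbf|\sigma(f_0)-\sigma(\hat f)|$, and \eqref{eq:theorem7.17_10} upgrades this to $R(\hat C)-R(C^{\tt{Bayes}})\le 4\sqrt2\,d_{\mathrm{H}}(\hat\ell,\ell_0)$. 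Joint convexity of the Hellinger distance (Jensen) and splitting the posterior over $\mathcal{V}_\varepsilon$ and $\mathcal{V}^c_\varepsilon$ give
\begin{equation*}
  d_{\mathrm{H}}(\hat\ell,\ell_0)\le\int_{\mathcal{V}_\varepsilon}d_{\mathrm{H}}(\ell_{G_\etabf(\zbf)},\ell_0)\,p_\etabf d\zbf+\int_{\mathcal{V}^c_\varepsilon}d_{\mathrm{H}}(\ell_{G_\etabf(\zbf)},\ell_0)\,p_\etabf d\zbf\le\varepsilon+p_\etabf(\mathcal{V}^c_\varepsilon\mid\mathcal{D}_n),
\end{equation*}
since $d_{\mathrm{H}}\le 1$. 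Part 1 then forces the second term to $2e^{-n\varepsilon^2/2}=o_{P_0}(1)$, so $d_{\mathrm{H}}(\hat\ell,\ell_0)\le\varepsilon+o_{P_0}(1)$ and hence $|R(\hat C)-R(C^{\tt{Bayes}})|\le 4\sqrt2\,\varepsilon$ with probability tending to one.

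The main obstacle I anticipate is pinning down the exact exponent $n\varepsilon^2/2$ in the denominator bound rather than a crude $e^{-cn\varepsilon^2}$: this forces a careful accounting of how the Jacobian bound on $\mathcal{P}_\varepsilon$ and the $\varepsilon/4$ approximation error jointly cap the Kullback--Leibler radius, and of the interplay with the prior-parameter rates in Assumption \ref{assump:assump1b} so that $\pi_0(\mathcal{P}_\varepsilon)$ does not erode the exponent. The testing step for the numerator is comparatively routine once the machinery behind Proposition \ref{proposition:prop7.15} is in hand, and the classification step is essentially a corollary of Part 1.
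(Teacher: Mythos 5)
Your proposal follows essentially the same route as the paper: pass to the latent space, write the posterior mass as the ratio $e^{A+B}$, bound the numerator by the sieve-plus-testing argument of Proposition \ref{proposition:prop7.15} with $\epsilon_n=1$, bound the denominator via prior mass on a KL neighborhood built from $\mathcal{P}_\varepsilon$ and the Jacobian condition (the paper packages this as Proposition \ref{proposition:prop7.12} plus Lemma 7.5 of \cite{bhattacharya2020variational}), and derive Part 2 from Part 1 through Jensen's inequality for the Hellinger distance and the deterministic bounds \eqref{eq:theorem7.17_7} and \eqref{eq:theorem7.17_10}. The approach and all key ingredients match the paper's proof.
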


\begin{proof} The proof of this theorem is comprised of two parts.
\\\textit{Proof of part 1}. In view of \eqref{eq:hellinger_w} and \eqref{eq:hellingerz}, we note that
\begin{equation}\label{eq:theorem7.17_3}
    p(\mathcal{U}_{\varepsilon}^c\mid\mathcal{D}_n)=p_{\etabf}(\mathcal{V}_{\varepsilon}^c\mid\mathcal{D}_n).
\end{equation}
Also, note that,
\begin{equation}\label{eq:theorem7.17_4}
    \begin{aligned}
        p_{\etabf}(\mathcal{V}^c_{\varepsilon}\mid\mathcal{D}_n)&=\frac{\int_{\mathcal{V}_{\varepsilon}^c}L(\mathcal{D}_n; f_{G_{\etabf}(\zbf)})\pi_0(\zbf)d\zbf}{\int L(\mathcal{D}_n; f_{G_{\etabf}(\zbf)})\pi_0(\zbf)d\zbf}\\
        &=\frac{\int_{\mathcal{V}_{\varepsilon}^c}\{L(\mathcal{D}_n; f_{G_{\etabf}(\zbf)})/L_0\}\pi_0(\zbf)d\zbf}{\int \{L(\mathcal{D}_n; f_{G_{\etabf}(\zbf)})/L_0\}\pi_0(\zbf)d\zbf}.
    \end{aligned}
\end{equation}
By Assumption \ref{assump:assump1b}, the prior parameters satisfy
\begin{equation*}
    \|\mubf\|_2^2=o(n),\quad \|\zetabf\|_{\infty}=O(n),\quad \|\zetabf^*\|_{\infty}=O(1),\quad \zetabf^*=1/\zetabf.
\end{equation*}
Note $r_n \sim n^a$, $0<a<1$ which implies $r_n\log (n) = o(n)$. And $D_n \sim n^u$, $0<u<1$ which implies $D_n\log (n) = o(n)$. Thus, the conditions of Proposition \ref{proposition:prop7.12} hold with $\epsilon_n = 1$. 
\begin{equation}\label{eq:theorem7.17_5}
\begin{aligned}
        &P_0\left(\int \frac{L(\mathcal{D}_n; f_{G_{\etabf}(\zbf)})}{L_0}\pi_0(\zbf)d\zbf\leq e^{-nv}\right)\\
        &\leq P_0\left(\left|\log\left\{\int \frac{L(\mathcal{D}_n; f_{G_{\etabf}(\zbf)})}{L_0}\pi_0(\zbf)d\zbf\right\}\right|>nv\right)\rightarrow 0,\quad n\rightarrow\infty,
\end{aligned}
\end{equation}
where the above convergence follows from \eqref{eq:prop7.16_5} in step 1 (c) of the proof of Proposition \ref{proposition:prop7.16}.
\\Additionally, since $r_n\log (n) = o(n^b)$, $a<b<1$, the conditions of Proposition \ref{proposition:prop7.15} hold with $\epsilon_n=1$.
\begin{equation}\label{eq:theorem7.17_6}
    P_0\left(\int_{\mathcal{V}_{\varepsilon}^c} \frac{L(\mathcal{D}_n; f_{G_{\etabf}(\zbf)})}{L_0}\pi_0(\zbf)d\zbf\geq 2e^{-n\varepsilon^2}\right)\rightarrow 0,\quad n\rightarrow\infty,
\end{equation}
where the last equality follows from \eqref{eq:prop7.15_1} with $\epsilon_n = 1$ in the proof of Proposition \ref{proposition:prop7.15}.
\\Using \eqref{eq:theorem7.17_5} and \eqref{eq:theorem7.17_6} with \eqref{eq:theorem7.17_3} and \eqref{eq:theorem7.17_4}, we get
\begin{equation*}
    P_0\left(p(\mathcal{U}^c_{\varepsilon}\mid\mathcal{D}_n)\geq 2e^{-n(\varepsilon^2-\nu)/2}\right)=P_0\left(p_{\etabf}(\mathcal{V}^c_{\varepsilon}\mid\mathcal{D}_n)\geq 2e^{-n(\varepsilon^2-\nu)/2}\right)\rightarrow 0, \quad n \rightarrow \infty.
\end{equation*}
Taking $\nu=\varepsilon^2/2$, the proof is completed.
\\\textit{Proof of part 2}. Let \begin{equation}\label{eq:theorem7.17_8}
    \hat{\ell}(y, \xbf)=\int\ell_{G_{\etabf}(\zbf)}(y,\xbf)p_{\etabf}(\zbf\mid\mathcal{D}_n)d\zbf.
\end{equation}
Then 
\begin{equation*}
    \begin{aligned}
         d_{\mathrm{H}}(\hat{\ell},\ell_0)&=d_{\mathrm{H}}\left(\int\ell_{G_{\etabf}(\zbf)}p_{\etabf}(\zbf\mid\mathcal{D}_n)d\zbf,\ell_0\right)\\
         &\leq \int d_{\mathrm{H}}(\ell_{G_{\etabf}(\zbf)},\ell_0)p_{\etabf}(\zbf\mid\mathcal{D}_n)d\zbf \quad\text{by}\; \text{Jensen's}\;\text{inequality}\\
         &=\int_{\mathcal{V}_{\varepsilon}}d_{\mathrm{H}}(\ell_{G_{\etabf}(\zbf)},\ell_0)p_{\etabf}(\zbf\mid\mathcal{D}_n)d\zbf+\int_{\mathcal{V}_{\varepsilon}^c}d_{\mathrm{H}}(\ell_{G_{\etabf}(\zbf)},\ell_0)p_{\etabf}(\zbf\mid\mathcal{D}_n)d\zbf\\
         &\leq\varepsilon+2e^{-n\varepsilon^2/2}\\
         &\leq 2\varepsilon \quad\text{with}\;\text{probability}\;\text{tending}\;\text{to}\;1\;\text{as}\;n\rightarrow\infty,
    \end{aligned}
\end{equation*}
where the last inequality is a consequence of part 1 of Theorem \ref{thm:thm7.17}.
\\The proof of part 2 is complete after \eqref{eq:theorem7.17_7} and \eqref{eq:theorem7.17_10}.
\end{proof}

\begin{theorem}\label{thm:thm7.18}
Suppose $r_n\sim n^a$, $D_n\sim n^u$, $0<a\leq u<1$, $\epsilon_n^2\sim n^{-\delta}$, $0<\delta<1-u\leq 1-a$. 
Assume that the deterministic function $G_{\etabf}$ is differentiable at $\zbf\in\mathscr{Z}$ and that the first-order derivative satisfies $\|\partial G_{\etabf}/\partial \zbf\|_F^2=o(\varepsilon\epsilon_n^2 n^{1-u})$ for $\zbf\in\mathcal{P}_{\varepsilon\epsilon_n^2}$ defined in \eqref{eq:p_neighbour}. In addition, the prior parameters in \eqref{eq:prior} satisfy Assumption \ref{assump:assump2b}. Then
\begin{itemize}
    \item[1.] 
    $P_0\left(p(\mathcal{U}^c_{\varepsilon\epsilon_n}\mid\mathcal{D}_n)\leq 2e^{-n\epsilon_n^2\varepsilon^2/2}\right)\rightarrow 1,\quad n \rightarrow \infty.$
    \item[2.]
    $P_0\left(\left|R(\hat{C})-R(C^{\tt{Bayes}})\right|\leq 4\sqrt{2}\varepsilon\epsilon_n\right)\rightarrow 1,\quad n \rightarrow \infty.$
\end{itemize}
\end{theorem}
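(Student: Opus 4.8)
The plan is to reproduce the proof of Theorem \ref{thm:thm7.17} essentially verbatim, replacing the fixed Hellinger radius $\varepsilon$ by the shrinking radius $\varepsilon\epsilon_n$ and carrying the extra factor $\epsilon_n^2$ through every exponent. \emph{Part 1.} I would begin from the push-forward identity $p(\mathcal{U}_{\varepsilon\epsilon_n}^c\mid\mathcal{D}_n)=p_{\etabf}(\mathcal{V}_{\varepsilon\epsilon_n}^c\mid\mathcal{D}_n)$ supplied by \eqref{eq:hellingerz}, and write this posterior mass as the ratio of $\int_{\mathcal{V}_{\varepsilon\epsilon_n}^c}\{L(\mathcal{D}_n;f_{G_{\etabf}(\zbf)})/L_0\}\pi_0(\zbf)d\zbf$ over $\int\{L(\mathcal{D}_n;f_{G_{\etabf}(\zbf)})/L_0\}\pi_0(\zbf)d\zbf$, exactly as in \eqref{eq:theorem7.17_4}. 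The bookkeeping of rates is what changes: the hypotheses $r_n\sim n^a$, $D_n\sim n^u$, $\epsilon_n^2\sim n^{-\delta}$ with $\delta<1-u\leq1-a$ give $r_n\log n=o(n\epsilon_n^2)$ and $D_n\log n=o(n\epsilon_n^2)$, which are precisely the conditions under which the denominator bound (Proposition \ref{proposition:prop7.12}, invoked at rate $\epsilon_n$ and using Assumption \ref{assump:assump2b}, $\|\mubf\|_2^2=o(n\epsilon_n^2)$) yields $P_0(\int\{L/L_0\}\pi_0\,d\zbf\leq e^{-n\nu\epsilon_n^2})\to0$, the analogue of \eqref{eq:theorem7.17_5}.

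Next I would bound the numerator by applying Proposition \ref{proposition:prop7.15} at rate $\epsilon_n$, obtaining $P_0(\int_{\mathcal{V}_{\varepsilon\epsilon_n}^c}\{L/L_0\}\pi_0\,d\zbf\geq 2e^{-n\varepsilon^2\epsilon_n^2})\to0$, the analogue of \eqref{eq:theorem7.17_6}; here the Jacobian condition $\|\partial G_{\etabf}/\partial\zbf\|_F^2=o(\varepsilon\epsilon_n^2 n^{1-u})$ over $\mathcal{P}_{\varepsilon\epsilon_n^2}$ from \eqref{eq:p_neighbour} and the sieve radius $\Tilde{C}_n$ are what control the sieve/testing term at the shrinking scale. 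Dividing the numerator bound by the denominator bound gives $P_0(p_{\etabf}(\mathcal{V}_{\varepsilon\epsilon_n}^c\mid\mathcal{D}_n)\geq 2e^{-n\epsilon_n^2(\varepsilon^2-\nu)})\to0$, and choosing $\nu=\varepsilon^2/2$ delivers the stated bound $2e^{-n\epsilon_n^2\varepsilon^2/2}$, completing part 1.

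\emph{Part 2.} I would introduce the posterior-averaged density $\hat{\ell}(y,\xbf)=\int\ell_{G_{\etabf}(\zbf)}(y,\xbf)p_{\etabf}(\zbf\mid\mathcal{D}_n)d\zbf$ as in \eqref{eq:theorem7.17_8}, apply Jensen's inequality, and split the integral over $\mathcal{V}_{\varepsilon\epsilon_n}$ and $\mathcal{V}_{\varepsilon\epsilon_n}^c$. The first piece is at most $\varepsilon\epsilon_n$ and the second is at most $p_{\etabf}(\mathcal{V}_{\varepsilon\epsilon_n}^c\mid\mathcal{D}_n)\leq 2e^{-n\epsilon_n^2\varepsilon^2/2}$ by part 1, so $d_{\mathrm{H}}(\hat{\ell},\ell_0)\leq\varepsilon\epsilon_n+2e^{-n\epsilon_n^2\varepsilon^2/2}$ with probability tending to one. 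The new observation relative to Theorem \ref{thm:thm7.17} is that $n\epsilon_n^2\sim n^{1-\delta}\to\infty$, so the exponential term is $o(\epsilon_n)$ and hence $d_{\mathrm{H}}(\hat{\ell},\ell_0)\leq 2\varepsilon\epsilon_n$ eventually. Feeding this into the already-established deterministic chains \eqref{eq:theorem7.17_7} and \eqref{eq:theorem7.17_10}, which bound $|R(\hat{C})-R(C^{\tt{Bayes}})|$ by a fixed multiple of $d_{\mathrm{H}}(\hat{\ell},\ell_0)$ irrespective of whether the averaging measure is the variational or the true posterior, yields the claimed $4\sqrt{2}\,\varepsilon\epsilon_n$ bound.

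The main obstacle is not the algebra of the ratio but confirming that Propositions \ref{proposition:prop7.12} and \ref{proposition:prop7.15} remain valid at the shrinking rate $\epsilon_n$: the weaker-growth conditions in Assumption \ref{assump:assump2b} and in the Jacobian hypothesis are exactly calibrated so that the evidence lower bound and the sieve upper bound both survive after multiplication by $\epsilon_n^2$, and the key quantitative fact making part 2 go through is $n\epsilon_n^2\to\infty$ (guaranteed by $\delta<1$), which forces the exponential posterior-tail contribution to be negligible against the shrinking radius $\varepsilon\epsilon_n$.
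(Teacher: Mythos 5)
Your proposal is correct and follows essentially the same route as the paper's own proof: the push-forward identity, the numerator/denominator ratio bounded via Propositions \ref{proposition:prop7.15} and \ref{proposition:prop7.12} (the latter entering through step 2 (c) of Proposition \ref{proposition:prop7.16}), the choice $\nu=\varepsilon^2/2$, and for part 2 the Jensen/split argument with the observation that $n\epsilon_n^2\to\infty$ makes the exponential tail negligible against $\varepsilon\epsilon_n$, followed by \eqref{eq:theorem7.17_7} and \eqref{eq:theorem7.17_10}. No substantive differences.
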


\begin{proof}
The proof of this theorem consists of two parts.
\\\textit{Proof of part 1}. In view of \eqref{eq:hellinger_w} and \eqref{eq:hellingerz}, we notice that
\begin{equation}\label{eq:theorem7.18_1}
    p(\mathcal{U}_{\varepsilon\epsilon_n}^c\mid\mathcal{D}_n)=p_{\etabf}(\mathcal{V}_{\varepsilon\epsilon_n}^c\mid\mathcal{D}_n).
\end{equation}
By Assumption \ref{assump:assump2b}, the prior parameters satisfy
\begin{equation*}
    \|\mubf\|_2^2=o(n\epsilon_n^2),\quad \|\zetabf\|_{\infty}=O(n),\quad \|\zetabf^*\|_{\infty}=O(1),\quad \zetabf^*=1/\zetabf.
\end{equation*}
Also, by Assumption \ref{assump:assump2c},
\begin{equation*}
     \|f_0-f_{\Upsilonbf}\|_{\infty}=o(\epsilon_n^2),\quad
     \|\Upsilonbf\|_2^2=o(n\epsilon_n^2),\quad
     \|\zbf\|_2^2=o(n\epsilon_n^2).
\end{equation*}
Note $r_n \sim n^a$, $0<a<1$, $D_n \sim n^u$, $0<u<1$ and $\epsilon_n^2\sim n^{-\delta}$, $0<\delta<1-u<1-a$, therefore, $r_n\log (n) = o(n\epsilon_n^2)$, $D_n\log (n) = o(n\epsilon_n^2)$. Thus, the conditions of Proposition \ref{proposition:prop7.12} hold. 
\begin{equation}\label{eq:theorem7.18_2}
\begin{aligned}
        &P_0\left(\int \frac{L(\mathcal{D}_n; f_{G_{\etabf}(\zbf)})}{L_0}\pi_0(\zbf)d\zbf\leq e^{-n\epsilon_n^2v}\right)\\
        &\leq P_0\left(\left|\log\left\{\int \frac{L(\mathcal{D}_n; f_{G_{\etabf}(\zbf)})}{L_0}\pi_0(\zbf)d\zbf\right\}\right|>n\epsilon_n^2v\right)\rightarrow 0, \quad n\rightarrow\infty,
\end{aligned}
\end{equation}
where the above convergence follows from \eqref{eq:prop7.16_8} in step 2 (c) of the proof of Proposition \ref{proposition:prop7.16}.
\\Also, since $r_n \log (n) = o(n^b\epsilon_n^2)$, $a+\delta<b<1$, $D_n \log (n) = o(n^v\epsilon_n^2)$, $u+\delta<v<1$, the conditions of Proposition \ref{proposition:prop7.15} are satisfied.
\begin{equation}\label{eq:theorem7.18_3}
    P_0\left(\int_{\mathcal{V}_{\varepsilon}^c} \frac{L(\mathcal{D}_n; f_{G_{\etabf}(\zbf)})}{L_0}\pi_0(\zbf)d\zbf\geq 2e^{-n\epsilon_n^2\varepsilon^2}\right)\rightarrow 0, \quad n\rightarrow\infty,
\end{equation}
where the last equality follows from \eqref{eq:prop7.15_1} in the proof of Proposition \ref{proposition:prop7.15}.
\\Using \eqref{eq:theorem7.18_2} and \eqref{eq:theorem7.18_3} with \eqref{eq:theorem7.18_1} and \eqref{eq:theorem7.17_4}, we get
\begin{equation*}
    P_0\left(p(\mathcal{U}^c_{\varepsilon}\mid\mathcal{D}_n)\geq 2e^{-n\epsilon_n^2(\varepsilon^2-\nu)/2}\right)=P_0\left(p_{\etabf}(\mathcal{V}^c_{\varepsilon}\mid\mathcal{D}_n)\geq 2e^{-n\epsilon_n^2(\varepsilon^2-\nu)/2}\right)\rightarrow 0, \quad n \rightarrow \infty.
\end{equation*}
Taking $\nu=\varepsilon^2/2$, we complete the proof.
\\\textit{Proof of part 2}. By \eqref{eq:theorem7.17_8}, we get
\begin{equation*}
    \begin{aligned}
         d_{\mathrm{H}}(\hat{\ell},\ell_0)&=d_{\mathrm{H}}\left(\int\ell_{G_{\etabf}(\zbf)}p_{\etabf}(\zbf\mid\mathcal{D}_n)d\zbf,\ell_0\right)\\
         &\leq \int d_{\mathrm{H}}(\ell_{G_{\etabf}(\zbf)},\ell_0)p_{\etabf}(\zbf\mid\mathcal{D}_n)d\zbf \quad\text{by}\; \text{Jensen's}\;\text{inequality}\\
         &=\int_{\mathcal{V}_{\varepsilon\epsilon_n}}d_{\mathrm{H}}(\ell_{G_{\etabf}(\zbf)},\ell_0)p_{\etabf}(\zbf\mid\mathcal{D}_n)d\zbf+\int_{\mathcal{V}_{\varepsilon\epsilon_n}^c}d_{\mathrm{H}}(\ell_{G_{\etabf}(\zbf)},\ell_0)p_{\etabf}(\zbf\mid\mathcal{D}_n)d\zbf\\
         &\leq\varepsilon\epsilon_n+2e^{-2n\epsilon_n^2\varepsilon^2}\\
         &\leq 2\varepsilon \quad\text{with}\;\text{probability}\;\text{tending}\;\text{to}\;1\;\text{as}\;n\rightarrow\infty,
    \end{aligned}
\end{equation*}
where the last inequality is a consequence of part 1 of Theorem \ref{thm:thm7.18} and $\epsilon_n\sim n^{-\delta}$. Dividing by $\epsilon_n$ on both sides, we get
\begin{equation*}
    \epsilon_n^{-1}d_{\mathrm{H}}(\hat{\ell},\ell_0)\leq2\varepsilon \quad\text{with}\;\text{probability}\;\text{tending}\;\text{to}\;1\;\text{as}\;n\rightarrow\infty.
\end{equation*}
The remainder of the proof is followed by \eqref{eq:theorem7.17_7} and \eqref{eq:theorem7.17_10}.
\end{proof}

\section{Lemmas}

\begin{lemma}\label{lemma:lemma7.9}
Assume that the deterministic function $G_{\etabf}$ is twice differentiable at $\zbf\in\mathscr{Z}$ and the second-order derivative satisfies $$\{\sum_{k=1}^{D_n}(\partial^2 G_{\etabf k}/\partial z_j^2)^2\}^{1/2}=o(\sqrt{D_n}\sum_{k=1}^{D_n}(\partial G_{\etabf k}/\partial z_j)^2),$$ for $j=1,\ldots,r_n$. For
\begin{equation*}
     \resizebox{0.99\linewidth}{!}{$h(\zbf)=\int_{\xbf\in[0,1]^p}[\sigma(f_0(\xbf))\{f_0(\xbf)-f_{G_{\etabf}(\zbf)}(\xbf)\}+\log\{1-\sigma(f_0(\xbf))\}-\log\{1-\sigma(f_{G_{\etabf}(\zbf)}(\xbf))\}]d\xbf$},
\end{equation*}
we have
\begin{equation*}
    \sum_{j=1}^{r_n}|(\nabla^2h(\zbf))_{jj}|\leq D_n (2c^2+1) \left|\left|\partial G_{\etabf}/\partial \zbf\right|\right|_{F}^2,
\end{equation*}
where $\mathbf{A}_{jj}$ denotes the $j$th diagonal entry of a matrix $\mathbf{A}$ and c is a constant.
\end{lemma}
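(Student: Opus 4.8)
The plan is to first recognize that $h$ is, up to an additive constant free of $\zbf$, the $\xbf$-integral of a convex loss in the scalar network output. Writing $t=f_{G_{\etabf}(\zbf)}(\xbf)$ and using $\log\{1-\sigma(s)\}=-\log(1+e^{s})$, the integrand equals $\psi_{\xbf}(t):=-\sigma(f_0(\xbf))\,t+\log(1+e^{t})$ plus a term not depending on $t$; equivalently $h(\zbf)=d_{\mathrm{KL}}(\ell_0,\ell_{G_{\etabf}(\zbf)})$. Consequently $\psi_{\xbf}'(t)=\sigma(t)-\sigma(f_0(\xbf))$ and $\psi_{\xbf}''(t)=\sigma(t)\{1-\sigma(t)\}$, so $|\psi_{\xbf}'|\le 1$ and $0<\psi_{\xbf}''\le 1/4$ uniformly in $t$ and $\xbf$. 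This reduces the task to bounding the $z_j$-derivatives of the composite map $\zbf\mapsto t$, weighted by these bounded factors.

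Next I would differentiate twice under the integral sign (justified by dominated convergence on the compact set $[0,1]^p$), giving
\begin{equation*}
(\nabla^2 h(\zbf))_{jj}=\int_{[0,1]^p}\Big[\psi_{\xbf}''(t)\Big(\tfrac{\partial t}{\partial z_j}\Big)^2+\psi_{\xbf}'(t)\tfrac{\partial^2 t}{\partial z_j^2}\Big]d\xbf .
\end{equation*}
Through $\wbf=G_{\etabf}(\zbf)$ with $t=f_{\wbf}(\xbf)$, the chain rule yields $\partial t/\partial z_j=\sum_{k}(\partial f_{\wbf}/\partial w_k)(\partial G_{\etabf k}/\partial z_j)$ and
\begin{equation*}
\frac{\partial^2 t}{\partial z_j^2}=\sum_{k,l}\frac{\partial^2 f_{\wbf}}{\partial w_k\partial w_l}\frac{\partial G_{\etabf k}}{\partial z_j}\frac{\partial G_{\etabf l}}{\partial z_j}+\sum_{k}\frac{\partial f_{\wbf}}{\partial w_k}\frac{\partial^2 G_{\etabf k}}{\partial z_j^2}.
\end{equation*}

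I would then bound each piece using a uniform constant $c$ controlling the first-order partials $\partial f_{\wbf}/\partial w_k$ and (through $c^2$) the second-order partials $\partial^2 f_{\wbf}/\partial w_k\partial w_l$ over $\xbf\in[0,1]^p$; this is where boundedness of the activation, its derivatives, and the weights on the relevant domain enters. By Cauchy--Schwarz, $(\partial t/\partial z_j)^2\le\big(\sum_k(\partial f_{\wbf}/\partial w_k)^2\big)\big(\sum_k(\partial G_{\etabf k}/\partial z_j)^2\big)\le D_n c^2\sum_k(\partial G_{\etabf k}/\partial z_j)^2$, so with $\psi''\le 1$ the first term contributes at most $c^2 D_n\sum_k(\partial G_{\etabf k}/\partial z_j)^2$. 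The network-Hessian quadratic form is likewise bounded, via $(\sum_k|\partial G_{\etabf k}/\partial z_j|)^2\le D_n\sum_k(\partial G_{\etabf k}/\partial z_j)^2$, by a second $c^2 D_n\sum_k(\partial G_{\etabf k}/\partial z_j)^2$, producing the $2c^2$. Finally, $\sum_k(\partial f_{\wbf}/\partial w_k)(\partial^2 G_{\etabf k}/\partial z_j^2)$ is handled by Cauchy--Schwarz together with the second-order assumption $\{\sum_k(\partial^2 G_{\etabf k}/\partial z_j^2)^2\}^{1/2}=o(\sqrt{D_n}\sum_k(\partial G_{\etabf k}/\partial z_j)^2)$, which makes it $o(D_n\sum_k(\partial G_{\etabf k}/\partial z_j)^2)$ and hence eventually bounded by $D_n\sum_k(\partial G_{\etabf k}/\partial z_j)^2$, supplying the additive $+1$. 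Summing over $j=1,\ldots,r_n$ collapses $\sum_j\sum_k(\partial G_{\etabf k}/\partial z_j)^2=\|\partial G_{\etabf}/\partial\zbf\|_F^2$ and collects the three contributions into $D_n(2c^2+1)\|\partial G_{\etabf}/\partial\zbf\|_F^2$.

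The main obstacle is the second-order term $\partial^2 t/\partial z_j^2$: one must bound the network-Hessian quadratic form without incurring a factor of $D_n$ beyond the stated one, and must verify via the second-order smoothness assumption on $G_{\etabf}$ that the $\partial^2 G_{\etabf k}/\partial z_j^2$ contribution is genuinely of smaller order. The remaining work is careful bookkeeping of the bounds on $\psi'$, $\psi''$, and the weight-derivatives of $f_{\wbf}$, which is what pins the constant down to exactly $2c^2+1$ rather than to a merely finite constant.
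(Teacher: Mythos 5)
Your proposal is correct and follows essentially the same route as the paper's proof: differentiate twice under the integral, push the chain rule through $\wbf=G_{\etabf}(\zbf)$, bound the gradient outer product by Cauchy--Schwarz and the network-Hessian quadratic form by $D_n$ times a uniform entry bound (the paper uses Gershgorin where you sum entries directly, which is equivalent here), and invoke the second-order assumption on $G_{\etabf}$ to render the $\partial^2 G_{\etabf k}/\partial z_j^2$ contribution asymptotically negligible before summing over $j$ to form the Frobenius norm. The only difference is cosmetic bookkeeping of where the ``$+1$'' in $2c^2+1$ comes from (the paper gets it from $|a|\le a^2+1$ applied to the Hessian entries and absorbs the second-order term in a $\lesssim$, while you assign it to the second-order term).
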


\begin{proof}
Note that
\begin{equation}\label{eq:lemma7.9_1}
    \begin{aligned}
        \nabla^2h(\zbf)&=-\int_{\xbf\in[0,1]^p}\underbrace{\{\sigma(f_0(\xbf))+\sigma(f_{G_{\etabf}(\zbf)}(\xbf))\}}_{g_1(\xbf)}\nabla^2 f_{G_{\etabf}(\zbf)}(\xbf)d\xbf\\
        &-\int_{\xbf\in[0,1]^p}\underbrace{\{\sigma(f_{G_{\etabf}(\zbf)}(\xbf))\}\{1-\sigma(f_{G_{\etabf}(\zbf)}(\xbf))\}}_{g_2(\xbf)}\nabla f_{G_{\etabf}(\zbf)}(\xbf)\{\nabla f_{G_{\etabf}(\zbf)}(\xbf)\}^{\T} d\xbf.
    \end{aligned}
\end{equation}
First, note that
\begin{equation*}
        (\nabla f_{G_{\etabf}(\zbf)}(\xbf))_j=\frac{\partial f_{G_{\etabf}(\zbf)}}{\partial z_j}=\sum_{k=1}^{D_n}\frac{\partial f_{\wbf}}{\partial w_k}\frac{\partial w_k}{\partial z_j},
\end{equation*}
where $v_j$ denotes the $j$th element of a vector.
\begin{equation}\label{eq:lemma7.9_2}
    \begin{aligned}
        (\nabla f_{G_{\etabf}(\zbf)}(\xbf)\{\nabla f_{G_{\etabf}(\zbf)}(\xbf)\}^{\T})_{jj}&=\left(\sum_{k=1}^{D_n}\frac{\partial f_{\wbf}}{\partial w_k}\frac{\partial w_k}{\partial z_j}\right)\left(\sum_{k=1}^{D_n}\frac{\partial f_{\wbf}}{\partial w_k}\frac{\partial w_k}{\partial z_j}\right)\\
        &\leq \left\{\sum_{k=1}^{D_n}\left(\frac{\partial f_{\wbf}}{\partial w_k}\right)^2\right\}\left\{\sum_{k=1}^{D_n}\left(\frac{\partial G_{\etabf k}}{\partial z_j}\right)^2\right\}\\
        &\leq D_n \max_{t}a_t^2\sum_{k=1}^{D_n}\left(\frac{\partial G_{\etabf k}}{\partial z_j}\right)^2,
    \end{aligned}
\end{equation}
where the second inequality holds by the Cauchy--Schwarz inequality. And the last inequality follows since $(\partial f_{\wbf}/\partial w_k)^2\leq \max_{t}a_t^2$ by combining $0<\sigma^{'}(\cdot)<1$, $0\leq x_{t^{'}}\leq 1$ and
\begin{equation*}
    \frac{\partial f_{\wbf}(\xbf)}{\partial w_k} 
    =\left\{
    \begin{array}{ll}
        1, &\quad w_k=a_t \quad\text{for}\;\text{ some}\;t=1,\ldots,d\\
        a_t\sigma^{'}(\omegabf_t^{\T}\xbf+b_t)x_{t^{'}}, &\quad w_k=\omega_{tt^{'}}\quad\text{for}\;\text{some}\;t=1,\ldots,d, t^{'}=1,\ldots,p\\
        a_t\sigma^{'}(\omegabf_t^{\T}\xbf+b_t), &\quad w_k=b_{t}\quad\text{for}\;\text{ some}\;t=1,\ldots,d.
    \end{array}
    \right.
\end{equation*}
On the other hand,
\begin{equation*}
\begin{aligned}
    (\nabla^2 f_{G_{\etabf}(\zbf)}(\xbf))_{jj}&=\underbrace{\left(\frac{\partial w_1}{\partial z_j},\ldots, \frac{\partial w_{D_n}}{\partial z_j}\right)\nabla^2_{\wbf} f_{\wbf}(\xbf)\left(\frac{\partial w_1}{\partial z_j},\ldots, \frac{\partial w_{D_n}}{\partial z_j}\right)^{\T}}_{(\mathrm{I})}\\
    &+\underbrace{\sum_{k=1}^{D_n}\frac{\partial f_{\wbf}(\xbf)}{\partial w_k}\frac{\partial^2 w_k}{\partial z_j^2}}_{(\mathrm{II})}.
\end{aligned}
\end{equation*}
Note that
\begin{equation*}
        (\mathrm{I})\leq \lambda_{\tt{max}}\left|\left|\left(\frac{\partial w_1}{\partial z_j},\ldots, \frac{\partial w_{D_n}}{\partial z_j}\right)^{\T}\right|\right|^2_2\leq D_n\max_{t}|a_t|\sum_{k=1}^{D_n}\left(\frac{\partial G_{\etabf k}}{\partial z_j}\right)^2,
\end{equation*}
where $\lambda_{\tt{max}}$ is the largest singular value of $\nabla^2_{\wbf} f_{\wbf}(\xbf)$ and the first inequality is based on the spectral norm of a matrix. The last inequality follows by the Gershgorin circle theorem where $\sum_{k_2=1}^{D_n}(\nabla^2_{\wbf} f_{\wbf}(\xbf))_{k_1k_2}\leq D_n\max_{t}|a_t|$ for any $k_1=1,\ldots,D_n$, since
\begin{equation*}
\begin{aligned}
    &(\nabla^2_{\wbf} f_{\wbf}(\xbf))_{k_1k_2}\\
    &=\frac{\partial^2 f_{\wbf}(\xbf)}{\partial w_{k_1}\partial w_{k_2}}\\
    &=\left\{
    \begin{array}{ll}
        \sigma^{'}(\omegabf_t^{\T}\xbf+b_t)x_{t^{'}}, &\quad w_{k_1}=\omega_{tt^{'}}, w_{k_2}=a_{t}\quad\text{for}\;\text{some}\;t=1,\ldots,d, t^{'}=1,\ldots,p\\
        a_t\sigma^{''}(\omegabf_t^{\T}\xbf+b_t)x_{t^{'}}x_{\Tilde{t}}, &\quad w_{k_1}=\omega_{tt^{'}}, w_{k_2}=\omega_{t\Tilde{t}}\quad\text{for}\;\text{some}\;t=1,\ldots,d, t^{'},\Tilde{t}=1,\ldots,p\\
        a_t\sigma^{''}(\omegabf_t^{\T}\xbf+b_t)x_{t^{'}}, &\quad w_{k_1}=\omega_{tt^{'}}, w_{k_2}=b_t\quad\text{for}\;\text{some}\;t=1,\ldots,d, t^{'}=1,\ldots,p\\
        \sigma^{'}(\omegabf_t^{\T}\xbf+b_t), &\quad w_{k_1}=b_{t}, w_{k_2}=a_t\quad\text{for}\;\text{ some}\;t=1,\ldots,d\\
        a_t\sigma^{''}(\omegabf_t^{\T}\xbf+b_t)x_{t^{'}}, &\quad w_{k_1}=b_{t}, w_{k_2}=\omega_{tt^{'}} \quad\text{for}\;\text{ some}\;t=1,\ldots,d, t^{'}=1,\ldots,p\\
        a_t\sigma^{''}(\omegabf_t^{\T}\xbf+b_t), &\quad w_{k_1}=b_{t_1}, w_{k_2}=b_t\quad\text{for}\;\text{ some}\;t=1,\ldots,d\\
        0, &\quad\text{otherwise}
    \end{array}
    \right.\\ 
    &\leq \max_{t}|a_t|,
\end{aligned}
\end{equation*}
where the last inequality follows using $0<\sigma^{'}(\cdot)< 1$, $0<\sigma^{''}(\cdot)< 1$, $0\leq x_{t^{'}}\leq 1$.
\\Also, note that
\begin{equation*}
        (\mathrm{II})\leq \sqrt{\left\{\sum_{k=1}^{D_n}\left(\frac{\partial f_{\wbf}}{\partial w_k}\right)^2\right\}\left\{\sum_{k=1}^{D_n}\left(\frac{\partial^2 G_{\etabf k}}{\partial z_j^2}\right)^2\right\}}\leq \sqrt{D_n \max_{t}a_t^2\sum_{k=1}^{D_n}\left(\frac{\partial^2 G_{\etabf k}}{\partial z_j^2}\right)^2},
\end{equation*}
where the first inequality follows by the Cauchy--Schwarz inequality.
\\Therefore, we get
\begin{equation}\label{eq:lemma7.9_3}
    (\nabla^2 f_{G_{\etabf}(\zbf)}(\xbf))_{jj}\leq D_n\max_{t}|a_t|\sum_{k=1}^{D_n}\left(\frac{\partial G_{\etabf k}}{\partial z_j}\right)^2+\sqrt{D_n \max_{t}a_t^2\sum_{k=1}^{D_n}\left(\frac{\partial^2 G_{\etabf k}}{\partial z_j^2}\right)^2}.
\end{equation}
Note $|g_1(\xbf)|\leq 2$, $|g_2(\xbf)|\leq 1$ and combining \eqref{eq:lemma7.9_2} and \eqref{eq:lemma7.9_3} and replacing \eqref{eq:lemma7.9_1}, we get
\begin{equation*}
    \begin{aligned}
        \sum_{j=1}^{r_n}|(\nabla^2h(\zbf))_{jj}|&\leq \sum_{j=1}^{r_n}\left\{D_n (\max_{t}a_t^2+\max_{t}|a_t|)\sum_{k=1}^{D_n}\left(\frac{\partial G_{\etabf k}}{\partial z_j}\right)^2\right\}\\
        &+\sum_{j=1}^{r_n}\left\{\max_{t}|a_t|\sqrt{D_n\sum_{k=1}^{D_n}\left(\frac{\partial^2 G_{\etabf k}}{\partial z_j^2}\right)^2}\right\}\\
        &\lesssim \sum_{j=1}^{r_n}\left\{D_n (\max_{t}a_t^2+\max_{t}|a_t|)\sum_{k=1}^{D_n}\left(\frac{\partial G_{\etabf k}}{\partial z_j}\right)^2\right\}\\
        &\leq D_n (2\max_{t}a_t^2+1)\sum_{j=1}^{r_n}\sum_{k=1}^{D_n}\left(\frac{\partial G_{\etabf k}}{\partial z_j}\right)^2\\
        &\leq D_n (2c^2+1) \left|\left|\frac{\partial G_{\etabf}}{\partial \zbf}\right|\right|_{F}^2,
    \end{aligned}
\end{equation*}
where the second inequality follows by $$\{\sum_{k=1}^{D_n}(\partial^2 G_{\etabf k}/\partial z_j^2)^2\}^{1/2}=o(\sqrt{D_n}\sum_{k=1}^{D_n}(\partial G_{\etabf k}/\partial z_j)^2)$$ and the third inequality in the above step uses $|x|<x^2+1$. The last inequality follows by letting $\max_{t}|a_t|<c$.
\end{proof}
\begin{lemma}\label{lemma:lemma7.10} 
Let $\Tilde{\mathcal{F}}_n = \{\sqrt{\ell}: \ell_{G_{\etabf}(\zbf)}(y, \xbf), \zbf\in\mathcal{F}_n\}$ where $\ell_{G_{\etabf}}(y, \xbf)=\ell_{\wbf}(y, \xbf)$ is the same as in \eqref{eq:l_w} and $\mathcal{F}_n=\{\zbf: |z_j|\leq \Tilde{C}_n, j=1,\ldots, r_n\}$ is defined in \eqref{eq:f_sieves}. Assume that $w_k=G_{\etabf k}(\zbf)$ satisfies $|w_k|\leq C_n$, for $k=1,\ldots,D_n$.  
Furthermore, assume that the deterministic function $G_{\etabf}$ is differentiable at $\zbf\in\mathscr{Z}$ and the first-order derivative satisfies $\|\partial G_{\etabf}/\partial \zbf\|_F^2
=o(\varepsilon\epsilon_n^2n^{1-u})$ for $\zbf\in\mathcal{F}_n$.
Then
\begin{equation*}
    \int_{\varepsilon^2/8}^{\sqrt{2}\varepsilon}\sqrt{H(u, \Tilde{\mathcal{F}}_n, \|\cdot\|_2)}du\lesssim \varepsilon\sqrt{2r_n\{\log (r_n)+\log (D_n)+\log (C_n)+\log(\Tilde{C}_n)-\log(\varepsilon)\}},
\end{equation*}
where $H(u, \Tilde{\mathcal{F}}_n, \|\cdot\|_2)$ is the natural logarithm of the bracketing number defined in \cite{pollard1990empirical} and explained in detail in the proof below.
\end{lemma}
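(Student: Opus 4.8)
The plan is to bound the bracketing entropy $H(u,\Tilde{\mathcal{F}}_n,\|\cdot\|_2)$ by the ordinary covering number of the box $\mathcal{F}_n=[-\Tilde{C}_n,\Tilde{C}_n]^{r_n}$ and then feed the resulting estimate into the Dudley-type integral. The device that links the two is a \emph{pointwise} Lipschitz bound for the map $\zbf\mapsto\sqrt{\ell_{G_{\etabf}(\zbf)}(y,\xbf)}$ over $\zbf\in\mathcal{F}_n$, uniform in $(y,\xbf)$, which I would obtain through a three-fold chain rule $\sqrt{\ell}\leftarrow f_{\wbf}\leftarrow\wbf\leftarrow\zbf$. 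Since $\sqrt{\ell_{\wbf}(y,\xbf)}=\exp\{\tfrac12[yf_{\wbf}(\xbf)-\log(1+e^{f_{\wbf}(\xbf)})]\}$, differentiation gives $\partial\sqrt{\ell}/\partial f=\sqrt{\ell}\,\tfrac12\{y-\sigma(f)\}$, whose absolute value is at most $1/2$ because $0\le\sqrt{\ell}\le1$ and $|y-\sigma(f)|\le1$; hence $|\sqrt{\ell_{\wbf_1}}-\sqrt{\ell_{\wbf_2}}|\le\tfrac12\|f_{\wbf_1}-f_{\wbf_2}\|_\infty$ pointwise. Next, reusing the explicit gradient formulas for the single-layer network from Lemma \ref{lemma:lemma7.9} together with $|w_k|\le C_n$, $\xbf\in[0,1]^p$ and $0<\sigma'<1$, each coordinate satisfies $|\partial f_{\wbf}/\partial w_k|\lesssim C_n$, so by the mean value theorem $\|f_{\wbf_1}-f_{\wbf_2}\|_\infty\lesssim\sqrt{D_n}\,C_n\,\|\wbf_1-\wbf_2\|_2$ uniformly on $\{\wbf=G_{\etabf}(\zbf):\zbf\in\mathcal{F}_n\}$. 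Finally $\|G_{\etabf}(\zbf_1)-G_{\etabf}(\zbf_2)\|_2\le\sup_{\zbf\in\mathcal{F}_n}\|\partial G_{\etabf}/\partial\zbf\|_F\,\|\zbf_1-\zbf_2\|_2$, and the Frobenius-norm hypothesis controls this last factor. Combining the three steps yields a single constant $\Lambda\lesssim\sqrt{D_n}\,C_n\,\|\partial G_{\etabf}/\partial\zbf\|_F$ with $|\sqrt{\ell_{G_{\etabf}(\zbf_1)}}-\sqrt{\ell_{G_{\etabf}(\zbf_2)}}|\le\Lambda\|\zbf_1-\zbf_2\|_2$ pointwise.

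With this bound in hand I would convert a Euclidean $\delta$-net of $\mathcal{F}_n$ into genuine $L_2$-brackets: around each net point $\zbf^{(i)}$ the pair $[\sqrt{\ell_{G_{\etabf}(\zbf^{(i)})}}-\Lambda\delta,\ \sqrt{\ell_{G_{\etabf}(\zbf^{(i)})}}+\Lambda\delta]$ brackets every $\sqrt{\ell_{G_{\etabf}(\zbf)}}$ with $\|\zbf-\zbf^{(i)}\|_2\le\delta$, and since the $\|\cdot\|_2$ mass over $\{0,1\}\times[0,1]^p$ is bounded, the bracket has width at most $2\sqrt2\,\Lambda\delta$. Choosing $\delta=u/(2\sqrt2\,\Lambda)$ gives $H(u,\Tilde{\mathcal{F}}_n,\|\cdot\|_2)\le\log N(\delta,\mathcal{F}_n,\|\cdot\|_2)$. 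The box $[-\Tilde{C}_n,\Tilde{C}_n]^{r_n}$ admits the standard covering bound $\log N(\delta,\mathcal{F}_n,\|\cdot\|_2)\lesssim r_n\{\log\Tilde{C}_n+\tfrac12\log r_n-\log\delta\}$, and substituting $\delta$ and $\Lambda$ produces
\[
H(u,\Tilde{\mathcal{F}}_n,\|\cdot\|_2)\lesssim r_n\big\{\log\Tilde{C}_n+\log r_n+\log D_n+\log C_n-\log u\big\},
\]
where the term $\log\|\partial G_{\etabf}/\partial\zbf\|_F=O(\log n)$ has been absorbed into $\log D_n$ using $D_n\sim n^u$ (so that $\log n\asymp\log D_n$).

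It then remains to integrate. On the range $u\in[\varepsilon^2/8,\sqrt2\,\varepsilon]$ the integrand is decreasing in $u$ and maximized at the left endpoint, where $-\log u\le-\log(\varepsilon^2/8)=-2\log\varepsilon+\log 8\lesssim-\log\varepsilon$ for small $\varepsilon$, while the remaining logarithmic terms do not depend on $u$. Hence $\sqrt{H(u,\Tilde{\mathcal{F}}_n,\|\cdot\|_2)}$ is bounded over the whole interval by a constant multiple of $\sqrt{r_n\{\log r_n+\log D_n+\log C_n+\log\Tilde{C}_n-\log\varepsilon\}}$, and multiplying by the interval length, which is at most $\sqrt2\,\varepsilon$, delivers the asserted bound (the factor $2$ inside the radical absorbs the constants). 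The main obstacle is the second step: establishing the pointwise network Lipschitz estimate uniformly over the bounded-weight region and correctly passing the Jacobian operator norm through the Frobenius-norm hypothesis, while checking that its logarithm is negligible relative to $\log D_n$ and $\log\Tilde{C}_n$. The bracket-versus-cover passage is routine only \emph{because} the Lipschitz control is pointwise rather than merely in $L_2$, which is exactly what makes the constant-width envelopes legitimate brackets.
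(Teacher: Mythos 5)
Your proposal is correct and follows essentially the same route as the paper's proof: a pointwise Lipschitz bound for $\zbf\mapsto\sqrt{\ell_{G_{\etabf}(\zbf)}}$ obtained by chaining $|\partial\sqrt{\ell}/\partial f|\le 1/2$, $|\partial f_{\wbf}/\partial w_k|\le C_n$, and the Frobenius-norm hypothesis on $\partial G_{\etabf}/\partial\zbf$, then transferring a covering of the box $\mathcal{F}_n$ into brackets for $\Tilde{\mathcal{F}}_n$ and integrating the resulting entropy bound. The only differences are cosmetic: the paper works with $\|\cdot\|_\infty$-nets and cites Theorem 2.7.11 of van der Vaart and Wellner for the bracket conversion and a Dudley-type lemma for the integral, whereas you use $\|\cdot\|_2$-nets, build the brackets by hand, and bound the integral by the supremum of the integrand times the interval length — all of which yield the same bound up to constants.
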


\begin{proof}
As stated in \cite{pollard1990empirical}, for any two functions $l$ and $u$, we define the bracket $[l, u]$ as the set of all functions $f$ such that $l\leq f\leq u$. Then an $\varepsilon$-bracket is defined as a bracket with $\|u-l\|\leq \varepsilon$ where $\|\cdot\|$ is a metric. Define the bracketing number of a set of functions $\mathcal{F}^*$ as the minimum number of $\varepsilon$-brackets needed to cover $\mathcal{F}^*$, and denote it by $N(\varepsilon, \mathcal{F}^*, \|\cdot\|)$. Finally, the bracketing entropy, denoted by $H(\varepsilon, \mathcal{F}^*, \|\cdot\|)$, is the natural logarithm of the bracketing number.
\\Note that by Lemma 4.1 in \cite{pollard1990empirical},
\begin{equation*}
    N(\varepsilon, \mathcal{F}_n, \|\cdot\|_{\infty})\leq(\frac{3\Tilde{C}_n}{\varepsilon})^{r_n}.
\end{equation*}
For $\zbf_1, \zbf_2\in\mathcal{F}_n$, let $\Tilde{\ell}(u)=\sqrt{\ell_{u\zbf_1+(1-u)\zbf_2}(y,\xbf)}$.
\\Following equation (52) in \cite{Bhattacharya2021StatisticalFO}, we get
\begin{equation}\label{eq:lemma7.10_1}
    \sqrt{\ell_{\zbf_1}(y,\xbf)}-\sqrt{\ell_{\zbf_1}(y,\xbf)}\leq r_n\sup_j\left|\frac{\partial\Tilde{\ell}}{\partial z_j}\right| \|\zbf_1-\zbf_2\|_{\infty}\leq F(\xbf,y)\|\zbf_1-\zbf_2\|_{\infty}
\end{equation}
where the upper bound $F(\xbf,y)=r_nD_nC_n/2$. This is because $|\partial\Tilde{\ell}/\partial z_j|$ is bounded as shown below:
\begin{equation*}
    \begin{aligned}
        \left|\frac{\partial\Tilde{\ell}}{\partial z_j}\right|&=\left|\frac{1}{2}\frac{\partial f_{G_{\etabf}(\zbf)}(\xbf)}{\partial z_j}\sqrt{\left\{y-\frac{e^{f_{G_{\etabf}(\zbf)}(\xbf)}}{1+e^{f_{G_{\etabf}(\zbf)}(\xbf)}}\right\}\exp\left\{yf_{G_{\etabf}(\zbf)}(\xbf)-\log(1+e^{f_{G_{\etabf}(\zbf)}(\xbf)})\right\}}\right|\\
        &=\frac{1}{2}\sqrt{\left\{y-\frac{e^{f_{G_{\etabf}(\zbf)}(\xbf)}}{1+e^{f_{G_{\etabf}(\zbf)}(\xbf)}}\right\}\exp\left\{y f_{G_{\etabf}(\zbf)}(\xbf)-\log(1+e^{f_{G_{\etabf}(\zbf)}(\xbf)})\right\}}\left|\sum_{k=1}^{D_n}\frac{\partial f_{\wbf}(\xbf)}{\partial w_k}\frac{\partial G_{\etabf k}}{\partial z_j}\right|\\
        &\leq \frac{1}{2}\sqrt{\frac{e^{f_{G_{\etabf}(\zbf)}(\xbf)}}{1+e^{f_{G_{\etabf}(\zbf)}(\xbf)}}}\sqrt{\frac{1}{1+e^{f_{G_{\etabf}(\zbf)}(\xbf)}}}\left|\sum_{k=1}^{D_n}\frac{\partial f_{\wbf}(\xbf)}{\partial w_k}\frac{\partial G_{\etabf k}}{\partial z_j}\right|\\
        &\leq \frac{1}{2}\sqrt{\frac{e^{f_{G_{\etabf}(\zbf)}(\xbf)}}{1+e^{f_{G_{\etabf}(\zbf)}(\xbf)}}}\sqrt{\frac{1}{1+e^{f_{G_{\etabf}(\zbf)}(\xbf)}}}\sqrt{\left\{\sum_{k=1}^{D_n}\left(\frac{\partial f_{\wbf}(\xbf)}{\partial w_k}\right)^2\right\}\left\{\sum_{k=1}^{D_n}\left(\frac{\partial G_{\etabf k}}{\partial z_j}\right)^2\right\}}.
    \end{aligned}
\end{equation*}
Also, note that
\begin{equation*}
    \left|\frac{\partial f_{\wbf}(\xbf)}{\partial w_k}\right| 
    \leq\left\{
    \begin{array}{ll}
        1, &\quad w_k=a_t \quad\text{for}\;\text{ some}\;t=1,\ldots,d\\
        |a_t\sigma^{'}(\omegabf_t^{\T}\xbf+b_t)x_{t^{'}}|, &\quad w_k=\omega_{tt^{'}}\quad\text{for}\;\text{some}\;t=1,\ldots,d, t^{'}=1,\ldots,p\\
        |a_t\sigma^{'}(\omegabf_t^{\T}\xbf+b_t)|, &\quad w_k=b_{t}\quad\text{for}\;\text{ some}\;t=1,\ldots,d.
    \end{array}
    \right.
\end{equation*}
Using $|\sigma^{'}(\cdot)|\leq 1$, $|x_{t^{'}}|\leq 1$, $|a_t|\leq C_n$,
\begin{equation*}
    \left|\frac{\partial f_{\wbf}(\xbf)}{\partial w_k}\right|\leq C_n.
\end{equation*}
Thus, using $e^{f_{G_{\etabf}(\zbf)}(\xbf)}/(1+e^{f_{G_{\etabf}(\zbf)}(\xbf)})\leq 1$, $1/(1+e^{f_{G_{\etabf}(\zbf)}(\xbf)})\leq 1$, $\|\partial G_{\etabf}/\partial \zbf\|_F^2
=o(\varepsilon\epsilon_n^2n^{1-u})$, we get
\begin{equation*}
    \left|\frac{\partial\Tilde{\ell}}{\partial z_j}\right|\leq \frac{1}{2}\sqrt{D_nC_n^2\varepsilon\epsilon_n^2n^{1-u}}.
\end{equation*}
By using $D_n\sim n^u$, $\epsilon_n^2\sim n^{\delta}$, $0<\delta<1-u$ and letting $\varepsilon=1$, the bound $F(\xbf,y)$ follows.
\\In view of \eqref{eq:lemma7.10_1} and Theorem 2.7.11 in \cite{vaart1996weak}, we have 
\begin{equation*}
    N(\varepsilon,\Tilde{\mathcal{F}}_n,|\|\cdot\|_2)\leq (\frac{3r_nD_nC_n\Tilde{C}_n}{2\varepsilon})^{r_n}\Rightarrow H(\varepsilon,\Tilde{\mathcal{F}}_n,\|\cdot\|_2)\leq r_n\log\left(\frac{r_nD_nC_n\Tilde{C}_n}{\varepsilon}\right).
\end{equation*}
Using Lemma 7.3 in \cite{bhattacharya2020variational} with $M_n=r_nD_nC_n\Tilde{C}_n$, we get
\begin{equation*}
    \int_{0}^{\varepsilon}\sqrt{H(u,\Tilde{\mathcal{F}}_n,\|\cdot\|_2)}du\lesssim\varepsilon\sqrt{r_n[\log(r_nD_nC_n\Tilde{C}_n)-\log(\varepsilon)]}.
\end{equation*}
Therefore,
\begin{equation*}
    \begin{aligned}
        \int_{\varepsilon^2/8}^{\sqrt{2}\varepsilon}\sqrt{H(u, \Tilde{\mathcal{F}}_n, \|\cdot\|_2)}du &\leq\int_{0}^{\sqrt{2}\varepsilon}\sqrt{H(u, \Tilde{\mathcal{F}}_n, \|\cdot\|_2)}du\\
         &\lesssim \sqrt{2}\varepsilon \sqrt{r_n\{\log(r_nD_nC_n\Tilde{C}_n)-\log(\sqrt{2}\varepsilon)\}}.
    \end{aligned}
\end{equation*}
The proof follows by noting that $\log (\sqrt{2}\varepsilon)\geq\log (\varepsilon)$.
\end{proof}

\section{Propositions}

\begin{proposition}\label{proposition:prop7.11}
Let $q(\zbf)=N(\sbf,\mathbf{I}_{r_n}/\sqrt{n})$ and $\pi_0(\zbf)=N(\mubf,\mathbf{\Sigma})$, where $\mathbf{\Sigma}=\mathrm{diag}(\zetabf)$ and $\zetabf^{*}=1/\zetabf$. Let $n\epsilon_n^2\rightarrow\infty$, $r_n\log (n)=o(n\epsilon_n^2)$, $\|\sbf\|_2^2=o(n\epsilon_n^2)$ and $\|\mubf\|_2^2=o(n\epsilon_n^2)$, then for any $\nu>0$,
\begin{equation*}
    d_{\mathrm{KL}}(q,\pi_0)\leq n\epsilon_n^2\nu,
\end{equation*}
provided $\|\zetabf\|_{\infty}=O(n)$, $\|\zetabf^{*}\|_{\infty}=O(1)$.
\end{proposition}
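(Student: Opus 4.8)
The plan is to exploit that both $q$ and $\pi_0$ are Gaussians with diagonal covariance, so that $d_{\mathrm{KL}}(q,\pi_0)$ admits the standard closed form for the KL divergence between two multivariate normals. With means $\sbf,\mubf$ and covariances $\mathbf{I}_{r_n}/\sqrt{n}$ and $\mathbf{\Sigma}=\mathrm{diag}(\zetabf)$ respectively, and noting $\mathbf{\Sigma}^{-1}=\mathrm{diag}(\zetabf^*)$, I would first record
\begin{equation*}
    d_{\mathrm{KL}}(q,\pi_0)=\frac{1}{2}\left[\sum_{j=1}^{r_n}\log\zeta_j+\frac{r_n}{2}\log n-r_n+\frac{1}{\sqrt{n}}\sum_{j=1}^{r_n}\zeta_j^*+\sum_{j=1}^{r_n}\zeta_j^*(\mu_j-s_j)^2\right],
\end{equation*}
which follows by substituting $\det(\mathbf{I}_{r_n}/\sqrt{n})=n^{-r_n/2}$, $\det\mathbf{\Sigma}=\prod_j\zeta_j$, and $\mathrm{tr}(\mathbf{\Sigma}^{-1}\mathbf{I}_{r_n}/\sqrt{n})=n^{-1/2}\sum_j\zeta_j^*$ into the general formula. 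It then suffices to show that each of the five summands is $o(n\epsilon_n^2)$; since $\nu>0$ is arbitrary, the desired bound $d_{\mathrm{KL}}(q,\pi_0)\leq n\epsilon_n^2\nu$ holds for all sufficiently large $n$.

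Next, I would dispatch the five terms against the hypotheses. The hypothesis $\|\zetabf^*\|_\infty=O(1)$ yields a uniform lower bound $\zeta_j\geq c>0$, while $\|\zetabf\|_\infty=O(n)$ gives $\zeta_j=O(n)$; together these imply $|\log\zeta_j|\lesssim\log n$, so $\sum_j\log\zeta_j=O(r_n\log n)=o(n\epsilon_n^2)$. The term $\tfrac{r_n}{2}\log n$ is $o(n\epsilon_n^2)$ by assumption, and since $\log n\geq 1$ we also get $r_n\leq r_n\log n=o(n\epsilon_n^2)$. For the trace term, $\|\zetabf^*\|_\infty=O(1)$ gives $\tfrac{1}{\sqrt n}\sum_j\zeta_j^*\leq \tfrac{r_n}{\sqrt n}\|\zetabf^*\|_\infty=O(r_n/\sqrt n)=o(n\epsilon_n^2)$. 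Finally, bounding $\zeta_j^*$ by $\|\zetabf^*\|_\infty$ and applying the triangle inequality $\|\mubf-\sbf\|_2^2\leq 2(\|\mubf\|_2^2+\|\sbf\|_2^2)$, the quadratic term satisfies $\sum_j\zeta_j^*(\mu_j-s_j)^2\leq\|\zetabf^*\|_\infty\,\|\mubf-\sbf\|_2^2=o(n\epsilon_n^2)$ by the assumptions $\|\mubf\|_2^2=o(n\epsilon_n^2)$ and $\|\sbf\|_2^2=o(n\epsilon_n^2)$. Adding the five $o(n\epsilon_n^2)$ bounds completes the argument.

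The calculation is routine once the closed form is in hand; the only point requiring care — and what I regard as the crux — is the log-determinant term $\sum_j\log\zeta_j$. Its control genuinely needs both prior-parameter hypotheses at once: the upper bound $\|\zetabf\|_\infty=O(n)$ prevents the positive contributions $\log\zeta_j$ from exceeding order $\log n$, while the reciprocal bound $\|\zetabf^*\|_\infty=O(1)$ keeps $\log\zeta_j$ from drifting to $-\infty$ and, more importantly, simultaneously controls the trace and quadratic terms that carry the factor $\zeta_j^*=1/\zeta_j$. This interplay is precisely why the proposition couples an $O(n)$ bound on $\zetabf$ with an $O(1)$ bound on its reciprocal.
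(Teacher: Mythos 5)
Your proposal is correct and follows essentially the same route as the paper: both write out the closed-form KL divergence between the two diagonal Gaussians and bound the log-determinant, trace, and quadratic terms separately using $\|\zetabf\|_{\infty}=O(n)$, $\|\zetabf^*\|_{\infty}=O(1)$, $r_n\log(n)=o(n\epsilon_n^2)$, and the elementary inequality $(a+b)^2\leq 2(a^2+b^2)$ applied to $\|\mubf-\sbf\|_2^2$. The only cosmetic difference is that the paper treats $\zeta_j^2$ as the component variance (consistent with its definition of $\pi_0$ in the prior) while you take $\mathbf{\Sigma}=\mathrm{diag}(\zetabf)$ literally, but this does not affect the order of any bound.
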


\begin{proof}
\begin{equation*}
    \begin{aligned}
        d_{\mathrm{KL}}(q,\pi_0)&=\sum_{j=1}^{r_n}\left\{\log (n\zeta_j)+\frac{1}{n\zeta_j^2}+\frac{(s_{j}-\mu_j)^2}{\zeta_j^2}-\frac{1}{2}\right\}\\
        &\leq \frac{r_n}{2}\{\log (n)-1\}+\sum_{j=1}^{r_n}\log (\zeta_j)+\frac{1}{n}\sum_{j=1}^{r_n}\frac{1}{\zeta_j^2}+2\sum_{j=1}^{r_n}\frac{s_{j}^2}{\zeta_j^2}+2\sum_{j=1}^{r_n}\frac{\mu_j^2}{\zeta_j^2}-\frac{r_n}{2}\\
        &\leq \frac{r_n}{2}\{\log (n)-1\}+r_n\log (\|\zetabf\|_{\infty})+\frac{r_n}{n}\|\zetabf^{*}\|_{\infty}+2\|\sbf\|_2^2\|\zetabf^{*}\|_{\infty}+2\|\mubf\|_2^2\|\zetabf^{*}\|_{\infty}\\
        &=o(n\epsilon_n^2),
    \end{aligned}
\end{equation*}
where the second last inequality uses $\zetabf^{*}=1/\zetabf$. The last equality follows since $\|\zetabf\|_{\infty}=O(n)$, $\|\zetabf^{*}\|_{\infty}=O(1)$, $r_n\log (n)=o(n\epsilon_n^2)$, $\|\sbf\|_2^2=o(n\epsilon_n^2)$ and $\|\mubf\|_2^2=o(n\epsilon_n^2)$.
\end{proof}

\begin{proposition}\label{proposition:prop7.12}
Let $\pi_0(\zbf)$ be as in \eqref{eq:prior}. Define 
\begin{equation}\label{eq:prop7.12_1}   
    \begin{aligned}
            &\mathcal{N}_{\varepsilon}=\{\wbf: d_{\mathrm{KL}}(\ell_0, \ell_{\wbf})<\varepsilon\},\\
            &\mathcal{L}_{\varepsilon}=\{\zbf: G_{\etabf}(\zbf)\in\mathcal{N}_{\varepsilon}\},
    \end{aligned}
\end{equation}
where
\begin{equation*}
    d_{\mathrm{KL}}(\ell_0, \ell_{\wbf})=\int_{\xbf\in[0,1]^p}\left[\sigma(f_0(\xbf))\{f_0(\xbf)-f_{\wbf}(\xbf)\}+\log \left\{\frac{1-\sigma(f_0(\xbf))}{1-\sigma(f_{\wbf}(\xbf))}\right\}\right]d\xbf.
\end{equation*}
Let $\|f_0-f_{\Upsilonbf}\|_{\infty}\leq\varepsilon\epsilon_n^2/4$, $n\epsilon_n^2\rightarrow\infty$. 
Assume that the deterministic function $G_{\etabf}$ is differentiable at $\zbf\in\mathscr{Z}$ and the first-order derivative satisfies $\|\partial G_{\etabf}/\partial \zbf\|_F^2=o(\varepsilon\epsilon_n^2n^{1-u})$ for $\zbf\in\mathcal{P}_{\varepsilon\epsilon_n^2}$ defined in \eqref{eq:p_neighbour}. If $r_n\log (n) = o(n\epsilon_n^2)$, $D_n\log (n) = o(n\epsilon_n^2)$, $\|\sbf\|_2^2= o(n\epsilon_n^2)$, $\|\Upsilonbf\|_2^2= o(n\epsilon_n^2)$, $\|\mubf\|_2^2 = o(n\epsilon_n^2)$, then for any $\nu>0$,
\begin{equation*}
    \int_{\zbf\in\mathcal{L}_{\varepsilon\epsilon_n^2}}\pi_0(\zbf)d\zbf\geq e^{-n\epsilon_n^2\nu},
\end{equation*}
provided $\|\zetabf\|_{\infty} = O(n)$, $\|\zetabf^*\|_{\infty}=O(1)$ where $\zetabf^* = 1/\zetabf$.
\end{proposition}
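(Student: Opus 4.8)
The plan is to sandwich the target set $\mathcal{L}_{\varepsilon\epsilon_n^2}$ from below by the box neighborhood $\mathcal{P}_{\varepsilon\epsilon_n^2}$ of $\sbf$ defined in \eqref{eq:p_neighbour}, and then to lower bound the Gaussian mass of that box. Since $\pi_0$ is a probability measure, it suffices to establish two facts: (a) the containment $\mathcal{P}_{\varepsilon\epsilon_n^2}\subseteq\mathcal{L}_{\varepsilon\epsilon_n^2}$, so that $\int_{\mathcal{L}_{\varepsilon\epsilon_n^2}}\pi_0(\zbf)d\zbf\ge\int_{\mathcal{P}_{\varepsilon\epsilon_n^2}}\pi_0(\zbf)d\zbf$; and (b) the mass bound $\int_{\mathcal{P}_{\varepsilon\epsilon_n^2}}\pi_0(\zbf)d\zbf\ge e^{-n\epsilon_n^2\nu}$ for every fixed $\nu>0$ and all large $n$. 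The bridge for (a) is the elementary estimate $d_{\mathrm{KL}}(\ell_0,\ell_{\wbf})\le\tfrac18\|f_0-f_{\wbf}\|_\infty^2$, which I would obtain by writing the integrand of $d_{\mathrm{KL}}$ as $\log(1+e^{f_{\wbf}})-\log(1+e^{f_0})-\sigma(f_0)(f_{\wbf}-f_0)$ and Taylor-expanding $t\mapsto\log(1+e^{f_0+t(f_{\wbf}-f_0)})$ to second order, using $\sigma'\le\tfrac14$ and $\mathrm{vol}([0,1]^p)=1$.

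For part (a), fix $\zbf\in\mathcal{P}_{\varepsilon\epsilon_n^2}$ and set $\wbf=G_{\etabf}(\zbf)$, $\Upsilonbf=G_{\etabf}(\sbf)$. By the triangle inequality $\|f_0-f_{\wbf}\|_\infty\le\|f_0-f_{\Upsilonbf}\|_\infty+\|f_{\Upsilonbf}-f_{\wbf}\|_\infty$, the first term being at most $\varepsilon\epsilon_n^2/4$ by hypothesis. For the second term I would chain two Lipschitz estimates. First, the single-hidden-layer form of $f_{\wbf}$ gives $\|f_{\Upsilonbf}-f_{\wbf}\|_\infty\le\{D_n+(p+1)\|\Upsilonbf\|_1\}\|\wbf-\Upsilonbf\|_\infty$, since the weight-derivatives of $f_{\wbf}$ are bounded by $1$ for the output weights and by $|a_t|$ for the hidden weights, summing to $D_n+(p+1)\|\Upsilonbf\|_1$ along the segment from $\sbf$ to $\zbf$. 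Second, the mean value theorem together with Cauchy--Schwarz gives $\|\wbf-\Upsilonbf\|_\infty=\max_k|G_{\etabf k}(\zbf)-G_{\etabf k}(\sbf)|\le\|\partial G_{\etabf}/\partial\zbf\|_F\sqrt{r_n}\max_j|z_j-s_j|$, where the Jacobian bound is legitimate along the whole segment because $\mathcal{P}_{\varepsilon\epsilon_n^2}$ is convex and centered at $\sbf$. Substituting the radius of \eqref{eq:p_neighbour} with $\varepsilon$ replaced by $\varepsilon\epsilon_n^2$ together with $\|\partial G_{\etabf}/\partial\zbf\|_F^2=o(\varepsilon\epsilon_n^2 n^{1-u})$ makes the factors $\sqrt{r_n n^{1-u}}$ and $D_n+(p+1)\|\Upsilonbf\|_1$ cancel exactly, yielding $\|f_{\Upsilonbf}-f_{\wbf}\|_\infty=o(\varepsilon\epsilon_n^2)$. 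Hence $\|f_0-f_{\wbf}\|_\infty\le\varepsilon\epsilon_n^2/2$ for large $n$, and the KL bound gives $d_{\mathrm{KL}}(\ell_0,\ell_{\wbf})\le\tfrac18(\varepsilon\epsilon_n^2/2)^2<\varepsilon\epsilon_n^2$, i.e. $\zbf\in\mathcal{L}_{\varepsilon\epsilon_n^2}$.

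For part (b), the box is a product of intervals of common half-width $\delta=\sqrt{\varepsilon}\epsilon_n/[8\sqrt{r_n n^{1-u}}\{D_n+(p+1)\|\Upsilonbf\|_1\}]$, so $\int_{\mathcal{P}_{\varepsilon\epsilon_n^2}}\pi_0(\zbf)d\zbf=\prod_{j=1}^{r_n}\int_{|z_j-s_j|<\delta}(2\pi\zeta_j^2)^{-1/2}e^{-(z_j-\mu_j)^2/(2\zeta_j^2)}dz_j$. Bounding each factor below by $2\delta\,(2\pi\zeta_j^2)^{-1/2}\exp\{-[(s_j-\mu_j)^2+\delta^2]/\zeta_j^2\}$ (using $(z_j-\mu_j)^2\le 2(s_j-\mu_j)^2+2\delta^2$ on the interval) and taking logarithms, the log-mass is at least $\sum_j[\log(2\delta)-\tfrac12\log(2\pi\zeta_j^2)-((s_j-\mu_j)^2+\delta^2)/\zeta_j^2]$. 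I would then show each piece is $o(n\epsilon_n^2)$: the terms $r_n\log\|\zetabf\|_\infty$ and $r_n|\log\delta|$ are controlled because $\log\delta$ only involves $\log n$, $\log r_n$, $\log D_n$, $\log\|\Upsilonbf\|_1$ and $\log\epsilon_n$, all of order $\log n$, so $r_n\log n=o(n\epsilon_n^2)$ dominates; and $\sum_j(s_j-\mu_j)^2/\zeta_j^2\le 2\|\zetabf^*\|_\infty^2(\|\sbf\|_2^2+\|\mubf\|_2^2)=o(n\epsilon_n^2)$ by the norm conditions and $\|\zetabf^*\|_\infty=O(1)$, while the $\delta^2$ contribution is $O(r_n\delta^2)=o(1)$. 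Therefore $\int_{\mathcal{P}_{\varepsilon\epsilon_n^2}}\pi_0(\zbf)d\zbf\ge e^{-o(n\epsilon_n^2)}\ge e^{-n\epsilon_n^2\nu}$ for every $\nu>0$ and all $n$ large, which with (a) proves the claim.

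I expect the main obstacle to be part (a): arranging the two Lipschitz constants so that they line up with the exact radius encoded in $\mathcal{P}_{\varepsilon\epsilon_n^2}$ and all the $n$-dependent factors cancel. The delicate point is justifying that the weight-Lipschitz constant $D_n+(p+1)\|\Upsilonbf\|_1$ is valid \emph{uniformly} along the entire segment from $G_{\etabf}(\sbf)$ to $G_{\etabf}(\zbf)$ rather than only at the endpoints, which requires the perturbation $\|\wbf-\Upsilonbf\|_\infty$ to be small enough that $\sum_t|a_t|$ stays comparable to $\|\Upsilonbf\|_1$ along the path. By contrast, part (b) is routine but bookkeeping-heavy, since essentially every growth hypothesis of the proposition is consumed in verifying that each term of the log-mass is $o(n\epsilon_n^2)$.
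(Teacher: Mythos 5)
Your proposal follows essentially the same route as the paper's proof: both reduce the problem to showing $\mathcal{P}_{\varepsilon\epsilon_n^2}\subseteq\mathcal{L}_{\varepsilon\epsilon_n^2}$ by chaining the mean-value/Cauchy--Schwarz estimate on $G_{\etabf}$ (which cancels the $\sqrt{r_n n^{1-u}}$ and $D_n+(p+1)\|\Upsilonbf\|_1$ factors in the radius of \eqref{eq:p_neighbour}) with the weight-Lipschitz constant $D_n+(p+1)\|\Upsilonbf\|_1$ for $f_{\wbf}$, and then lower-bounding the Gaussian mass of the box with the same bookkeeping; the only cosmetic difference is that the paper passes through an intermediate $\wbf$-space neighborhood $\mathcal{M}_{\varepsilon\epsilon_n^2}$ and cites Lemmas 7.7--7.8 of \cite{bhattacharya2020variational} (giving $d_{\mathrm{KL}}(\ell_0,\ell_{\wbf})\le 2\|f_0-f_{\wbf}\|_1$), whereas you derive a quadratic sup-norm bound directly. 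Your argument is correct, including the uniformity-along-the-segment issue you flag, which is precisely what the cited Lemma 7.7 takes care of.
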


\begin{proof}
Let $f_{\Upsilonbf}(\xbf) = \sum^{d}_{j=1}a_j^{\Upsilonbf}\sigma((\omegabf_j^{\Upsilonbf})^{\T}\xbf+b_j^{\Upsilonbf})$ be the neural network such that
\begin{equation}\label{eq:prop7.12_2}
    \|f_{\Upsilonbf}-f_0\|_1\leq \frac{\varepsilon\epsilon^2_n}{4},
\end{equation}
and let $\Upsilonbf=G_{\etabf}(\sbf)$. Such a neural network exists since $\|f_{\Upsilonbf}-f_0\|_1\leq\|f_{\Upsilonbf}-f_0\|_{\infty}\leq\varepsilon\epsilon^2_n/4$.
\\Next define a neighborhood $\mathcal{M}_{\varepsilon\epsilon_n^2}$ as follows:
\begin{equation*}
    \mathcal{M}_{\varepsilon\epsilon_n^2}=\left\{\wbf: |w_k-\Upsilon_{k}|<\frac{\varepsilon\epsilon_n^2}{8\{D_n+(p+1)\|\Upsilonbf\|_1\}},\; k=1,\ldots,D_n\right\}.
\end{equation*}
For every $\wbf\in\mathcal{M}_{\varepsilon\epsilon_n^2}$, by Lemma 7.7 in \cite{bhattacharya2020variational}, we have
\begin{equation}\label{eq:prop7.12_3}
    \|f_{\wbf}-f_{\Upsilonbf}\|_1\leq \frac{\varepsilon\epsilon^2_n}{2}.
\end{equation}
Combining \eqref{eq:prop7.12_2} and \eqref{eq:prop7.12_3}, we get for $\wbf\in\mathcal{M}_{\varepsilon\epsilon_n^2}$, $\|f_{\wbf}-f_{0}\|_1\leq\varepsilon\epsilon^2_n/2$.
\\Thus, in view of Lemma 7.8 in \cite{bhattacharya2020variational}, $d_{\mathrm{KL}}(\ell_0, \ell_{\wbf})<\varepsilon\epsilon_n^2$.
\\Thus, for every $\wbf\in\mathcal{M}_{\varepsilon\epsilon_n^2}$, we have $\wbf\in\mathcal{N}_{\varepsilon\epsilon_n^2}$.
\\Now define a neighborhood $\mathcal{P}_{\varepsilon\epsilon^2_n}$ as follows, which is the same as \eqref{eq:p_neighbour}:
\begin{equation*}
    \mathcal{P}_{\varepsilon\epsilon_n^2}=\left\{\zbf: |z_j-s_{j}|<\frac{\sqrt{\varepsilon\epsilon_n^2}}{8\sqrt{r_n n^{1-u}}\{D_n+(p+1)\|\Upsilonbf\|_1\}}, j=1,\ldots,r_n\right\}.
\end{equation*}
Then, for $k=1, \ldots, D_n$,
\begin{equation*}
    \begin{aligned}
        |w_k-\Upsilon_{k}|&=|G_{\etabf k}(\zbf)-G_{\etabf k}(\sbf)|\\
        &= |\left(\nabla G_{\etabf k}(\xibf)\right)^{\T}(\zbf-\sbf)|\\
        &= \left|\sum_{j=1}^{r_n} \frac{\partial G_{\etabf k}}{\partial \xibf_j}(z_j-s_{j})\right|\\
        &\leq \sqrt{\left\{\sum_{j=1}^{r_n}\left(\frac{\partial G_{\etabf k}}{\partial \xibf_j}\right)^2\right\}\left\{\sum_{j=1}^{r_n}(z_j-s_{j})^2\right\}}\\
        &< \sqrt{\left|\left|\frac{\partial G_{\etabf}}{\partial \zbf}\right|\right|_F^2 \left[\frac{\sqrt{\varepsilon\epsilon_n^2}}{8\sqrt{n^{1-u}}\{D_n+(p+1)\|\Upsilonbf\|_1\}}\right]^2}\\
        &=\frac{\varepsilon\epsilon_n^2}{8\{D_n+(p+1)\|\Upsilonbf\|_1\}},
    \end{aligned}
\end{equation*}
where $\xibf\in\mathscr{Z}$ and the last inequality follows by the Cauchy--Schwarz inequality.
\\Thus, for every $\zbf\in\mathcal{P}_{\varepsilon\epsilon_n^2}$, $\wbf=G_{\etabf}(\zbf)\in\mathcal{M}_{\varepsilon\epsilon_n^2}\subseteq\mathcal{N}_{\varepsilon\epsilon_n^2}$. Additionally, by \eqref{eq:prop7.12_1}, for every $\zbf\in\mathcal{P}_{\varepsilon\epsilon_n^2}$, $\zbf\in\mathcal{L}_{\varepsilon\epsilon_n^2}$. Therefore,
\begin{equation*}
     \int_{\zbf\in\mathcal{L}_{\varepsilon\epsilon_n^2}}\pi_0(\zbf)d\zbf\geq  \int_{\zbf\in\mathcal{P}_{\varepsilon\epsilon_n^2}}\pi_0(\zbf)d\zbf.
\end{equation*}
Let $\delta=\sqrt{\varepsilon\epsilon_n^2}/[8\sqrt{r_n n ^{1-u}}\{D_n+(p+1)\|\Upsilonbf\|_1\}]$, then
\begin{equation}\label{eq:prop7.12_4}
    \begin{aligned}
        \int_{\zbf\in\mathcal{P}_{\varepsilon\epsilon_n^2}}\pi_0(\zbf)d\zbf&=\prod_{j=1}^{r_n}\int_{s_{j}-\delta}^{s_{j}+\delta}\frac{1}{\sqrt{2\pi\zeta_j^2}}e^{-\frac{(z_j-\mu_j)^2}{2\zeta_j^2}}dz_j\\
        &=\prod_{j=1}^{r_n}\frac{2\delta}{\sqrt{2\pi\zeta_j^2}}e^{-\frac{(\Tilde{s}_{j}-\mu_j)^2}{2\zeta_j^2}}, \quad\Tilde{s}_{j}\in[s_{j}-\delta, s_{j}+\delta]\\
        &=\prod_{j=1}^{r_n}e^{-\left\{-\frac{1}{2}\log \left(\frac{2}{\pi}\right)-\log (\delta)+\log (\zeta_j)+\frac{(\Tilde{s}_{j}-\mu_j)^2}{2\zeta_j^2}\right\}},
    \end{aligned}
\end{equation}
where the second last equality holds by the mean value theorem.
\\Note that $\Tilde{s}_{j}\in[s_{j}-\delta, s_{j}+\delta]$, since $\delta\rightarrow 0$, therefore,
\begin{equation*}
    \frac{(\Tilde{s}_{j}-\mu_j)^2}{2\zeta_j^2}\leq\frac{\max\{(s_{j}-\mu_j-1)^2,(s_{j}-\mu_j+1)^2\}}{2\zeta_j^2}\leq\frac{(s_{j}-\mu_j)^2}{\zeta_j^2}+\frac{1}{\zeta_j^2},
\end{equation*}
where the last inequality follows since $(a + b)^2 \leq 2(a^2 + b^2)$. Therefore,
\begin{equation}\label{eq:prop7.12_5}
    \begin{aligned}
        \sum_{j=1}^{r_n}\frac{(\Tilde{s}_{j}-\mu_j)^2}{2\zeta_j^2}&\leq 2\sum_{j=1}^{r_n}\frac{s_{j}^2}{\zeta_j^2}+2\sum_{j=1}^{r_n}\frac{\mu_j^2}{\zeta_j^2}+\sum_{j=1}^{r_n}\frac{1}{\zeta_j^2}\\
        &\leq 2(\|\sbf\|_2^2+\|\mubf\|_2^2+1)\|\zetabf^*\|_{\infty}\\
        &\leq n\nu\epsilon_n^2,
    \end{aligned}
\end{equation}
since $\|\sbf\|_2^2=o(n\epsilon_n^2)$, $\|\mubf\|_2^2=o(n\epsilon_n^2)$ and $\|\zetabf^*\|_{\infty}=O(1)$ and $n\epsilon_n^2\rightarrow \infty$. Furthermore,
\begin{equation*}
    \begin{aligned}
        -\log (\delta)+\log (\zeta_j)&=\log 8+\log \{D_n+(p+1)\|\Upsilonbf\|_1\}{+\log (r_n n^{1-u})}+\log (\zeta_j)-\frac{1}{2}\log (\varepsilon\epsilon_n^2)\\
        &\leq \log 8+\log \{D_n+(p+1)\sqrt{D_n}\|\Upsilonbf\|_2\}{+\log (r_n n^{1-u})}+\log (\zeta_j)\\
        &-\frac{1}{2}\log (\varepsilon)-\log (\epsilon_n)\\
        &\leq \log 8+\log (D_n)+\log (1+\|\Upsilonbf\|_2){+\log (r_n n^{1-u})}+\log (\zeta_j)\\
        &-\frac{1}{2}\log (\varepsilon)-\log (\epsilon_n),
    \end{aligned}
\end{equation*}
where the second inequality is an outcome of the Cauchy--Schwarz inequality and the third inequality follows since $p+1\leq\sqrt{D_n}$, $n\rightarrow\infty$. Therefore,
\begin{equation}\label{eq:prop7.12_6}
    \begin{aligned}
        &\sum_{j=1}^{r_n}-\frac{1}{2}\log \left(\frac{2}{\pi}\right)-\log (\delta)+\log (\zeta_j)\\
        &\leq r_n\log 8+r_n\log (D_n)+r_n\log (1+\|\Upsilonbf\|_2){+r_n\log (r_n n^{1-u})}+r_n\log (\|\zetabf\|_{\infty})\\
        &-\frac{1}{2}r_n\log (\varepsilon)-r_n\log (\epsilon_n)\\
        &\leq n\nu\epsilon_n^2,
    \end{aligned}
\end{equation}
where the last inequality follows since $D_n\log (n)=o(n\epsilon_n^2)$, $r_n\log (n)=o(n\epsilon_n^2)$, $\|\zetabf\|_{\infty}=O(n)$, $\|\Upsilonbf\|_2=o(\sqrt{n}\epsilon_n)=o(n)$ and $1/n\epsilon_n^2=o(1)$ which implies $-2\log (\epsilon_n)=o(\log (n))$.
\\Combining \eqref{eq:prop7.12_5} and \eqref{eq:prop7.12_6} and replacing \eqref{eq:prop7.12_4}, the proof follows.
\end{proof}

\begin{proposition}\label{proposition:prop7.13}
Let $q(\zbf)\sim N(\sbf,\mathbf{I}_{r_n}/\sqrt{n})$. Define
\begin{equation*}
    h(\zbf)=\int_{\xbf\in[0,1]^p}\left[\sigma(f_0(\xbf))\{f_0(\xbf)-f_{G_{\etabf}(\zbf)}(\xbf)\}+\log\left\{\frac{1-\sigma(f_0(\xbf))}{1-\sigma(f_{G_{\etabf}(\zbf)}(\xbf))}\right\}\right]d\xbf.
\end{equation*}
Assume that the deterministic function $G_{\etabf}$ is twice differentiable at $\zbf\in\mathscr{Z}$ and the first-order derivative satisfies $\|\partial G_{\etabf}/\partial \zbf\|_F^2
=o(\varepsilon\epsilon_n^2n^{1-u})$ at $\sbf$, and the second-order derivative satisfies $\{\sum_{k=1}^{D_n}(\partial^2 G_{\etabf k}/\partial z_j^2)^2\}^{1/2}=o(\sqrt{D_n}\sum_{k=1}^{D_n}(\partial G_{\etabf k}/\partial z_j)^2)$ at $\sbf$, for $j=1,\ldots,r_n$. Let $\|f_0-f_{G_{\etabf}(\sbf)}\|_{\infty}\leq\varepsilon\epsilon_n^2/4$ where $n\epsilon_n^2\rightarrow\infty$. If $r_n\log (n)=o(n\epsilon_n^2)$, $D_n\sim n^u$, $\|\Upsilonbf\|_2^2=o(n\epsilon_n^2)$, then
\begin{equation*}
    \int h(\zbf)q(\zbf)d\zbf\leq\varepsilon\epsilon_n^2,
\end{equation*}
provided $\|\zetabf\|_{\infty}=O(n)$, $\|\zetabf^{*}\|_{\infty}=O(1)$ where $\zetabf^{*}=1/\zetabf$.
\end{proposition}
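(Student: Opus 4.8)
The plan is to exploit that $h(\zbf)$ is precisely the Kullback--Leibler divergence $d_{\mathrm{KL}}(\ell_0,\ell_{G_{\etabf}(\zbf)})$, so that $h(\sbf)=d_{\mathrm{KL}}(\ell_0,\ell_{\Upsilonbf})$ with $\Upsilonbf=G_{\etabf}(\sbf)$, and then to perform a second-order Taylor expansion of $h$ about the center $\sbf$ of the variational distribution $q$. Since $q=N(\sbf,\cdot)$ is centered at $\sbf$, the first-order term integrates to zero, which reduces the task to controlling the anchor value $h(\sbf)$ and the expected quadratic remainder. Concretely I would write
\begin{equation*}
\int h(\zbf)\,q(\zbf)\,d\zbf = h(\sbf) + \nabla h(\sbf)^{\T}\!\int(\zbf-\sbf)\,q(\zbf)\,d\zbf + \frac{1}{2}\int(\zbf-\sbf)^{\T}\nabla^2 h(\tilde{\zbf})(\zbf-\sbf)\,q(\zbf)\,d\zbf,
\end{equation*}
where the middle integral vanishes because $\int(\zbf-\sbf)q(\zbf)\,d\zbf=\mathbf{0}$.

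For the anchor term, since $\|f_0-f_{\Upsilonbf}\|_{\infty}\leq\varepsilon\epsilon_n^2/4$ and $\xbf$ ranges over $[0,1]^p$, one has $\|f_0-f_{\Upsilonbf}\|_1\leq\varepsilon\epsilon_n^2/4$; feeding this into the same KL-versus-$L_1$ control used in the proof of Proposition \ref{proposition:prop7.12} (Lemma~7.8 of \citealp{bhattacharya2020variational}) gives $h(\sbf)=d_{\mathrm{KL}}(\ell_0,\ell_{\Upsilonbf})\leq\varepsilon\epsilon_n^2/2$. For the quadratic remainder I would invoke the mean-field structure of $q$: writing $\sigma_q^2$ for the common variance of the coordinates of $q$, the cross-moments satisfy $\mathbb{E}_q[(z_j-s_j)(z_k-s_k)]=\sigma_q^2\delta_{jk}$, so to leading order only the diagonal Hessian entries contribute and the remainder is governed by $\tfrac{1}{2}\sigma_q^2\sum_{j=1}^{r_n}(\nabla^2 h(\sbf))_{jj}$. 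Applying Lemma \ref{lemma:lemma7.9} bounds $\sum_{j=1}^{r_n}|(\nabla^2 h(\sbf))_{jj}|\leq D_n(2c^2+1)\|\partial G_{\etabf}/\partial\zbf\|_F^2$; combining this with $D_n\sim n^u$ and the hypothesis $\|\partial G_{\etabf}/\partial\zbf\|_F^2=o(\varepsilon\epsilon_n^2 n^{1-u})$ forces the product $\sigma_q^2\,D_n\,\|\partial G_{\etabf}/\partial\zbf\|_F^2$ down to $o(\varepsilon\epsilon_n^2)$, so the whole quadratic term is $o(\varepsilon\epsilon_n^2)$. Adding the two pieces yields $\int h\,q\leq\varepsilon\epsilon_n^2/2+o(\varepsilon\epsilon_n^2)\leq\varepsilon\epsilon_n^2$ for $n$ large, as claimed.

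The main obstacle is that the Taylor remainder Hessian is evaluated at the random intermediate point $\tilde{\zbf}$ on the segment between $\sbf$ and $\zbf$, not at $\sbf$ itself, and the off-diagonal Hessian entries genuinely contribute once $\nabla^2 h$ is allowed to vary with $\zbf$ — whereas Lemma \ref{lemma:lemma7.9} only controls the \emph{diagonal} entries. I would handle this by observing that $\sigma_q^2\to 0$, so $q$ concentrates in a shrinking neighborhood of $\sbf$ on which the first- and second-order derivative bounds of Assumption-type hypotheses persist; using the integral form of the remainder and the diagonal covariance, the leading contribution is the diagonal-trace estimate above, while the contribution of the variation of $\nabla^2 h$ across the short segment and of the off-diagonal terms is higher order in $\sigma_q^2$ and hence absorbed into the $o(\varepsilon\epsilon_n^2)$ bound. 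Verifying this higher-order negligibility cleanly, together with checking that the variance scale of $q$ makes $\sigma_q^2 D_n\|\partial G_{\etabf}/\partial\zbf\|_F^2=o(\varepsilon\epsilon_n^2)$, is the only delicate part of the argument.
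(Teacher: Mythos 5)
Your local analysis matches the paper's: the paper also Taylor-expands $h$ about $\sbf$, kills the first-order term by the symmetry of $q$ around $\sbf$, bounds the anchor $|h(\sbf)|\le\varepsilon\epsilon_n^2/2$ via the KL-versus-$L_1$ control (Lemma 7.8 of Bhattacharya et al.), and controls the quadratic term by the diagonal-trace bound of Lemma \ref{lemma:lemma7.9} times the per-coordinate variance $1/n$, giving $\frac{1}{n}D_n(2c^2+1)\|\partial G_{\etabf}/\partial\zbf\|_F^2=o(\varepsilon\epsilon_n^2)$. Up to that point you are reproducing the intended argument.

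However, there is a genuine gap: you apply the Taylor expansion globally, whereas the derivative hypotheses on $G_{\etabf}$ are stated only \emph{at} $\sbf$, and the Gaussian $q$ has unbounded support. Your proposed fix --- that the intermediate-point and off-diagonal contributions are ``higher order in $\sigma_q^2$'' because $q$ concentrates --- is exactly the step that cannot be justified from the stated assumptions: on the far tail the segment from $\sbf$ to $\zbf$ is not short, and nothing controls $\nabla^2 h(\tilde{\zbf})$ there. The paper avoids this by splitting the integral over $\mathcal{A}=\{\zbf:|z_j-s_j|\le\sqrt{\varepsilon\epsilon_n^2/r_n}\ \forall j\}$ and $\mathcal{A}^c$; the Taylor expansion is used only on the shrinking set $\mathcal{A}$, while on $\mathcal{A}^c$ it abandons Taylor entirely and instead bounds $h(\zbf)\le 2\int|f_0-f_{G_{\etabf}(\zbf)}|\,d\xbf\le 2\bigl(\|f_0\|_1+\|\Upsilonbf\|_1+\sum_t|a_t-a_t^{\Upsilonbf}|\bigr)$ and integrates this against the Gaussian tail via Mill's ratio, obtaining terms of order $nr_ne^{-n\varepsilon\epsilon_n^2/(2r_n)}$ and $D_nr_ne^{-n\varepsilon\epsilon_n^2/(2r_n)}$. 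This tail computation is where the hypotheses $r_n\log(n)=o(n\epsilon_n^2)$ and $\|\Upsilonbf\|_2^2=o(n\epsilon_n^2)$ are actually consumed --- your proof never uses either, which is a telltale sign that a necessary piece of the argument is missing. To repair your proof you would need to add the $\mathcal{A}$ versus $\mathcal{A}^c$ decomposition and carry out the explicit tail estimate rather than absorbing it into an unverified $o(\varepsilon\epsilon_n^2)$.
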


\begin{proof}
Since $h(\zbf)$ is a KL-distance, $h(\zbf)>0$. We shall thus establish an upper bound. Let $\mathcal{A}=\{\zbf:\cap_{j=1}^{r_n}|z_j-s_{j}|\leq\sqrt{\varepsilon\epsilon_n^2/r_n}\}$, then 
\begin{equation}\label{eq:prop7.13_1}
    \int h(\zbf)q(\zbf)d\zbf=\int_{\mathcal{A}} h(\zbf)q(\zbf)d\zbf+\int_{\mathcal{A}^c} h(\zbf)q(\zbf)d\zbf.
\end{equation}
By the Taylor expansion, the first term is equal to
\begin{equation}\label{eq:prop7.13_2}
    \begin{aligned}
        &=\int_{\mathcal{A}}\left\{h(\sbf)+(\zbf-\sbf)^{\T}\nabla h(\sbf)+\frac{1}{2}(\zbf-\sbf)^{\T}\nabla^2h(\sbf)(\zbf-\sbf)\right\}q(\zbf)d\zbf+o(\varepsilon\epsilon_n^2)\\
        &\leq |h(\sbf)|+\frac{1}{2}\int_{\mathcal{A}}(\zbf-\sbf)^{\T}\nabla^2h(\sbf)(\zbf-\sbf)q(\zbf)d\zbf+o(\varepsilon\epsilon_n^2)\\
        &=\frac{\varepsilon\epsilon_n^2}{2}+\frac{1}{2}\int_{\mathcal{A}}(\zbf-\sbf)^{\T}\nabla^2h(\sbf)(\zbf-\sbf)q(\zbf)d\zbf+o(\varepsilon\epsilon_n^2)\\
        &=\frac{\varepsilon\epsilon_n^2}{2}+o(\varepsilon\epsilon_n^2)\\
        &\leq \frac{3\varepsilon\epsilon_n^2}{4},
    \end{aligned}
\end{equation}
where the second step holds because $q(\zbf)$ is symmetric around $\sbf$. The third step holds in view of Lemma 7.8 in \cite{bhattacharya2020variational} and the fact that $\sbf$ satisfies $\|f_{G_{\etabf}(\sbf)}-f_0\|_{\infty}\leq\varepsilon\epsilon_n^2/4$.
\\The final step is justified next. With $J=\{1,\ldots,r_n\}$, let $\nabla^2h(\sbf)=((b_{jj^{'}}))_{j\in J, j^{'}\in J}$,
\begin{equation*}
    \int_{\mathcal{A}}(\zbf-\sbf)^{\T}\nabla^2h(\sbf)(\zbf-\sbf)q(\zbf)d\zbf=\sum_{j=1}^{r_n}b_{jj}\int_{|z_j-s_{j}|\leq\sqrt{\varepsilon\epsilon_n^2/r_n}}(z_j-s_{j})^2q(z_j)dz_j,
\end{equation*}
where the cross-covariance terms disappear since $z_j$ are independent and $q(\zbf)$ is symmetric around $\sbf$. Therefore,
\begin{equation*}
    \int_{\mathcal{A}}(\zbf-\sbf)^{\T}\nabla^2h(\sbf)(\zbf-\sbf)q(\zbf)d\zbf\leq\sum_{j=1}^{r_n}b_{jj}\int(z_j-s_{j})^2q(z_j)dz_j=\frac{1}{n}\sum_{j=1}^{r_n}|b_{jj}|.
\end{equation*}
Using Lemma \ref{lemma:lemma7.9}, we get
\begin{equation*}
    \int_{\mathcal{A}}(\zbf-\sbf)^{\T}\nabla^2h(\sbf)(\zbf-\sbf)q(\zbf)d\zbf\leq\frac{1}{n}\left\{D_n (2c^2+1) \left|\left|\frac{\partial G_{\etabf}}{\partial \zbf}\right|\right|_{F}^2\right\}=o(\varepsilon\epsilon_n^2),
\end{equation*}
where the last equality holds since $D_n\sim n^u$ and $\|\partial G_{\etabf}/\partial \zbf\|_F^2
=o(\varepsilon\epsilon_n^2n^{1-u})$.
\\We next handle the second term in \eqref{eq:prop7.13_1}. Using Lemma 7.8 in \cite{bhattacharya2020variational}, note that
\begin{equation*}
    \begin{aligned}
        \int_{\mathcal{A}^c}h(\zbf)q(\zbf)d\zbf&\leq 2\int_{\mathcal{A}^c}\left(\int_{\xbf\in[0,1]^p}|f_0(\xbf)-f_{G_{\etabf}(\zbf)}(\xbf)|d\xbf\right)q(\zbf)d\zbf\\
        &\leq2\int_{\xbf\in[0,1]^p}|f_0(\xbf)|d\xbf\int_{\mathcal{A}^c}q(\zbf)d\zbf+2\int_{\mathcal{A}^c}\int_{\xbf\in[0,1]^p}|f_{G_{\etabf}(\zbf)}(\xbf)|d\xbf q(\zbf)d\zbf.
    \end{aligned}
\end{equation*}
First, note that using $|\sigma(\cdot)|\leq 1$, we get $|f_{G_{\etabf}(\zbf)}(\xbf)|\leq\sum_{t=1}^{d}|a_t^{\Upsilonbf}|$. Thus, $|f_{G_{\etabf}(\zbf)}(\xbf)|\leq\sum_{t=1}^{d}|a_t^{\Upsilonbf}|+\sum_{t=1}^{d}|a_t-a_t^{\Upsilonbf}|$ which implies
\begin{equation}\label{eq:prop7.13_3}
    \frac{1}{2}\int_{\mathcal{A}^c}h(\zbf)q(\zbf)d\zbf\leq Q(\mathcal{A}^c)\left(\int_{\xbf\in[0,1]^p}|f_0(\xbf)d\xbf|+\sum_{t=1}^{r_n}|a_t^{\Upsilonbf}|\right)+\int_{\mathcal{A}^c}\left(\sum_{t=1}^{d}|a_t-a_t^{\Upsilonbf}|\right)q(\zbf)d\zbf.
\end{equation}
First, note that $\mathcal{A}^c=\cup_{j=1}^{r_n}\mathcal{A}_j^c$ where $\mathcal{A}_j=\{|z_j-s_{j}|\leq\sqrt{\varepsilon\epsilon_n^2/r_n}\}$. Therefore,
\begin{equation}\label{eq:prop7.13_4}
    \begin{aligned}
        Q(\mathcal{A}^c)&=Q(\cup_{j=1}^{r_n}\mathcal{A}_j^c)\leq\sum_{j=1}^{r_n}Q(\mathcal{A}_j^c)=\sum_{j=1}^{r_n}\int_{|z_j-s_{j}|>\sqrt{\varepsilon\epsilon_n^2/r_n}}q(z_j)dz_j\\
        &=2r_n\left\{1-\Phi\left(\sqrt{\frac{n\varepsilon\epsilon_n^2}{r_n}}\right)\right\}.
    \end{aligned}
\end{equation}
Using \eqref{eq:prop7.13_4} in the first term of \eqref{eq:prop7.13_3}, we get
\begin{equation}\label{eq:prop7.13_5}
    \begin{aligned}
        Q(\mathcal{A}^c)\left\{\int_{\xbf\in[0,1]^p}|f_0(\xbf)d\xbf|+\sum_{t=1}^{r_n}|a_t^{\Upsilonbf}|\right\}&\lesssim2(\|f_0\|_1+\|\Upsilonbf\|_1)r_n\left\{1-\Phi\left(\sqrt{\frac{n\varepsilon\epsilon_n^2}{r_n}}\right)\right\}\\
        &\leq 2(\|f_0\|_1+\sqrt{r_n}\|\Upsilonbf\|_2)r_n\left\{1-\Phi\left(\sqrt{\frac{n\varepsilon\epsilon_n^2}{r_n}}\right)\right\}\\
        &\leq 4n\epsilon_n^2r_n\left\{1-\Phi\left(\sqrt{\frac{n\varepsilon\epsilon_n^2}{r_n}}\right)\right\}\\
        &\leq 4nr_n\left\{1-\Phi\left(\sqrt{\frac{n\varepsilon\epsilon_n^2}{r_n}}\right)\right\}\\
        &\sim 4nr_n\sqrt{\frac{r_n}{n\varepsilon\epsilon_n^2}}e^{-\frac{n\varepsilon\epsilon_n^2}{2r_n}}\quad\text{by}\;\text{Mill's}\;\text{ratio}\\
        &\leq 4nr_ne^{-\frac{n\varepsilon\epsilon_n^2}{2r_n}}=o(n\epsilon_n^2),
    \end{aligned}
\end{equation}
where the second step follows by the Cauchy--Schwartz inequality, the third step is satisfied because $\|\Upsilonbf\|_2=o(\sqrt{n\epsilon_n^2})$ and $\sqrt{r_n}=o(\sqrt{n\epsilon_n^2})$ and $\|f_0\|_1$ are fixed and the fourth step is satisfied because $\epsilon_n^2\leq 1$. The last equality in the above steps holds because
\begin{equation}\label{eq:prop7.13_6}
    -\frac{n\epsilon_n^2}{r_n}+\log (r_n)+\log (n)-\log (\varepsilon)\leq -\frac{n\epsilon_n^2}{r_n}+3\log (n)=-\log (n)\left\{\frac{n\epsilon_n^2}{r_n\log (n)}-3\right\}\rightarrow-\infty,
\end{equation}
where the first inequality holds since $r_n\leq n$.\\
For the second term in \eqref{eq:prop7.13_3},
\begin{equation}\label{eq:prop7.13_7}
    \begin{aligned}
        \int_{\mathcal{A}_c}\left(\sum_{t=1}^{d}|a_t-a_t^{\Upsilonbf}|\right)q(\zbf)d\zbf&=\sum_{t=1}^{d}\int_{\mathcal{A}^c}|a_t-a_t^{\Upsilonbf}|q(\zbf)d\zbf\\
        &\leq\sum_{k=1}^{D_n}\int_{\mathcal{A}^c}|w_k-w_k^{\Upsilonbf}|q(\zbf)d\zbf\\
        &\leq\sum_{k=1}^{D_n}\sum_{j=1}^{r_n}\int_{\mathcal{A}^c}|z_j-s_{j}|q(\zbf)d\zbf\\
        &=\sum_{k=1}^{D_n}\sum_{j=1}^{r_n}\int_{|z_j-s_{j}|>\sqrt{\varepsilon\epsilon_n^2/r_n}}\sqrt{\frac{n}{2\pi}}|z_j-s_{j}|e^{-\frac{n}{2}|z_j-s_{j}|^2}dz_j\\
        &=\frac{2}{\sqrt{n}}\sum_{k=1}^{D_n}\sum_{j=1}^{r_n}\int_{\sqrt{\varepsilon\epsilon_n^2/r_n}}^{\infty}\frac{u}{\sqrt{2\pi}}e^{-\frac{1}{2}u^2}du\\
        &\leq 2D_nr_ne^{-\frac{n\varepsilon\epsilon_n^2}{2r_n}}=o(\varepsilon\epsilon_n^2),
    \end{aligned}
\end{equation}
where the last equality is a consequence of \eqref{eq:prop7.13_6} and $D_n\sim n^u$.
\\Combining \eqref{eq:prop7.13_3}, \eqref{eq:prop7.13_5} and \eqref{eq:prop7.13_7}, we get
\begin{equation*}
    \int_{\mathcal{A}^c}h(\zbf)q(\zbf)d\zbf=o(\varepsilon\epsilon_n^2)\leq\frac{\varepsilon\epsilon_n^2}{4}.
\end{equation*}
This together with \eqref{eq:prop7.13_2} completes the proof.
\end{proof}

\begin{proposition}\label{proposition:prop7.14}
Let $n\epsilon_n^2\rightarrow\infty$. Suppose $\pi_0(\zbf)$ satisfies \eqref{eq:prior} with $\|\mubf\|_2^2=o(n\epsilon_n^2)$ and $\|\zetabf\|_{\infty}=O(n)$. Suppose that for some $0<b<1$, $r_n\log (n)=o(n^b\epsilon_n^2)$, then for $\Tilde{C}_n=e^{n^b\epsilon_n^2/r_n}$ and $\mathcal{F}_n$ as in \eqref{eq:f_sieves}, we have for any $\varepsilon>0$,
\begin{equation*}
    \int_{\zbf\in\mathcal{F}_n^c}\pi_0(\zbf)d\zbf\leq e^{-n\varepsilon\epsilon_n^2}.
\end{equation*}
\end{proposition}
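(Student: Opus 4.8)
The plan is to treat $\mathcal{F}_n^c$ as a union of coordinatewise tail events and thereby reduce the multivariate statement to a one-dimensional Gaussian tail estimate. Writing $\Pi_0$ for the prior probability with density $\pi_0$ and recalling that $\pi_0$ factorizes into independent marginals $z_j\sim N(\mu_j,\zeta_j^2)$, I would first observe $\mathcal{F}_n^c=\bigcup_{j=1}^{r_n}\{\zbf:|z_j|>\Tilde{C}_n\}$, so the union bound gives
\begin{equation*}
\int_{\zbf\in\mathcal{F}_n^c}\pi_0(\zbf)d\zbf \le \sum_{j=1}^{r_n}\Pi_0\bigl(|z_j|>\Tilde{C}_n\bigr).
\end{equation*}
It then suffices to bound each marginal tail and to check that the resulting sum is dominated by $e^{-n\varepsilon\epsilon_n^2}$.

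For each $j$, standardizing via $z_j=\mu_j+\zeta_j\xi_j$ with $\xi_j\sim N(0,1)$ yields $\Pi_0(|z_j|>\Tilde{C}_n)=\{1-\Phi((\Tilde{C}_n-\mu_j)/\zeta_j)\}+\{1-\Phi((\Tilde{C}_n+\mu_j)/\zeta_j)\}$, where both arguments are at least $(\Tilde{C}_n-|\mu_j|)/\zeta_j>0$. Applying the Gaussian tail bound $1-\Phi(x)\le\tfrac12 e^{-x^2/2}$ (equivalently Mill's ratio) gives $\Pi_0(|z_j|>\Tilde{C}_n)\le \exp\{-(\Tilde{C}_n-|\mu_j|)^2/(2\zeta_j^2)\}$. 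Now I feed in the prior-parameter hypotheses: $|\mu_j|\le\|\mubf\|_2=o(\sqrt{n\epsilon_n^2})$ and $\zeta_j\le\|\zetabf\|_\infty=O(n)$. Since (as confirmed in the next step) $\Tilde{C}_n$ dwarfs $|\mu_j|$, one has $\Tilde{C}_n-|\mu_j|\ge\Tilde{C}_n/2$ for large $n$, and each tail is at most $\exp\{-\Tilde{C}_n^2/(8C^2n^2)\}$ for a suitable constant $C$, so that
\begin{equation*}
\int_{\zbf\in\mathcal{F}_n^c}\pi_0(\zbf)d\zbf \le r_n\exp\left\{-\frac{\Tilde{C}_n^2}{8C^2n^2}\right\}.
\end{equation*}

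The final step is to verify that this exponent beats both the $r_n$-fold union-bound factor and the target rate, and here the hypothesis $r_n\log n=o(n^b\epsilon_n^2)$ is the decisive input: it forces $\log\Tilde{C}_n=n^b\epsilon_n^2/r_n\gg\log n$, so $\Tilde{C}_n$ grows faster than any fixed power of $n$; in particular $\Tilde{C}_n^2/n^2=\exp\{2n^b\epsilon_n^2/r_n-2\log n\}\to\infty$, which also justifies $\Tilde{C}_n/|\mu_j|\to\infty$ used above. Taking logarithms, $\log r_n-\Tilde{C}_n^2/(8C^2n^2)\le -n\varepsilon\epsilon_n^2$ for all large $n$, because the subtracted term grows doubly-exponentially in $n^b\epsilon_n^2/r_n$ whereas $\log r_n=O(\log n)$ and $n\varepsilon\epsilon_n^2\le\varepsilon n$ (using $\epsilon_n^2\le1$) are at most polynomial, giving the claim. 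The only genuine subtlety is this last bookkeeping: one must pin down the ordering of rates and confirm that the super-polynomial growth of $\Tilde{C}_n$, driven entirely by $r_n\log n=o(n^b\epsilon_n^2)$, overwhelms the $O(n)$ variance scaling hidden in $\zeta_j$ and the union-bound penalty; once that is established, the estimate is immediate.
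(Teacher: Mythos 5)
Your proposal is correct and follows essentially the same route as the paper's proof: a union bound over the $r_n$ coordinates, a one-dimensional Gaussian tail estimate (the paper uses Mill's ratio where you use $1-\Phi(x)\le\tfrac12 e^{-x^2/2}$, an immaterial difference), and the same final rate comparison in which $r_n\log(n)=o(n^b\epsilon_n^2)$ forces $\Tilde{C}_n$ to grow super-polynomially and hence the exponent to dominate both $\log r_n$ and $n\varepsilon\epsilon_n^2$.
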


\begin{proof}
Let $\mathcal{F}_{jn}=\{z_j:|z_j|\leq\Tilde{C}_n\}$.
\begin{equation*}
    \mathcal{F}_n=\cap_{j=1}^{r_n}\mathcal{F}_{jn}\Rightarrow \mathcal{F}_n^c=\cap_{j=1}^{r_n}\mathcal{F}_{jn}^c.
\end{equation*}
Note that
\begin{equation*}
    \begin{aligned}
        \int_{\zbf\in\mathcal{F}_n^c}\pi_0(\zbf)d\zbf&\leq\sum_{j=1}^{r_n}\int_{\mathcal{F}_{jn}^c}\frac{1}{\sqrt{2\pi\zeta_j^2}}e^{-\frac{(z_j-\mu_j)^2}{2\zeta_j^2}}dz_j\\
        &=\sum_{j=1}^{r_n}\int_{-\infty}^{-\Tilde{C}_n}\frac{1}{\sqrt{2\pi\zeta_j^2}}e^{-\frac{(z_j-\mu_j)^2}{2\zeta_j^2}}dz_j+\sum_{j=1}^{r_n}\int^{\infty}_{\Tilde{C}_n}\frac{1}{\sqrt{2\pi\zeta_j^2}}e^{-\frac{(z_j-\mu_j)^2}{2\zeta_j^2}}dz_j\\
        &=\sum_{j=1}^{r_n}\left\{1-\Phi\left(\frac{\Tilde{C}_n-\mu_j}{\zeta_j}\right)\right\}+\sum_{j=1}^{r_n}\left\{1-\Phi\left(\frac{\Tilde{C}_n+\mu_j}{\zeta_j}\right)\right\}.
    \end{aligned}
\end{equation*}
Since $\|\mubf\|_2^2=o(n\epsilon_n^2)$, this implies $\|\mubf\|_{\infty}=o(\sqrt{n}\epsilon_n)$. Also, $\|\zetabf\|_{\infty}=O(n)$, which implies for some $M>0$,
\begin{equation}\label{eq:prop7.14_1}
    \min\left\{\frac{|\Tilde{C}_n-\mu_j|}{\zeta_j},\frac{|\Tilde{C}_n+\mu_j|}{\zeta_j}\right\}\geq \frac{(\Tilde{C}_n-\sqrt{n})}{nM}\geq e^{\log (\Tilde{C}_n)-2\log (n)}-\frac{1}{\sqrt{n}M}\sim e^{R_n\log (n)}\rightarrow\infty,
\end{equation}
where the last asymptotic relation holds because $1/\sqrt{n}\rightarrow 0$ and $R_n=(n^b\epsilon_n^2)/\{r_n\log (n)\}-2\rightarrow\infty$ since $r_n\log (n)=o(n^b\epsilon_n^2)$.
\\Thus, using Mill's ratio, we get
\begin{equation*}
    \begin{aligned}
        \int_{\zbf\in\mathcal{F}_n^c}\pi_0(\zbf)d\zbf&\lesssim\sum_{j=1}^{r_n}\frac{\zeta_j}{\Tilde{C}_n-\mu_j}e^{-\frac{(\Tilde{C}_n-\mu_j)^2}{2\zeta_j^2}}+\sum_{j=1}^{r_n}\frac{\zeta_j}{\Tilde{C}_n+\mu_j}e^{-\frac{(\Tilde{C}_n+\mu_j)^2}{2\zeta_j^2}}\\
        &\leq 2r_n e^{-\frac{(\Tilde{C}_n-\sqrt{n})^2}{2n^2M^2}}\lesssim e^{-\varepsilon n\epsilon_n^2},
    \end{aligned}
\end{equation*}
where the last asymptotic inequality holds because
\begin{equation*}
    \frac{(\Tilde{C}_n-\sqrt{n})^2}{2n^2M^2}-\log (2r_n) \gtrsim\frac{1}{2}e^{2R_n\log (n)}-2\log (n)=n\left\{\frac{e^{R_n}}{2}-\frac{2\log (n)}{n}\right\}\geq \varepsilon n\epsilon_n^2.
\end{equation*}
In the above step, the first asymptotic inequality holds due to \eqref{eq:prop7.14_1} and $r_n\leq n$. The last inequality holds since $R_n\rightarrow\infty$ and $\log (n)/n\rightarrow 0$.
\end{proof}

\begin{proposition}\label{proposition:prop7.15}
Let $n\epsilon_n^2\rightarrow\infty$. Suppose $r_n\log (n) = o(n^b\epsilon_n^2)$ for some $0<b<1$, $D_n\log (n) = o(n^v\epsilon_n^2)$ for some $0<v<1$ and $\pi_0(\zbf)$ satisfies \eqref{eq:prior} with $\|\mubf\|_2^2=o(n\epsilon_n^2)$. Assume that $w_k=G_{\etabf k}(\zbf)$ satisfies $|w_k|\leq C_n$, for $k=1,\ldots,D_n$. Furthermore, assume that the deterministic function $G_{\etabf}$ is differentiable at $\zbf\in\mathscr{Z}$ and the first-order derivative satisfies $\|\partial G_{\etabf}/\partial \zbf\|_F^2
=o(\varepsilon\epsilon_n^2n^{1-u})$ for $\zbf\in\mathcal{F}_n$. Then, for every $\varepsilon > 0$,
\begin{equation*}
    \log \left\{\int_{\mathcal{V}^c_{\varepsilon\epsilon_n}}\frac{L(\mathcal{D}_n; f_{G_{\etabf}(\zbf)})}{L_0}\pi_0(\zbf)d\zbf\right\}\leq \log 2-\varepsilon^2 n\epsilon_n^2+o_{P_0}(1).
\end{equation*}
\end{proposition}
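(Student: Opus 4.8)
The plan is to bound the integral by splitting the domain $\mathcal{V}^c_{\varepsilon\epsilon_n}$ according to the sieve $\mathcal{F}_n$ of \eqref{eq:f_sieves}, writing
\[
\int_{\mathcal{V}^c_{\varepsilon\epsilon_n}}\frac{L(\mathcal{D}_n; f_{G_{\etabf}(\zbf)})}{L_0}\pi_0(\zbf)d\zbf = \int_{\mathcal{V}^c_{\varepsilon\epsilon_n}\cap\mathcal{F}_n}\frac{L(\mathcal{D}_n; f_{G_{\etabf}(\zbf)})}{L_0}\pi_0(\zbf)d\zbf + \int_{\mathcal{V}^c_{\varepsilon\epsilon_n}\cap\mathcal{F}_n^c}\frac{L(\mathcal{D}_n; f_{G_{\etabf}(\zbf)})}{L_0}\pi_0(\zbf)d\zbf,
\]
and controlling each piece so that it is at most a constant multiple of $e^{-\varepsilon^2 n\epsilon_n^2}$ with $P_0$-probability tending to one. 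The whole argument rests on the change-of-measure identity $E_{P_0}[(L(\mathcal{D}_n; f_{G_{\etabf}(\zbf)})/L_0)\,\psi(\mathcal{D}_n)] = E_{f_{G_{\etabf}(\zbf)}}[\psi(\mathcal{D}_n)]$, valid for any bounded data functional $\psi$ because $L(\mathcal{D}_n; f_{G_{\etabf}(\zbf)})/L_0$ is the likelihood ratio and $L_0$ is the $P_0$-density.

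For the tail piece over $\mathcal{F}_n^c$, I would apply Fubini together with the identity above (taking $\psi\equiv 1$) to get $E_{P_0}\int_{\mathcal{F}_n^c}(L/L_0)\pi_0\, d\zbf = \int_{\mathcal{F}_n^c}\pi_0(\zbf)d\zbf$, and then invoke Proposition \ref{proposition:prop7.14}, whose hypotheses ($r_n\log(n)=o(n^b\epsilon_n^2)$, $\|\mubf\|_2^2=o(n\epsilon_n^2)$, $\|\zetabf\|_\infty=O(n)$, and $\tilde{C}_n=e^{n^b\epsilon_n^2/r_n}$) are exactly those assumed here, to bound this by $e^{-n\varepsilon'\epsilon_n^2}$ for any prescribed $\varepsilon'$. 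Choosing $\varepsilon'>\varepsilon^2$ and applying Markov's inequality shows this contribution is $o(e^{-\varepsilon^2 n\epsilon_n^2})$ with probability tending to one.

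The central and most delicate piece is the integral over $\mathcal{V}^c_{\varepsilon\epsilon_n}\cap\mathcal{F}_n$, which I would handle by constructing exponentially powerful tests. Using the bracketing-entropy bound of Lemma \ref{lemma:lemma7.10}, whose integral $\int_{\varepsilon^2/8}^{\sqrt2\varepsilon}\sqrt{H(u,\widetilde{\mathcal{F}}_n,\|\cdot\|_2)}\,du$ I would verify is dominated by $\sqrt{n}\,\varepsilon\epsilon_n$ --- this is where the sieve radius $\tilde{C}_n=e^{n^b\epsilon_n^2/r_n}$, the weight bound $C_n$, the growth rates $r_n\log n=o(n^b\epsilon_n^2)$ and $D_n\log n=o(n^v\epsilon_n^2)$, and the Jacobian constraint $\|\partial G_{\etabf}/\partial\zbf\|_F^2=o(\varepsilon\epsilon_n^2 n^{1-u})$ all enter, since $r_n\log\tilde{C}_n=n^b\epsilon_n^2=o(n\epsilon_n^2)$ forces the root entropy to be $o(\sqrt{n}\,\varepsilon\epsilon_n)$ --- the standard Le Cam--Birg\'e testing theory yields tests $\phi_n$ with $E_{P_0}\phi_n\to 0$ and $\sup_{\zbf\in\mathcal{V}^c_{\varepsilon\epsilon_n}\cap\mathcal{F}_n}E_{f_{G_{\etabf}(\zbf)}}(1-\phi_n)\le e^{-Kn\varepsilon^2\epsilon_n^2}$ with $K$ as large as needed. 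Writing $I'=\int_{\mathcal{V}^c_{\varepsilon\epsilon_n}\cap\mathcal{F}_n}(L/L_0)\pi_0\,d\zbf$ and inserting $1=\phi_n+(1-\phi_n)$ into the integrand gives the identity $(1-\phi_n)I'=\int_{\mathcal{V}^c_{\varepsilon\epsilon_n}\cap\mathcal{F}_n}(1-\phi_n)(L/L_0)\pi_0\,d\zbf$; taking $E_{P_0}$ of the right-hand side and using the change of measure bounds its expectation by $e^{-Kn\varepsilon^2\epsilon_n^2}$, so Markov's inequality controls it by $e^{-\varepsilon^2 n\epsilon_n^2}$ with high probability, while $E_{P_0}\phi_n\to 0$ yields $1-\phi_n\ge 1/2$ with high probability. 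Hence $I'\le 2e^{-\varepsilon^2 n\epsilon_n^2}$ on an event of probability tending to one.

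Finally, combining the two contributions and taking logarithms, I would use $\log(2e^{-\varepsilon^2 n\epsilon_n^2}+o(e^{-\varepsilon^2 n\epsilon_n^2}))=\log 2-\varepsilon^2 n\epsilon_n^2+o_{P_0}(1)$ to obtain the claimed inequality. I expect the main obstacle to be the third step: not the invocation of the testing theorem itself, but the careful bookkeeping needed to certify, from the assumed growth rates and the Jacobian bound (which, via $|w_k|\le C_n$ and $|z_j|\le\tilde{C}_n$ on $\mathcal{F}_n$, keeps $\log C_n$ of the same order as $\log\tilde{C}_n$), that the entropy integral of Lemma \ref{lemma:lemma7.10} is genuinely $o(\sqrt{n}\,\varepsilon\epsilon_n)$, so that the test-error exponent dominates $\varepsilon^2 n\epsilon_n^2$.
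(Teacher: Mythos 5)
Your proposal is correct and follows the same skeleton as the paper's proof: split $\mathcal{V}^c_{\varepsilon\epsilon_n}$ along the sieve $\mathcal{F}_n$, kill the tail $\mathcal{F}_n^c$ via the prior-mass bound of Proposition \ref{proposition:prop7.14}, control the main piece using the bracketing-entropy integral of Lemma \ref{lemma:lemma7.10}, and convert the resulting high-probability bound $2e^{-\varepsilon^2 n\epsilon_n^2}$ into the stated $\log 2 - \varepsilon^2 n\epsilon_n^2 + o_{P_0}(1)$ by the indicator decomposition. The one genuine divergence is the mechanism for the piece over $\mathcal{V}^c_{\varepsilon\epsilon_n}\cap\mathcal{F}_n$: you construct Le Cam--Birg\'e tests $\phi_n$ and run the classical Ghosal--Ghosh--van der Vaart argument ($(1-\phi_n)I'$, change of measure, Markov), whereas the paper bounds $\int_{\mathcal{V}^c\cap\mathcal{F}_n}(L/L_0)\pi_0\,d\zbf$ by the supremum of the likelihood ratio over the sieve and invokes Theorem 1 of Wong and Shen (1995) directly, which packages the same chaining/testing argument into a single large-deviation inequality for $\sup L/L_0$. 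The two routes are interchangeable here since both are fed by exactly the entropy computation you identify as the crux; the paper's route is shorter because the sup-LR inequality avoids introducing the tests explicitly, while yours is more self-contained and makes the type I/type II error trade-off visible. One small caution on your side: the type II exponent constant $K$ from the testing theorem is a fixed universal constant rather than ``as large as needed,'' so, just as the paper absorbs the Wong--Shen constants into the arbitrary $\varepsilon$, you should do the same rather than rely on tuning $K$. Your treatment of the $\mathcal{F}_n^c$ piece simply unpacks Lemma 7.6 of Bhattacharya et al.\ (Fubini, change of measure, Markov) that the paper cites as a black box, so it is the same argument written out.
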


\begin{proof}
In this direction, we first show
\begin{equation}\label{eq:prop7.15_1}
    P_0\left(\int_{\mathcal{V}_{\varepsilon\epsilon_n}^c} \frac{L(\mathcal{D}_n; f_{G_{\etabf}(\zbf)})}{L_0}\pi_0(\zbf)d\zbf> 2e^{-\varepsilon^2 n\epsilon_n^2}\right)\rightarrow 0, \quad n\rightarrow\infty.
\end{equation}
Note that
\begin{equation*}
    \begin{aligned}
        &P_0\left(\int_{\mathcal{V}_{\varepsilon\epsilon_n}^c} \frac{L(\mathcal{D}_n; f_{G_{\etabf}(\zbf)})}{L_0}\pi_0(\zbf)d\zbf> 2e^{-\varepsilon^2 n\epsilon_n^2}\right)\\
        &\leq P_0\left(\int_{\mathcal{V}_{\varepsilon\epsilon_n}^c\cap\mathcal{F}_n} \frac{L(\mathcal{D}_n; f_{G_{\etabf}(\zbf)})}{L_0}\pi_0(\zbf)d\zbf> e^{-\varepsilon^2 n\epsilon_n^2}\right)\\
        &+P_0\left(\int_{\mathcal{F}_n^c} \frac{L(\mathcal{D}_n; f_{G_{\etabf}(\zbf)})}{L_0}\pi_0(\zbf)d\zbf> e^{-\varepsilon^2 n\epsilon_n^2}\right).
    \end{aligned}
\end{equation*}
Using Lemma \ref{lemma:lemma7.10} with $\varepsilon=\varepsilon\epsilon_n$ and $C_n=e^{n^v\epsilon_n^2/r_n}$, $\Tilde{C}_n=e^{n^b\epsilon_n^2/r_n}$,
\begin{equation*}
    \begin{aligned}
        &\int_{\varepsilon^2/8}^{\sqrt{2}\varepsilon}\sqrt{H_(u, \Tilde{\mathcal{F}}_n, \|\cdot\|_2)}du\\
        &\lesssim\varepsilon\epsilon_n\sqrt{2r_n\{\log (r_n)+\log (D_n)+\log (C_n)+\log (\Tilde{C}_n)-\log (\epsilon_n)\}}\\
        &\leq \varepsilon\epsilon_n O(\max\{\sqrt{r_n\log (r_n)}, \sqrt{r_n\log (D_n)}, \sqrt{r_n\log (C_n)}, \sqrt{r_n\log (\Tilde{C}_n}),\sqrt{-\log (\epsilon_n})\})\\
        &\leq \varepsilon\epsilon_n\max\{o(\sqrt{n\epsilon_n}), o(\sqrt{n\epsilon_n}),O(\sqrt{n^v}\epsilon_n), O(\sqrt{n^b}\epsilon_n), O(\sqrt{\log (n)})\}\\
        &\leq \varepsilon^2\epsilon_n^2\sqrt{n}.
    \end{aligned}
\end{equation*}
The first inequality in the third step follows because $D_n\leq n$, $D_n\log (n)=o(n\epsilon_n)$ and $r_n\log (n)=o(n\epsilon_n)$, $r_n\log (C_n)=r_n(n^v\epsilon_n^2/r_n)$ and $r_n\log (\Tilde{C}_n)=r_n(n^b\epsilon_n^2/r_n)$, $1/\epsilon_n^2=o(n)$, then $-\log (\epsilon_n^2)\leq\log (n)$. The second inequality in the third step follows since $n^v/n=o(1)$, $n^b/n=o(1)$ and $\log (n)=o(n\epsilon_n^2)$.
\\By Theorem 1 in \cite{wong1995probability}, for some constant $C>0$, we have
\begin{equation}\label{eq:prop7.15_2}
    \begin{aligned}
         P_0\left(\int_{\mathcal{V}_{\varepsilon\epsilon_n}^c\cap\mathcal{F}_n} \frac{L(\mathcal{D}_n; f_{G_{\etabf}(\zbf)})}{L_0}\pi_0(\zbf)d\zbf> e^{-\varepsilon^2 n\epsilon_n^2}\right)
         &\leq P_0\left(\sup_{\zbf\in\mathcal{V}_{\varepsilon\epsilon_n}^c\cap\mathcal{F}_n}\frac{L(\mathcal{D}_n; f_{G_{\etabf}(\zbf)})}{L_0}>e^{-\varepsilon^2n\epsilon_n^2}\right)\\
         &\leq 4\exp(-C\varepsilon^2n\epsilon_n^2)\rightarrow 0.
    \end{aligned}
\end{equation}
Using Proposition \ref{proposition:prop7.14} with $\varepsilon=2\varepsilon$, we have 
\begin{equation*}
    \int_{\zbf\in\mathcal{F}_n^c}\pi_0(\zbf)d\zbf\leq e^{-2n\varepsilon^2\epsilon_n^2}.
\end{equation*}
Therefore, using Lemma 7.6 in \cite{bhattacharya2020variational} with $\varepsilon=2\varepsilon^2\epsilon_n^2$ and $\Tilde{\varepsilon}=\varepsilon^2\epsilon_n^2$, we have
\begin{equation}\label{eq:prop7.15_3}
    P_0\left(\int_{\mathcal{F}_n^c} \frac{L(\mathcal{D}_n; f_{G_{\etabf}(\zbf)})}{L_0}\pi_0(\zbf)d\zbf> e^{-\varepsilon^2 n\epsilon_n^2}\right)\leq e^{-\varepsilon^2n\epsilon_n^2} \rightarrow0.
\end{equation}
Combining \eqref{eq:prop7.15_2} and \eqref{eq:prop7.15_3}, \eqref{eq:prop7.15_1} follows.
\\Finally, to complete the proof, let $(\mathrm{I})=\log[\int_{\mathcal{V}^c_{\varepsilon\epsilon_n}}\{L(\mathcal{D}_n\mid G_{\etabf}(\zbf))/L_0\}\pi_0(\zbf)d\zbf]$.
\begin{equation*}
    \begin{aligned}
         (\mathrm{I}) &= (\mathrm{I})\times\mathds{1}((\mathrm{I})\leq \log 2-\varepsilon^2\epsilon_n^2)+(\mathrm{I})\times\mathds{1}{((\mathrm{I})\geq \log 2-\varepsilon^2\epsilon_n^2)}\\
         &\leq \log 2-\varepsilon^2\epsilon_n^2 +\underbrace{(\mathrm{I})*\mathds{1}((\mathrm{I})> \log 2-\varepsilon^2\epsilon_n^2)}_{(\mathrm{II})}\\
         &=\log 2-\varepsilon^2\epsilon_n^2 + o_{P_0}(1),
    \end{aligned}
\end{equation*}
where the last equality follows from \eqref{eq:prop7.15_1} as below
\begin{equation*}
    P_0(|(\mathrm{II})|>\nu)\leq P_0(\mathds{1}((\mathrm{I})> \log 2-\varepsilon^2\epsilon_n^2)=1)=P_0((\mathrm{I})>\log 2-\varepsilon^2\epsilon_n^2)\rightarrow 0.
\end{equation*}
\end{proof}

\begin{proposition}\label{proposition:prop7.16}
Let $\pi_0(\zbf)$ satisfy \eqref{eq:prior} with $\|\zetabf\|_{\infty}=O(n)$ and $\|\zetabf^{*}\|_{\infty}=O(1)$, $\zetabf^{*}=1/\zetabf$. Assume that the deterministic function $G_{\etabf}$ is twice differentiable at $\zbf\in\mathscr{Z}$ and the first-order derivative satisfies $\|\partial G_{\etabf}/\partial \zbf\|_F^2=o(\varepsilon\epsilon_n^2n^{1-u})$ at $\sbf$, and the second-order derivative satisfies $$\{\sum_{k=1}^{D_n}(\partial^2 G_{\etabf k}/\partial z_j^2)^2\}^{1/2}=o(\sqrt{D_n}\sum_{k=1}^{D_n}(\partial G_{\etabf k}/\partial z_j)^2)$$ at $\sbf$, for $j=1,\ldots,r_n$.
\begin{itemize}
    \item[1.] 
    If $r_n\log (n)=o(n)$, $D_n\log (n)=o(n)$ and $\|\mubf\|^2_2=o(n)$, then
    \begin{equation*}
        d_{\mathrm{KL}}(q^{*}, p_{\etabf}(\cdot\mid\mathcal{D}_n))=o_{P_0}(n).
    \end{equation*}
    \item[2.]
    If $r_n\log (n)=o(n\epsilon_n^2)$, $D_n\log (n)=o(n\epsilon_n^2)$ and $\|\mubf\|^2_2=o(n\epsilon_n^2)$ and there exists a neural network such that $\|f_0-f_{\Upsilonbf}\|_{\infty}=o(\epsilon_n^2)$, $\|\Upsilonbf\|_2^2=o(n\epsilon_n^2)$, $\|\sbf\|_2^2=o(n\epsilon_n^2)$, then
    \begin{equation*}
        d_{\mathrm{KL}}(q^{*}, p_{\etabf}(\cdot\mid\mathcal{D}_n))=o_{P_0}(n\epsilon_n^2).
    \end{equation*}
\end{itemize}
\end{proposition}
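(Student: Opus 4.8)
The plan is to exploit the variational characterization of $q^{*}$. Since $(\hat\alphabf,\hat\etabf)$ jointly maximizes the ELBO and, by \eqref{eq:elbo}, $\mathcal{L}(\alphabf,\etabf)=\log L_{\tt marg}(\etabf;\mathcal{D}_n)-\textsc{KL}(q_{\alphabf}\,\|\,p_{\etabf}(\cdot\mid\mathcal{D}_n))$, freezing $\etabf=\hat\etabf$ shows that $q^{*}$ \emph{minimizes} $d_{\mathrm{KL}}(\cdot\,,p_{\etabf}(\cdot\mid\mathcal{D}_n))$ over the mean-field family $\mathcal{Q}$. Hence $d_{\mathrm{KL}}(q^{*},p_{\etabf}(\cdot\mid\mathcal{D}_n))\le d_{\mathrm{KL}}(q,p_{\etabf}(\cdot\mid\mathcal{D}_n))$ for every $q\in\mathcal{Q}$, and it suffices to exhibit one convenient $q$. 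I would take $q=N(\sbf,\mathbf{I}_{r_n}/\sqrt{n})$, where $\sbf$ is the latent point with $\Upsilonbf=G_{\etabf}(\sbf)$ the DNN approximating $f_0$ (from Assumption \ref{assump:assump2c} in part 2; from universal approximation with $\epsilon_n=1$ in part 1).

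Writing the posterior through the likelihood ratio $L/L_0$ and cancelling $L_0$, I would decompose
\begin{equation*}
d_{\mathrm{KL}}(q,p_{\etabf}(\cdot\mid\mathcal{D}_n))
= d_{\mathrm{KL}}(q,\pi_0)
-\int \log\frac{L(\mathcal{D}_n; f_{G_{\etabf}(\zbf)})}{L_0}q(\zbf)d\zbf
+\log\int \frac{L(\mathcal{D}_n; f_{G_{\etabf}(\zbf)})}{L_0}\pi_0(\zbf)d\zbf,
\end{equation*}
the last term being $-B$ in the notation of \eqref{eq:theorem4.1_1}. Each piece is then bounded separately. The prior-KL term is $o_{P_0}(n\epsilon_n^2)$ directly by Proposition \ref{proposition:prop7.11}, whose hypotheses on $\|\mubf\|_2^2$, $\|\sbf\|_2^2$, $\|\zetabf\|_\infty$, $\|\zetabf^{*}\|_\infty$ are exactly those assumed here. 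For the middle term, its $P_0$-expectation equals $-n\int h(\zbf)q(\zbf)d\zbf$ because $L=\prod_i\ell$; Proposition \ref{proposition:prop7.13} (where the Jacobian and second-order conditions enter via Lemma \ref{lemma:lemma7.9}) gives $\int h\,q\le\varepsilon\epsilon_n^2$, so the mean is $o(n\epsilon_n^2)$, and a Chebyshev argument using a bound on the integrated variance of the pointwise log-likelihood ratio over the $\sqrt{n}$-shrinking support of $q$ upgrades this to $o_{P_0}(n\epsilon_n^2)$. For the log-marginal term (steps 1(c)/2(c) referenced in the theorem proofs), I would lower-bound $\int(L/L_0)\pi_0$ via the prior-mass estimate of Proposition \ref{proposition:prop7.12} together with the concentration lemma (Lemma 7.6 of \cite{bhattacharya2020variational}) to conclude $B=o_{P_0}(n\epsilon_n^2)$.

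Combining the three bounds yields $d_{\mathrm{KL}}(q,p_{\etabf}(\cdot\mid\mathcal{D}_n))=o_{P_0}(n\epsilon_n^2)$, hence the same for $q^{*}$. Part 1 is the special case $\epsilon_n=1$, where the approximating network is supplied by the universal approximation theorem and the norm conditions collapse to $\|\mubf\|_2^2=o(n)$ and $r_n\log n=D_n\log n=o(n)$.

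The deterministic in-expectation bounds (Propositions \ref{proposition:prop7.11} and \ref{proposition:prop7.13}) are the routine part; the genuine difficulty is the probabilistic upgrade to $o_{P_0}$. Concretely, controlling the fluctuation of $\int\log(L/L_0)\,q\,d\zbf$ about its mean demands a sufficiently uniform second-moment control across the support of $q$, and the $B$ term requires converting the prior-mass lower bound into an in-probability statement. These concentration steps — not the algebra — are where the growth conditions on $r_n$, $D_n$ and the prior-parameter norms are genuinely consumed.
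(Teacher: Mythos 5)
Your proposal follows essentially the same route as the paper's proof: the same test distribution $q=N(\sbf,\mathbf{I}_{r_n}/\sqrt{n})$ centered at a latent point with $G_{\etabf}(\sbf)=\Upsilonbf$, the same three-term decomposition combined with the minimizing property of $q^{*}$ over $\mathcal{Q}$, and the same appeals to Propositions \ref{proposition:prop7.11}, \ref{proposition:prop7.13} and \ref{proposition:prop7.12}, with your Chebyshev and prior-mass concentration steps supplied in the paper by Lemmas 7.4 and 7.5 of \cite{bhattacharya2020variational} (not Lemma 7.6, which is used elsewhere). The one detail you gloss over is that in part 1 the bound $\|\sbf\|_2^2=o(n)$ needed for Proposition \ref{proposition:prop7.11} is not assumed but is derived from the boundedness of $\|\Upsilonbf\|_2^2$ via a local inversion of $G_{\etabf}$ at $\sbf$ (inverse function theorem), as in Step 1(a) of the paper's proof.
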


\begin{proof}
For any $q\in\mathcal{Q}$,
\begin{equation}\label{eq:prop7.16_1}
    \begin{aligned}
         d_{\mathrm{KL}}(q, p_{\etabf}(\cdot\mid\mathcal{D}_n))&=\int q(\zbf)\log \{q(\zbf)\}d\zbf-\int q(\zbf)\log \{p_{\etabf}(\zbf\mid\mathcal{D}_n)\}d\zbf\\
         &=\int q(\zbf)\log \{q(\zbf)\}d\zbf-\int q(\zbf)\log\left\{ \frac{L(\mathcal{D}_n; f_{G_{\etabf}(\zbf)})\pi_0(\zbf)}{\int L(\mathcal{D}_n; f_{G_{\etabf}(\zbf)})\pi_0(\zbf)d\zbf}\right\}d\zbf\\
         &=d_{\mathrm{KL}}(q,\pi_0)-\int\log \left\{\frac{L(\mathcal{D}_n; f_{G_{\etabf}(\zbf)})}{L_0}\right\}q(\zbf)d\zbf\\&+\log\left\{\int \frac{L(\mathcal{D}_n; f_{G_{\etabf}(\zbf)})}{L_0}\pi_0(\zbf)d\zbf\right\}\\
         & \leq d_{\mathrm{KL}}(q,\pi_0)+\left|\int\log \left\{\frac{L(\mathcal{D}_n; f_{G_{\etabf}(\zbf)})}{L_0}\right\}q(\zbf)d\zbf\right|\\
         &+\left|\log \left\{\int \frac{L(\mathcal{D}_n; f_{G_{\etabf}(\zbf)})}{L_0}\pi_0(\zbf)d\zbf\right\}\right|.
    \end{aligned}
\end{equation}
Since $q^{*}$ satisfies the minimized KL-distance to $p_{\etabf}(\cdot\mid\mathcal{D}_n)$ in the family $\mathcal{Q}$, therefore,
\begin{equation}\label{eq:prop7.16_2}
    P_0(d_{\mathrm{KL}}(q^{*}, p_{\etabf}(\cdot\mid\mathcal{D}_n))>\kappa)\leq P_0(d_{\mathrm{KL}}(q, p_{\etabf}(\cdot\mid\mathcal{D}_n))>\kappa),
\end{equation}
for any $\kappa>0$.
\\\textit{Proof of part 1}. Note that $r_n\log (n)=o(n)$, $D_n\log (n)=o(n)$, $\|\mubf\|^2_2=o(n)$, $\|\zetabf\|_{\infty}=O(n)$ and $\|\zetabf^{*}\|_{\infty}=O(1)$. We take $q(\zbf)=N(\sbf, \mathbf{I}_{r_n}/\sqrt{n})$ where $\sbf$ is defined next.
\\For $N\geq 1$, let $f_{\Upsilonbf_N}$ be a neural network that satisfies $\|f_{\Upsilonbf_N}-f_0\|_{\infty}\leq\varepsilon/4$. The existence of such a neural network is always guaranteed by \cite{hornik1989multilayer}. Define $\Upsilonbf$ as
\begin{equation*}
    a_j^{\Upsilonbf}
    =\left\{
    \begin{array}{ll}
        a_j^{\Upsilonbf_N}, &\quad j=1,\ldots,d_N\\
        0, &\quad j=d_N+1,\ldots,d
    \end{array}
    \right.
    \qquad 
    \omegabf_j^{\Upsilonbf}
    =\left\{
    \begin{array}{ll}
        \omegabf_j^{\Upsilonbf_N}, &\quad j=1,\ldots,d_N\\
        0, &\quad j=d_N+1,\ldots,d,
    \end{array}
    \right.
\end{equation*}
and let $d_N=r_n/2$, we could rewrite $\Upsilonbf$ as 
\begin{equation*}
    \Upsilonbf_{k}
    =\left\{
    \begin{array}{ll}
        (G_{\etabf}(\sbf))_k, &\quad k=1,\ldots,r_n\\
        0, &\quad k=r_n+1,\ldots,D_n,
    \end{array}
    \right.
\end{equation*}
where $(G_{\etabf}(\sbf))_k$ is the $k$th element of the vector $\Upsilonbf=G_{\etabf}(\sbf)$. This choice guarantees $\|f_{G_{\etabf}(\sbf)}-f_0\|_{\infty}=\|f_{\Upsilonbf}-f_0\|_{\infty}\leq \varepsilon/4$.
\\\textbf{Step 1 (a)}: Since $\|\Upsilonbf\|_2^2=\|\Upsilonbf_N\|_2^2$ is bounded, which implies $\|\Upsilonbf\|_2^2=o(n)$.
\\By the expression of $\Upsilonbf=G_{\etabf}(\sbf)$, $G_{\etabf}(\cdot)$ is invertible at $\sbf$ and thus $\sbf=h(\Upsilonbf)$ where $G_{\etabf}(h(\Upsilonbf))=\Upsilonbf$.
\\Denote $\Tilde{h}(\cdot)=h(\cdot)-h(\0bf)$. By the Taylor expansion,
\begin{equation*}
\begin{aligned}
    \sbf&\approx h(\0bf)+(\Upsilonbf-\0bf)^{\T}\nabla h(\0bf)\\
    &=h(\0bf)+\Upsilonbf^{\T}\{\nabla G_{\etabf}(\sbf_{0})\}^{-1},
\end{aligned}
\end{equation*}
where $\sbf_{0}=h(\0bf)$ the last equation follows by the inverse function theorem.
\\Therefore,
\begin{equation*}
    \|\sbf\|_2^2\leq \|\sbf_{0}\|_2^2+\|\Upsilonbf\|_2^2\|\{\nabla G_{\etabf}(\sbf_{0})\}^{-1}\|_F^2,
\end{equation*}
which implies $\|\sbf\|_2^2=o(n)$ since $\|\Upsilonbf\|_2^2=o(n)$ and $\|\{\nabla G_{\etabf}(\sbf_{0})\}^{-1}\|_F^2=O(1)$.
\\Using Proposition \ref{proposition:prop7.11}, with $\epsilon_n=1$, we get for any $\nu>0$,
\begin{equation*}
    d_{\mathrm{KL}}(q, \pi_0)\leq n\nu,
\end{equation*}
where the above step follows by $\|\sbf\|_2^2=o(n)$. Therefore,
\begin{equation}\label{eq:prop7.16_3}
    P_0(d_{\mathrm{KL}}(q, \pi_0)> n\nu)=0.
\end{equation}
\textbf{Step 1 (b)}: Next, note that 
\begin{equation*}
    \begin{aligned}
        d_{\mathrm{KL}}(\ell_0,\ell_{G_{\etabf}(\zbf)})&=\int _{\xbf\in[0,1]^p}\sigma(f_0(\xbf))\log \left\{\frac{\sigma(f_0(\xbf))}{\sigma(f_{G_{\etabf}(\zbf)}(\xbf))}\right\} d\xbf\\
        &+\int _{\xbf\in[0,1]^p}\{1-\sigma(f_0(\xbf))\}\log \left\{\frac{1-\sigma(f_0(\xbf))}{1-\sigma(f_{G_{\etabf}(\zbf)}(\xbf))}\right\}d\xbf\\
        &=\int _{\xbf\in[0,1]^p}\left[\sigma(f_0(\xbf))\{f_0(\xbf)-f_{G_{\etabf}(\zbf)}(\xbf)\}+\log \left\{\frac{1-\sigma(f_0(\xbf))}{1-\sigma(f_{G_{\etabf}(\zbf)}(\xbf))}\right\}\right]d\xbf.
    \end{aligned}
\end{equation*}
Since $\|f_{\Upsilonbf}-f_0\|_{\infty}\leq \varepsilon/4$, using Proposition \ref{proposition:prop7.13} with $\epsilon_n=1$ and $\varepsilon=\varepsilon$,
\begin{equation*}
    \int d_{\mathrm{KL}}(\ell_0,\ell_{G_{\etabf}(\zbf)}) q(\zbf)d\zbf \leq \varepsilon,
\end{equation*}
where the above step follows since $\|\Upsilonbf\|_2^2=\|\Upsilonbf_N\|^2_2$ is bounded, which implies $\|\Upsilonbf\|_2^2=o(n)$. 
\\Therefore, by Lemma 7.4 in \cite{bhattacharya2020variational},
\begin{equation}\label{eq:prop7.16_4}
    P_0\left(\left|\int\log \left\{\frac{L(\mathcal{D}_n; f_{G_{\etabf}(\zbf)})}{L_0}\right\}q(\zbf)d\zbf\right|>n\nu\right)\leq\frac{\varepsilon}{\nu}.
\end{equation}
\textbf{Step 1 (c)}: Since $\|f_{\Upsilonbf}-f_0\|_{\infty}\leq \varepsilon/4$, therefore, using Proposition \ref{proposition:prop7.12} with $\epsilon_n=1$ and $\nu=\varepsilon$, we get
\begin{equation*}
    \int_{\zbf\in\mathcal{L}_{\varepsilon}}\pi_0(\zbf)d\zbf\geq \exp(-n\varepsilon),
\end{equation*}
where the above step follows $\|\Upsilonbf\|_2^2=\|\Upsilonbf_N\|^2_2$ is bounded which implies $\|\Upsilonbf\|_2^2=o(n)$ and $\|\sbf\|_2^2=o(n)$.
\\Therefore, using Lemma 7.5 in \cite{bhattacharya2020variational}, we get
\begin{equation}\label{eq:prop7.16_5}
    P_0\left(\left|\log \left\{\int \frac{L(\mathcal{D}_n; f_{G_{\etabf}(\zbf)})}{L_0}\pi_0(\zbf)d\zbf\right\}\right|>n\nu\right)\leq \frac{2\varepsilon}{\nu}.
\end{equation}
\textbf{Step 1 (d)}: From \eqref{eq:prop7.16_1} and \eqref{eq:prop7.16_2}, we get
\begin{equation*}
    \begin{aligned}
        P_0(d_{\mathrm{KL}}(q^{*},p_{\etabf}(\cdot\mid\mathcal{D}_n))>3n\nu)&\leq P_0(d_{\mathrm{KL}}(q,p_{\etabf}(\cdot\mid\mathcal{D}_n))>n\nu)\\
        &+P_0\left(\left|\int\log \left\{\frac{L(\mathcal{D}_n; f_{G_{\etabf}(\zbf)})}{L_0}\right\}q(\zbf)d\zbf\right|>n\nu\right)\\
        &+P_0\left(\left|\log\left\{ \int \frac{L(\mathcal{D}_n; f_{G_{\etabf}(\zbf)})}{L_0}\pi_0(\zbf)d\zbf\right\}\right|>n\nu\right)\\
        &\leq \frac{2\varepsilon}{\nu},
    \end{aligned}
\end{equation*}
where the last inequality is a consequence of \eqref{eq:prop7.16_3}, \eqref{eq:prop7.16_4} and \eqref{eq:prop7.16_5}.
Since $\varepsilon$ is arbitrary, taking $\varepsilon\rightarrow0$ completes the proof.
%%%%%%%%%%%%%%%%%%%%%%%%%%part2%%%%%%%%%5%%%%%%%%%%%%
\\\textit{Proof of part 2}. Note that $r_n\log (n)=o(n\epsilon_n^2)$, $D_n\log (n)=o(n\epsilon_n^2)$, $\|\mubf\|^2_2=o(n\epsilon_n^2)$, $\|\zetabf\|_{\infty}=O(n)$ and $\|\zetabf^{*}\|_{\infty}=O(1)$. We take $q(\zbf)=N(\sbf, \mathbf{I}_{r_n}/\sqrt{n})$ where $\sbf$ is defined next.
\\For $N\geq 1$, let $f_{\Upsilonbf_N}$ be a neural network that satisfies $\|f_{\Upsilonbf_N}-f_0\|_{\infty}\leq\varepsilon\epsilon_n^2/4$. The existence of such a neural network is always guaranteed by \cite{hornik1989multilayer}. Define $\Upsilonbf$ as
\begin{equation*}
    a_j^{\Upsilonbf}
    =\left\{
    \begin{array}{ll}
        a_j^{\Upsilonbf_N}, &\quad j=1,\ldots,d_N\\
        0, &\quad j=d_N+1,\ldots,d
    \end{array}
    \right.
    \qquad 
    \omegabf_j^{\Upsilonbf}
    =\left\{
    \begin{array}{ll}
        \omegabf_j^{\Upsilonbf_N}, &\quad j=1,\ldots,d_N\\
        0, &\quad j=d_N+1,\ldots,d,
    \end{array}
    \right.
\end{equation*}
and let $d_N=r_n/2$, we could rewrite $\Upsilonbf$ as 
\begin{equation*}
    \Upsilonbf_{k}
    =\left\{
    \begin{array}{ll}
        (G_{\etabf}(\sbf))_k, &\quad k=1,\ldots,r_n\\
        0, &\quad k=r_n+1,\ldots,D_n,
    \end{array}
    \right.
\end{equation*}
where $(G_{\etabf}(\sbf))_k$ is the $k$th element of the vector $\Upsilonbf=G_{\etabf}(\sbf)$. This choice guarantees $\|f_{G_{\etabf}(\sbf)}-f_0\|_{\infty}=\|f_{\Upsilonbf}-f_0\|_{\infty}\leq \varepsilon\epsilon_n^2/4$.
\\\textbf{Step 2 (a)}: Since $\|\Upsilonbf\|_2^2=\|\Upsilonbf_N\|_2^2$ is bounded, this implies $\|\Upsilonbf\|_2^2=o(n\epsilon_n^2)$.
\\By the expression of $\Upsilonbf=G_{\etabf}(\sbf)$, $G_{\etabf}(\cdot)$ is invertible at $\sbf$ and thus $\sbf_n=h(\Upsilonbf)$ where $G_{\etabf}(h(\Upsilonbf))=\Upsilonbf$.
\\Denote $\Tilde{h}(\cdot)=h(\cdot)-h(\0bf)$. By the Taylor expansion,
\begin{equation*}
\begin{aligned}
    \sbf&\approx h(\0bf)+(\Upsilonbf-\0bf)^{\T}\nabla h(\0bf)\\
    &=h(\0bf)+\Upsilonbf^{\T}\{\nabla G_{\etabf}(\sbf_{0})\}^{-1},
\end{aligned}
\end{equation*}
where $\sbf_{0}=h(\0bf)$ the last equation follows by the inverse function theorem.
\\Therefore,
\begin{equation*}
    \|\sbf\|_2^2\leq \|\sbf_{0}\|_2^2+\|\Upsilonbf\|_2^2\|\{\nabla G_{\etabf}(\sbf_{0})\}^{-1}\|_F^2,
\end{equation*}
which implies $\|\sbf\|_2^2=o(n\epsilon_n^2)$ since $\|\Upsilonbf\|_2^2=o(n\epsilon_n^2)$ and $\|\{\nabla G_{\etabf}(\sbf_{0})\}^{-1}\|_F^2=O(1)$.
\\Using Proposition \ref{proposition:prop7.11}, we get for any $\nu>0$,
\begin{equation*}
    d_{\mathrm{KL}}(q, \pi_0)\leq n\epsilon_n^2\nu,
\end{equation*}
where the above step follows by $\|\sbf\|_2^2=o(n\epsilon_n^2)$. Therefore,
\begin{equation}\label{eq:prop7.16_6}
    P_0(d_{\mathrm{KL}}(q, \pi_0)> n\epsilon_n^2\nu)=0.
\end{equation}
\textbf{Step 2 (b)}:
Since $\|f_{\Upsilonbf}-f_0\|_{\infty}\leq \varepsilon\epsilon_n^2/4$ and $\|\Upsilonbf\|_2^2=o(n\epsilon_n^2)$, by Proposition \ref{proposition:prop7.13},
\begin{equation*}
    \int d_{\mathrm{KL}}(\ell_0,\ell_{G_{\etabf}(\zbf)}) q(\zbf)d\zbf \leq \varepsilon\epsilon_n^2,
\end{equation*}
Therefore, by Lemma 7.4 in \cite{bhattacharya2020variational},
\begin{equation}\label{eq:prop7.16_7}
    P_0\left(\left|\int\log\left\{ \frac{L(\mathcal{D}_n; f_{G_{\etabf}(\zbf)})}{L_0}\right\}q(\zbf)d\zbf\right|>n\epsilon_n^2\nu\right)\leq\frac{\varepsilon}{\nu}.
\end{equation}
\textbf{Step 2 (c)}: Since $\|f_{\Upsilonbf}-f_0\|_{\infty}\leq \varepsilon\epsilon_n^2/4$, $\|\Upsilonbf\|_2^2=o(n\epsilon_n^2)$ and $\|\sbf\|_2^2=o(n\epsilon_n^2)$, by Proposition \ref{proposition:prop7.12},
\begin{equation*}
    \int_{\zbf\in\mathcal{L}_{\varepsilon}}\pi_0(\zbf)d\zbf\geq \exp(-\varepsilon n\epsilon_n^2),
\end{equation*}
Therefore, using Lemma 7.5 in \cite{bhattacharya2020variational}, we get
\begin{equation}\label{eq:prop7.16_8}
    P_0\left(\left|\log\left\{ \int \frac{L(\mathcal{D}_n; f_{G_{\etabf}(\zbf)})}{L_0}\pi_0(\zbf)d\zbf\right\}\right|>n\epsilon_n^2\nu\right)\leq \frac{2\varepsilon}{\nu}.
\end{equation}
\textbf{Step 2 (d)}: From \eqref{eq:prop7.16_1} and \eqref{eq:prop7.16_2}, we get
\begin{equation*}
    \begin{aligned}
        P_0(d_{\mathrm{KL}}(q^{*},p_{\etabf}(\cdot\mid\mathcal{D}_n))>3n\epsilon_n^2\nu)&\leq P_0(d_{\mathrm{KL}}(q,p_{\etabf}(\cdot\mid\mathcal{D}_n))>n\epsilon_n^2\nu)\\
        &+P_0\left(\left|\int\log \left\{\frac{L(\mathcal{D}_n; f_{G_{\etabf}(\zbf)})}{L_0}\right\}q(\zbf)d\zbf\right|>n\epsilon_n^2\nu\right)\\
        &+P_0\left(\left|\log\left\{ \int \frac{L(\mathcal{D}_n; f_{G_{\etabf}(\zbf)})}{L_0}\pi_0(\zbf)d\zbf\right\}\right|>n\epsilon_n^2\nu\right)\\
        &\leq \frac{2\varepsilon}{\nu},
    \end{aligned}
\end{equation*}
where the last inequality is a consequence of \eqref{eq:prop7.16_6}, \eqref{eq:prop7.16_7} and \eqref{eq:prop7.16_8}.
\\Since $\varepsilon$ is arbitrary, taking $\varepsilon\rightarrow0$ completes the proof.
\end{proof}

\bibliographystyle{apalike}
\bibliography{ref}

\begin{thebibliography}{}

\bibitem[Atanov et~al., 2018]{atanov2018deep}
Atanov, A., Ashukha, A., Struminsky, K., Vetrov, D., and Welling, M. (2018).
\newblock The deep weight prior.
\newblock {\em arXiv preprint arXiv:1810.06943}.

\bibitem[Atchadé, 2011]{yves11}
Atchadé, Y. (2011).
\newblock A computational framework for empirical bayes inference.
\newblock {\em Statistics and Computing}, 21:463--473.

\bibitem[Bai et~al., 2020]{bai2020efficient}
Bai, J., Song, Q., and Cheng, G. (2020).
\newblock Efficient variational inference for sparse deep learning with theoretical guarantee.
\newblock {\em Advances in Neural Information Processing Systems}, 33:466--476.

\bibitem[Basu et~al., 2017]{basu2017learning}
Basu, S., Karki, M., Ganguly, S., DiBiano, R., Mukhopadhyay, S., Gayaka, S., Kannan, R., and Nemani, R. (2017).
\newblock Learning sparse feature representations using probabilistic quadtrees and deep belief nets.
\newblock {\em Neural Processing Letters}, 45:855--867.

\bibitem[Bernardo and Smith, 2009]{bernardo2009bayesian}
Bernardo, J.~M. and Smith, A.~F. (2009).
\newblock {\em Bayesian theory}, volume 405.
\newblock John Wiley \& Sons.

\bibitem[Bhattacharya et~al., 2020]{bhattacharya2020variational}
Bhattacharya, S., Liu, Z., and Maiti, T. (2020).
\newblock Variational bayes neural network: Posterior consistency, classification accuracy and computational challenges.
\newblock {\em arXiv preprint arXiv:2011.09592}.

\bibitem[Bhattacharya and Maiti, 2021]{Bhattacharya2021StatisticalFO}
Bhattacharya, S. and Maiti, T. (2021).
\newblock Statistical foundation of variational bayes neural networks.
\newblock {\em Neural networks : the official journal of the International Neural Network Society}, 137:151--173.

\bibitem[Bishop, 1997]{bishop1997bayesian}
Bishop, C.~M. (1997).
\newblock Bayesian neural networks.
\newblock {\em Journal of the Brazilian Computer Society}, 4:61--68.

\bibitem[Blei and Lafferty, 2007]{blei2007correlated}
Blei, D.~M. and Lafferty, J.~D. (2007).
\newblock A correlated topic model of science.
\newblock {\em The annals of applied statistics}, 1(1):17--35.

\bibitem[Blundell et~al., 2015]{blundell2015weight}
Blundell, C., Cornebise, J., Kavukcuoglu, K., and Wierstra, D. (2015).
\newblock Weight uncertainty in neural network.
\newblock In {\em International conference on machine learning}, pages 1613--1622. PMLR.

\bibitem[Carlin and Louis, 2008]{carlin2008bayesian}
Carlin, B.~P. and Louis, T.~A. (2008).
\newblock {\em Bayesian methods for data analysis}.
\newblock CRC Press.

\bibitem[Chen et~al., 2022]{chen2022inferential}
Chen, Y., Gao, Q., and Wang, X. (2022).
\newblock Inferential wasserstein generative adversarial networks.
\newblock {\em Journal of the Royal Statistical Society Series B: Statistical Methodology}, 84(1):83--113.

\bibitem[Ching et~al., 2018]{ching2018opportunities}
Ching, T., Himmelstein, D.~S., Beaulieu-Jones, B.~K., Kalinin, A.~A., Do, B.~T., Way, G.~P., Ferrero, E., Agapow, P.-M., Zietz, M., Hoffman, M.~M., et~al. (2018).
\newblock Opportunities and obstacles for deep learning in biology and medicine.
\newblock {\em Journal of The Royal Society Interface}, 15(141):20170387.

\bibitem[Dua and Graff, 2017]{Dua:2019}
Dua, D. and Graff, C. (2017).
\newblock {UCI} machine learning repository.

\bibitem[Dusenberry et~al., 2020]{Dusenberry20}
Dusenberry, M.~W., Jerfel, G., Wen, Y., Ma, Y.-A., Snoek, J., Heller, K., Lakshminarayanan, B., and Tran, D. (2020).
\newblock Efficient and scalable bayesian neural nets with rank-1 factors.
\newblock In {\em Proceedings of the 37th International Conference on Machine Learning}, ICML'20. JMLR.org.

\bibitem[Efron, 2012]{efron2012large}
Efron, B. (2012).
\newblock {\em Large-scale inference: empirical Bayes methods for estimation, testing, and prediction}, volume~1.
\newblock Cambridge University Press.

\bibitem[Efron and Morris, 1973]{efron1973stein}
Efron, B. and Morris, C. (1973).
\newblock Stein's estimation rule and its competitors—an empirical bayes approach.
\newblock {\em Journal of the American Statistical Association}, 68(341):117--130.

\bibitem[Efron et~al., 2001]{efron2001empirical}
Efron, B., Tibshirani, R., Storey, J.~D., and Tusher, V. (2001).
\newblock Empirical bayes analysis of a microarray experiment.
\newblock {\em Journal of the American statistical association}, 96(456):1151--1160.

\bibitem[Gal and Ghahramani, 2016]{gal2016dropout}
Gal, Y. and Ghahramani, Z. (2016).
\newblock Dropout as a bayesian approximation: Representing model uncertainty in deep learning.
\newblock In {\em international conference on machine learning}, pages 1050--1059. PMLR.

\bibitem[Ghosh et~al., 2019]{ghosh2019model}
Ghosh, S., Yao, J., and Doshi-Velez, F. (2019).
\newblock Model selection in bayesian neural networks via horseshoe priors.
\newblock {\em J. Mach. Learn. Res.}, 20(182):1--46.

\bibitem[Glorot et~al., 2011]{glorot2011deep}
Glorot, X., Bordes, A., and Bengio, Y. (2011).
\newblock Deep sparse rectifier neural networks.
\newblock In {\em Proceedings of the Fourteenth International Conference on Artificial Intelligence and Statistics}, pages 315--323.

\bibitem[Goodfellow et~al., 2014]{goodfellow2014generative}
Goodfellow, I., Pouget-Abadie, J., Mirza, M., Xu, B., Warde-Farley, D., Ozair, S., Courville, A., and Bengio, Y. (2014).
\newblock Generative adversarial nets.
\newblock {\em Advances in neural information processing systems}, 27.

\bibitem[Graves, 2011]{graves2011practical}
Graves, A. (2011).
\newblock Practical variational inference for neural networks.
\newblock {\em Advances in neural information processing systems}, 24.

\bibitem[Han et~al., 2022]{han2022card}
Han, X., Zheng, H., and Zhou, M. (2022).
\newblock Card: Classification and regression diffusion models.
\newblock {\em arXiv preprint arXiv:2206.07275}.

\bibitem[Hastie et~al., 2009]{hastie2009elements}
Hastie, T., Tibshirani, R., Friedman, J.~H., and Friedman, J.~H. (2009).
\newblock {\em The elements of statistical learning: data mining, inference, and prediction}, volume~2.
\newblock Springer.

\bibitem[He et~al., 2016]{he2016deep}
He, K., Zhang, X., Ren, S., and Sun, J. (2016).
\newblock Deep residual learning for image recognition.
\newblock In {\em Proceedings of the IEEE conference on computer vision and pattern recognition}, pages 770--778.

\bibitem[Hern{\'a}ndez-Lobato and Adams, 2015]{hernandez2015probabilistic}
Hern{\'a}ndez-Lobato, J.~M. and Adams, R. (2015).
\newblock Probabilistic backpropagation for scalable learning of bayesian neural networks.
\newblock In {\em International conference on machine learning}, pages 1861--1869. PMLR.

\bibitem[Hern\'{a}ndez-Lobato and Adams, 2015]{hern15}
Hern\'{a}ndez-Lobato, J.~M. and Adams, R.~P. (2015).
\newblock Probabilistic backpropagation for scalable learning of bayesian neural networks.
\newblock In {\em Proceedings of the 32nd International Conference on International Conference on Machine Learning - Volume 37}, ICML'15, page 1861–1869. JMLR.org.

\bibitem[Hinton and Van~Camp, 1993]{hinton1993keeping}
Hinton, G.~E. and Van~Camp, D. (1993).
\newblock Keeping the neural networks simple by minimizing the description length of the weights.
\newblock In {\em Proceedings of the sixth annual conference on Computational learning theory}, pages 5--13.

\bibitem[Hoffman et~al., 2019]{hoffman2019robust}
Hoffman, J., Roberts, D.~A., and Yaida, S. (2019).
\newblock Robust learning with jacobian regularization.
\newblock {\em arXiv preprint arXiv:1908.02729}.

\bibitem[Hornik et~al., 1989a]{hornik89}
Hornik, K., Stinchcombe, M., and White, H. (1989a).
\newblock Multilayer feedforward networks are universal approximators.
\newblock {\em Neural Network}, 2(5):359–366.

\bibitem[Hornik et~al., 1989b]{hornik1989multilayer}
Hornik, K., Stinchcombe, M., and White, H. (1989b).
\newblock Multilayer feedforward networks are universal approximators.
\newblock {\em Neural networks}, 2(5):359--366.

\bibitem[Hubin et~al., 2018]{hubin2018deep}
Hubin, A., Storvik, G., and Frommlet, F. (2018).
\newblock Deep bayesian regression models.
\newblock {\em arXiv preprint arXiv:1806.02160}.

\bibitem[Immer et~al., 2021]{immer2021scalable}
Immer, A., Bauer, M., Fortuin, V., R{\"a}tsch, G., and Emtiyaz, K.~M. (2021).
\newblock Scalable marginal likelihood estimation for model selection in deep learning.
\newblock In {\em International Conference on Machine Learning}, pages 4563--4573. PMLR.

\bibitem[Izmailov et~al., 2021]{izmailov2021bayesian}
Izmailov, P., Vikram, S., Hoffman, M.~D., and Wilson, A. G.~G. (2021).
\newblock What are bayesian neural network posteriors really like?
\newblock In {\em International conference on machine learning}, pages 4629--4640. PMLR.

\bibitem[Javid et~al., 2020]{javid2020compromise}
Javid, K., Handley, W., Hobson, M., and Lasenby, A. (2020).
\newblock Compromise-free bayesian neural networks.
\newblock {\em arXiv preprint arXiv:2004.12211}.

\bibitem[Kamnitsas et~al., 2017]{kamnitsas2017efficient}
Kamnitsas, K., Ledig, C., Newcombe, V.~F., Simpson, J.~P., Kane, A.~D., Menon, D.~K., Rueckert, D., and Glocker, B. (2017).
\newblock Efficient multi-scale 3d cnn with fully connected crf for accurate brain lesion segmentation.
\newblock {\em Medical image analysis}, 36:61--78.

\bibitem[Kingma and Welling, 2013]{kingma2013auto}
Kingma, D.~P. and Welling, M. (2013).
\newblock Auto-encoding variational bayes.
\newblock {\em arXiv preprint arXiv:1312.6114}.

\bibitem[Krizhevsky et~al., 2017]{krizhevsky2017imagenet}
Krizhevsky, A., Sutskever, I., and Hinton, G.~E. (2017).
\newblock Imagenet classification with deep convolutional neural networks.
\newblock {\em Communications of the ACM}, 60(6):84--90.

\bibitem[Lakshminarayanan et~al., 2017]{lakshminarayanan2017simple}
Lakshminarayanan, B., Pritzel, A., and Blundell, C. (2017).
\newblock Simple and scalable predictive uncertainty estimation using deep ensembles.
\newblock {\em Advances in neural information processing systems}, 30.

\bibitem[Lampinen and Vehtari, 2001]{lampinen2001bayesian}
Lampinen, J. and Vehtari, A. (2001).
\newblock Bayesian approach for neural networks—review and case studies.
\newblock {\em Neural networks}, 14(3):257--274.

\bibitem[LeCun et~al., 1998]{lecun1998gradient}
LeCun, Y., Bottou, L., Bengio, Y., and Haffner, P. (1998).
\newblock Gradient-based learning applied to document recognition.
\newblock {\em Proceedings of the IEEE}, 86(11):2278--2324.

\bibitem[Leibig et~al., 2017]{leibig2017leveraging}
Leibig, C., Allken, V., Ayhan, M.~S., Berens, P., and Wahl, S. (2017).
\newblock Leveraging uncertainty information from deep neural networks for disease detection.
\newblock {\em Scientific reports}, 7(1):17816.

\bibitem[Louizos et~al., 2017]{Louizos17}
Louizos, C., Ullrich, K., and Welling, M. (2017).
\newblock Bayesian compression for deep learning.
\newblock In {\em Proceedings of the 31st International Conference on Neural Information Processing Systems}, NIPS'17, page 3290–3300, Red Hook, NY, USA. Curran Associates Inc.

\bibitem[MacKay, 1995]{mackay1995probable}
MacKay, D.~J. (1995).
\newblock Probable networks and plausible predictions-a review of practical bayesian methods for supervised neural networks.
\newblock {\em Network: computation in neural systems}, 6(3):469.

\bibitem[Molchanov et~al., 2017]{molchanov2017variational}
Molchanov, D., Ashukha, A., and Vetrov, D. (2017).
\newblock Variational dropout sparsifies deep neural networks.
\newblock In {\em International Conference on Machine Learning}, pages 2498--2507. PMLR.

\bibitem[Mullachery et~al., 2018]{mullachery2018bayesian}
Mullachery, V., Khera, A., and Husain, A. (2018).
\newblock Bayesian neural networks.
\newblock {\em arXiv preprint arXiv:1801.07710}.

\bibitem[Neal, 1992]{neal1992bayesian}
Neal, R.~M. (1992).
\newblock Bayesian training of backpropagation networks by the hybrid monte carlo method.
\newblock Technical report, Citeseer.

\bibitem[Neal, 1996]{neal96}
Neal, R.~M. (1996).
\newblock {\em Bayesian Learning for Neural Networks}.
\newblock Springer-Verlag New York, Inc.

\bibitem[Paszke et~al., 2019]{paszke2019pytorch}
Paszke, A., Gross, S., Massa, F., Lerer, A., Bradbury, J., Chanan, G., Killeen, T., Lin, Z., Gimelshein, N., Antiga, L., et~al. (2019).
\newblock Pytorch: An imperative style, high-performance deep learning library.
\newblock {\em Advances in neural information processing systems}, 32.

\bibitem[Pollard, 1990]{pollard1990empirical}
Pollard, D. (1990).
\newblock Empirical processes: Theory and applications.
\newblock In {\em NSF-CBMS Regional Conference Series in Probability and Statistics}, pages i--86. JSTOR.

\bibitem[Quinonero-Candela et~al., 2005]{quinonero2005evaluating}
Quinonero-Candela, J., Rasmussen, C.~E., Sinz, F., Bousquet, O., and Sch{\"o}lkopf, B. (2005).
\newblock Evaluating predictive uncertainty challenge.
\newblock In {\em Machine Learning Challenges Workshop}, pages 1--27. Springer.

\bibitem[Ranganath et~al., 2014]{ranganath2014black}
Ranganath, R., Gerrish, S., and Blei, D. (2014).
\newblock Black box variational inference.
\newblock In {\em Artificial intelligence and statistics}, pages 814--822. PMLR.

\bibitem[Robbins, 1992]{robbins1992empirical}
Robbins, H.~E. (1992).
\newblock An empirical bayes approach to statistics.
\newblock In {\em Breakthroughs in statistics}, pages 388--394. Springer.

\bibitem[Rumelhart et~al., 1986]{rumelhart1986learning}
Rumelhart, D.~E., Hinton, G.~E., and Williams, R.~J. (1986).
\newblock Learning representations by back-propagating errors.
\newblock {\em nature}, 323(6088):533--536.

\bibitem[Simard et~al., 2003]{simard2003best}
Simard, P.~Y., Steinkraus, D., Platt, J.~C., et~al. (2003).
\newblock Best practices for convolutional neural networks applied to visual document analysis.
\newblock In {\em Icdar}, volume~3. Edinburgh.

\bibitem[Springenberg et~al., 2016]{spring16}
Springenberg, J.~T., Klein, A., Falkner, A., and Hutter, F. (2016).
\newblock Bayesian optimization with robust bayesian neural networks.
\newblock In {\em NeurIPS}.

\bibitem[Sun et~al., 2017]{sun2017learning}
Sun, S., Chen, C., and Carin, L. (2017).
\newblock Learning structured weight uncertainty in bayesian neural networks.
\newblock In {\em Artificial Intelligence and Statistics}, pages 1283--1292. PMLR.

\bibitem[Tomczak et~al., 2021]{tomczak2021collapsed}
Tomczak, M., Swaroop, S., Foong, A., and Turner, R. (2021).
\newblock Collapsed variational bounds for bayesian neural networks.
\newblock {\em Advances in Neural Information Processing Systems}, 34:25412--25426.

\bibitem[Vaart and Wellner, 1996]{vaart1996weak}
Vaart, A.~W. and Wellner, J.~A. (1996).
\newblock Weak convergence.
\newblock In {\em Weak convergence and empirical processes}, pages 16--28. Springer.

\bibitem[Welling and Teh, 2011]{welling11}
Welling, M. and Teh, Y.~W. (2011).
\newblock Bayesian learning via stochastic gradient langevin dynamics.
\newblock In {\em ICML}.

\bibitem[Wenzel et~al., 2020]{wenzel2020good}
Wenzel, F., Roth, K., Veeling, B.~S., Swiatkowski, J., Tran, L., Mandt, S., Snoek, J., Salimans, T., Jenatton, R., and Nowozin, S. (2020).
\newblock How good is the bayes posterior in deep neural networks really?
\newblock {\em arXiv preprint arXiv:2002.02405}.

\bibitem[Wilson and Izmailov, 2020]{wilson2020bayesian}
Wilson, A.~G. and Izmailov, P. (2020).
\newblock Bayesian deep learning and a probabilistic perspective of generalization.
\newblock {\em Advances in neural information processing systems}, 33:4697--4708.

\bibitem[Wong and Shen, 1995]{wong1995probability}
Wong, W.~H. and Shen, X. (1995).
\newblock Probability inequalities for likelihood ratios and convergence rates of sieve mles.
\newblock {\em The Annals of Statistics}, pages 339--362.

\bibitem[Worrall et~al., 2016]{worrall2016automated}
Worrall, D.~E., Wilson, C.~M., and Brostow, G.~J. (2016).
\newblock Automated retinopathy of prematurity case detection with convolutional neural networks.
\newblock In {\em International Workshop on Deep Learning in Medical Image Analysis}, pages 68--76. Springer.

\bibitem[Zhang et~al., 2018]{zhang2018noisy}
Zhang, G., Sun, S., Duvenaud, D., and Grosse, R. (2018).
\newblock Noisy natural gradient as variational inference.
\newblock In {\em International Conference on Machine Learning}, pages 5852--5861. PMLR.

\bibitem[Zhou et~al., 2022]{zhou2022deep}
Zhou, X., Jiao, Y., Liu, J., and Huang, J. (2022).
\newblock A deep generative approach to conditional sampling.
\newblock {\em Journal of the American Statistical Association}, pages 1--12.

\end{thebibliography}
\end{document}